\theoremstyle{plain}
\newtheorem{theorem}{Theorem}[section]
\newtheorem{proposition}[theorem]{Proposition}
\newtheorem{lemma}[theorem]{Lemma}
\theoremstyle{definition}
\newtheorem{definition}[theorem]{Definition}
\theoremstyle{remark}
\definecolor{xkcdLightGreen}{HTML}{15B01A}
\newcommand{\cB}{{\mathcal B}}
\newcommand{\cC}{{\mathcal C}}
\newcommand{\cE}{{\mathcal E}}
\newcommand{\cF}{{\mathcal F}}
\newcommand{\cG}{{\mathcal G}}
\newcommand{\cH}{{\mathcal H}}
\newcommand{\cL}{{\mathcal L}}
\newcommand{\cN}{{\mathcal N}}
\newcommand{\cO}{{\mathcal O}}
\newcommand{\cP}{{\mathcal P}}
\newcommand{\cR}{{\mathcal R}}
\newcommand{\cS}{{\mathcal S}}
\newcommand{\cU}{{\mathcal U}}
\newcommand{\cW}{{\mathcal W}}
\newcommand{\sX}{{\mathscr X}}
\newcommand{\sY}{{\mathscr Y}}
\newcommand{\vx}{{\boldsymbol x}}
\newcommand{\va}{{\boldsymbol a}}
\newcommand{\vb}{{\boldsymbol b}}
\newcommand{\ve}{{\boldsymbol e}}
\newcommand{\vf}{{\boldsymbol f}}
\newcommand{\vg}{{\boldsymbol g}}
\newcommand{\vv}{{\boldsymbol v}}
\newcommand{\vA}{{\boldsymbol A}}
\newcommand{\vI}{{\boldsymbol I}}
\newcommand{\vE}{{\boldsymbol E}}
\newcommand{\vG}{{\boldsymbol G}}
\newcommand{\vV}{{\boldsymbol V}}
\newcommand{\vP}{{\boldsymbol P}}
\newcommand{\vH}{{\boldsymbol H}}
\newcommand{\vh}{{\boldsymbol h}}
\newcommand{\vs}{{\boldsymbol s}}
\newcommand{\vone}{{\boldsymbol 1}}
\newcommand{\vzero}{{\boldsymbol 0}}
\let\E\undefined
\newcommand{\R}{\mathbb R}
\newcommand{\N}{\mathbb N}
\newcommand{\Z}{\mathbb Z}
\newcommand{\E}{\mathbb E}
\newcommand{\I}{\mathbb I}
\newcommand{\PR}{{\mathrm {Pr}}}
\DeclarePairedDelimiter{\bracket}{[}{]}
\DeclarePairedDelimiter{\curl}{\{}{\}}
\DeclarePairedDelimiter{\paren}{(}{)}
\let\abs\undefined
\let\norm\undefined
\DeclarePairedDelimiter{\abs}{\lvert}{\rvert}
\DeclarePairedDelimiter{\norm}{\lVert}{\rVert}
\DeclareMathOperator*{\argmax}{arg\,max}
\newcommand{\br}{\cN_{[]}}
\newcommand{\cov}{\cN}
\newcommand{\multi}{\mathrm{multi}}
\newcommand{\single}{\mathrm{single}}
\newcommand{\sfp}{\mathsf{p}}
\newcommand{\sfq}{\mathsf{q}}
\newcommand{\rtv}{\cR_{\overline{\mathrm{TV}}}}
\newcommand{\de}{d_{\mathrm{e}}}
\newcommand{\relu}{\mathrm{ReLU}}
\newcommand{\pmlp}{{\omega}}
\newcommand{\pnn}{{\theta}}
\newcommand{\meang}{\boldsymbol{\mu}}
\newcommand{\vparm}{\boldsymbol{\rho}}
\icmltitlerunning{A Theory for Conditional Generative Modeling on Multiple Data Sources}
\begin{document}

\twocolumn[
\icmltitle{A Theory for Conditional Generative Modeling on Multiple Data Sources}



\icmlsetsymbol{equal}{*}
\icmlsetsymbol{corre}{$\dagger$}

\begin{icmlauthorlist}
\icmlauthor{Rongzhen Wang}{ruc,bjkey,rec}
\icmlauthor{Yan Zhang}{sdu}
\icmlauthor{Chenyu Zheng}{ruc,bjkey,rec}
\icmlauthor{Chongxuan Li}{ruc,bjkey,rec,corre}
\icmlauthor{Guoqiang Wu}{sdu,corre}
\end{icmlauthorlist}

\icmlaffiliation{ruc}{Gaoling School of Artificial Intelligence, Renmin University of China, Beijing, China}
\icmlaffiliation{bjkey}{Beijing Key Laboratory of Research on Large Models and Intelligent Governance}
\icmlaffiliation{rec}{Engineering Research Center of Next-Generation Intelligent Search and Recommendation, MOE}
\icmlaffiliation{sdu}{School of Software, Shandong University, Shandong, China}

\icmlcorrespondingauthor{Chongxuan Li}{chongxuanli@ruc.edu.cn}
\icmlcorrespondingauthor{Guoqiang Wu}{guoqiangwu@sdu.edu.cn}

\icmlkeywords{multiple data sources, generative model, distribution estimation, maximum likelihood estimation}

\vskip 0.3in
]



\printAffiliationsAndNotice{}  

\newpage

\begin{abstract}

The success of large generative models has driven a paradigm shift, leveraging massive multi-source data to enhance model capabilities.
However, the interaction among these sources remains theoretically underexplored. 
This paper takes the first step toward a rigorous analysis of multi-source training in conditional generative modeling, where each condition represents a distinct data source. 
Specifically, we establish a general distribution estimation error bound in average total variation distance for conditional maximum likelihood estimation based on the bracketing number.
Our result shows that when source distributions share certain similarities and the model is expressive enough, multi-source training guarantees a sharper bound than single-source training.
We further instantiate the general theory on conditional Gaussian estimation and deep generative models including autoregressive and flexible energy-based models, by characterizing their bracketing numbers. 
The results highlight that the number of sources and similarity among source distributions improve the advantage of multi-source training. 
Simulations and real-world experiments are conducted to validate the theory, with code available at: \url{https://github.com/ML-GSAI/Multi-Source-GM}.

\end{abstract}

\section{Introduction}
\label{sec:intro}

Large generative models have achieved remarkable success in generating realistic and complex outputs across natural language~\cite{brown_2020_nips_gpt3,llamma_2023} and computer vision~\cite{rombach_2022_cvpr_ldm,openai_2024_sora}. A key factor behind their strong performance is the diverse and rich training data. For instance, large language models are trained on \textit{heterogeneous} datasets comprising web content, books, and code~\cite{brown_2020_nips_gpt3,modelbest_2024_minicpm}, while image generation models benefit from vast datasets spanning various categories and aesthetic qualities~\cite{peebles_2023_iccv_dit,chen_2024_iclr_pixart-alpha,esser_2024_icml_recflow}. Empirical evidence suggests that, under certain conditions, training on \textit{multiple data sources} can enhance performance across all sources~\cite{google_2019_multilingualbert,chen_2024_iclr_pixart-alpha,allen-zhuL_2024_icml_physicsofllm3-1}. Consequently, data mixture strategies have become an essential research topic~\cite{Nguyen_2022_nips_multidataset-clip,chidambaram_2022_iclr_mixup,modelbest_2024_minicpm}. 

However, the theoretical underpinnings of this multi-source training paradigm remain poorly understood. This raises a fundamental question: \emph{is it more effective to train separate models on individual data sources, or to train a single model using data from multiple sources?}  In this paper, we take the first step toward a rigorous analysis of multi-source training, focusing on its impact on conditional generative models, where each condition represents a distinct data source. 

Our first contribution is establishing a general upper bound on distribution estimation error for conditional generative modeling via maximum likelihood estimation (MLE) in \cref{sec:general_guarantee_for_multi}. Specifically, we measure the error using average total variation (TV) distance between the true and estimated conditional distributions across all sources, which scales as $\tilde{\cO}(\sqrt{{\log \cN_{\cP_{X\vert Y}}}/{n}})$, where $n$ is the training set size and $\cN_{\cP_{X\vert Y}}$ is the bracketing number of the conditional distribution space $\cP_{X\vert Y}$. Further, when source distributions exhibit parametric similarity, multi-source training effectively reduces the complexity of the distribution space, leading to a provably sharper bound than single-source training.

Technically, our analysis extends classical MLE estimation error bounds~\cite{ge_2024_iclr_unsupervised} from empirical process theory~\cite{wong_1995_aos_likelihood-ratios,geer_2000_book_empiricalprocess} to the conditional setting by adapting the complexity of the distribution space and measuring the estimation error in terms of average TV distance. Further discussions are provided in \cref{sec:related_works}.

As the second contribution, we instantiate our general theory in three specific cases: (1) parametric estimation of conditional Gaussian distributions, a simple example clearly illustrating how source distribution properties influence the benefits of multi-source training, (2) autoregressive models (ARMs), the foundation of large language models~\cite{brown_2020_nips_gpt3,llamma_2023,liu2024deepseek,bai2023qwen,zheng2024mesa}, and (3) energy-based models (EBMs), a general class of generative models for continuous data~\cite{lecun_2006_tutorial_ebm,du_2019_nips_ebm,song_2019_nips_gradient,zhao2022egsde}. 
For each model, we derive explicit estimation error bounds for both multi-source and single-source training by measuring the bracketing number of the corresponding conditional distribution space. 
Based on the theoretical results in these instantiations, we observe a common pattern: across all cases, the ratio of multi-source to single-source estimation error bounds takes the form $\sqrt{1 - ({K-1}/{K}) \beta_{\mathrm{sim}}}$, where $K$ is the number of sources and $\beta_{\mathrm{sim}} \in [0,1]$ is an inductively derived quantity that can be interpreted as similarity among source distributions, with model-specific definitions detailed in \cref{sec:instantiations}.
This ratio decreases with both $K$ and $\beta_{\mathrm{sim}}$, indicating that the number of sources and their similarity improve the benefits of multi-source training. 

A core technical contribution is establishing novel bracketing number bounds for ARMs and EBMs. 
This is challenging since on the one hand, the bracketing number provides a refined measure of function spaces, requiring carefully designed one-sided bounds over the entire domain.
On the other hand, the conditional distribution space of deep generative models is inherently complex, involving both neural network architectures and specific probabilistic characteristics for different generative modeling methods. Please refer to Appendixes~\ref{app:proof_arm} and \ref{app:proof_ebm} for detailed proofs and discussions.

Finally, we validate our theoretical findings through simulations and real-world experiments in \cref{sec:experiments}. 
In simulations, we perform conditional Gaussian estimation, where the MLE solutions can be analytically derived, enabling a rigorous assessment of the tightness of our bounds. 
The close match between the empirical and theoretical error orders supports the validity of our results.
Beyond simulations, we train class-condition diffusion models~\cite{karras2024analyzing} on ILSVRC2012~\cite{ILSVRC15}
where its semantic hierarchy~\cite{imagenet_statistics} provides a natural way to define inter-source distribution similarity.
Empirical results confirm that multi-source training outperforms single-source training by achieving lower FID scores, consistent with our theoretical guarantee in \cref{sec:general_guarantee_for_multi}, and this advantage depends on both the number of sources and their similarity, supporting our insights in \cref{sec:instantiations}.

\section{Problem formulation}
\label{sec:formulation_for_conditional_generative_modeling}

\paragraph{Elementary notations.} 
Scalars, vectors, and matrices are denoted by lowercase letters (e.g., $a$), lowercase boldface letters (e.g., $\va$), and uppercase boldface letters (e.g., $\vA$).
We use $\va[m]$ to denote the $m$-th entry of vector $\va$, and $\vA[m,:]$, $\vA[:,n]$, and $\vA[m,n]$ to denote the $m$-th row, the $n$-th column, and the entry at the $m$-th row and the $n$-th column of $\vA$.  
$(\va,\vb)$ denotes the concatenation of $\va$ and $\vb$ as a column vector. 
We denote $[n] \!\coloneqq\! \{1, \dots, n\}$ for any $n \in \N$ and $a \vee b$ as $\max\{a, b\}$.
For any measurable scalar function $ f(\vx) $ on domain $ \sX $ and real number $ 1 \leq \sfp \leq \infty $, its $ L^\sfp(\sX) $-norm is defined as  
$\norm{f(\vx)}_{L^\sfp(\sX)} \!\coloneqq\! \paren{\int_{\sX} \abs{f(\vx)}^{\sfp} \, d\vx}^{\frac{1}{\sfp}}.$  
When $ \sfp \!=\! \infty $, $ \norm{f(\vx)}_{L^\infty(\sX)} \!=\! \sup_{\vx \in \sX} \abs{f(\vx)} $. 
$\I(\cdot)$ denotes the indicator function.
Notation $a_n \!=\! \tilde{\cO}(b_n)$ indicates $a_n$ is asymptotically bounded above by $b_n$ up to logarithmic factors.

\subsection{Data from multiple sources}

Let $X$ denote the random variable for data (e.g., a natural image) in a data space $\sX$, and $Y$ denote the random variable for the source label in a label space $\sY$. 
Suppose there are $K$ data sources (e.g., $K$ categories of images), each corresponding to an unknown conditional distribution $p^*_{X\vert k}$ for $k\in[K]$.
We assume that $p^*_{X\vert k}$ is parameterized by a source-specific feature $\phi_k^*$ in a parameter space $\Phi$ and a shared feature $\psi^*$ in a parameter space $\Psi$, such that $p^*_{X\vert k}(x \vert k) = p_{\phi_k^*, \psi^*}(x \vert k)$. 
The conditional distribution of $X$ given $Y=y$ is consequently expressed as
\begin{align*}
    p^*_{X\vert Y}(\vx\vert y)
    =\prod_{k=1}^K\!\paren*{p_{\phi_k^*, \psi^*}(\vx\vert k)}^{\I(y=k)}.
\end{align*}
This compact representation provides convenience for subsequent discussions. 

We further assume the distribution of $Y$ is known since the proportion of data from different sources is often manually designed in practice~\cite{imagenet_cvpr09,krizhevsky2009learning-cifar,brown_2020_nips_gpt3,chen_2024_iclr_pixart-alpha}. The joint distribution of $X$ and $Y$ is then given by $p^*_{X,Y}(\vx,y)=p^*_{X \vert Y}(\vx\vert y)p^*_Y(y)$. 

\subsection{Conditional generative modeling}
\label{sec:formulation_conditional_gen_model}

Consider a dataset $S = \{(\vx_i, y_i)\}_{i=1}^n$ consisting of $n$ independent and identically distributed (i.i.d.) data-label pairs sampled from the joint distribution $p^*_{X,Y}$.
In the learning phase, a conditional generative model uses maximum likelihood estimation (MLE) to estimate $p^*_{X \vert Y}$ based on the dataset $S$, where the conditional likelihood is defined as 
\begin{align}
\label{eq:likelihood}
\cL_{S}(p_{X\vert Y})\coloneqq\prod_{i=1}^n p_{X\vert Y}(\vx_i\vert y_i).
\end{align}

\paragraph{Multi-source training.}
Under multi-source training, the conditional distribution space is given by $\cP_{X\vert Y}^{\multi}\coloneqq$
\begin{align*}
    \textstyle{\curl*{p_{X\vert Y}^{\multi}(\vx\vert y)\!=\!\prod_{k=1}^K\paren*{p_{\phi_k, \psi}(\vx\vert k)}^{\I(y=k)}\!:\! \phi_k\!\in\! \Phi, \!\psi\!\in\!\Psi},}
\end{align*}
and the corresponding estimator of $p^*_{X\vert Y}$ is 
\begin{align}
\label{eq:multi_mle_solution}
    \hat{p}_{X\vert Y}^{\multi}=\argmax_{p_{X\vert Y}^{\multi} \in \cP_{X\vert Y}^{\multi}}\cL_{S}(p_{X\vert Y}^{\multi}).
\end{align}

Here, we adopt the realizable assumption that true parameters satisfy $\phi_k^* \in \Phi$ and $\psi^* \in \Psi$ as in~\citet{ge_2024_iclr_unsupervised}, which allows the estimation error analysis to focus on the generalization property of the distribution space.

\paragraph{Single-source training.}
Under single-source training, we train $K$ conditional generative models for each source using data exclusively from the corresponding source.
For any particular source $k$, denoting $S_k\coloneqq\{(\vx_i, y_i)\in S\vert y_i=k\}=\{\vx_j^k,k\}_{j=1}^{n_k}$, the corresponding generative model estimate $p^*_{X\vert k}$ by maximizing the conditional likelihood on $S_k$ as
$$\hat{p}_{X\vert k}^{\single} = \argmax_{p_{X\vert k}^\single\in\cP_{X\vert k}^\single}\cL_{S_k}(p_{X\vert k}^\single),$$
where $\cL_{S_k}(p_{X\vert k})\coloneqq\prod_{j=1}^{n_k} p_{X\vert k}(\vx_j^k \vert k)$ and $\cP_{X\vert k}^\single\coloneqq\curl*{p_{\phi_k, \psi_k}(\vx\vert k):\phi_k\in\Phi,\psi_k\in\Psi}$.

Separately maximizing these $K$ objectives is equivalent to finding the maximizer of $L_S$ in conditional distribution space $\cP_{X\vert Y}^{\single}\coloneqq$
\begin{align*}
    \textstyle{\curl*{p_{X\vert Y}^{\single}(\vx\vert y)\!=\!\!\prod_{k=1}^K\paren*{p_{\phi_k, \psi_k}(\vx\vert k)}^{\I(y=k)}\!:\! \phi_k\!\in\! \Phi, \psi_k\!\!\in\!\Psi\!}.}
\end{align*}
Therefore, the estimator of $p^*_{X\vert Y}$ under single-source training is 
\begin{align}
\label{eq:single_mle_solution}
    \hat{p}_{X\vert Y}^{\single}=\argmax_{p_{X\vert Y}^{\single} \in \cP_{X\vert Y}^{\single}}\cL_{S}(p_{X\vert Y}^{\single}).
\end{align}
 
The introduced multi-source setting abstracts practical scenarios where different sources share certain underlying data structures (via $\psi$) while retaining unique characteristics (via $\phi_k$). 
At the same time, the single-source setting provides a controlled comparison to rigorously assess whether incorporating other sources improves the model's learning.

\paragraph{Evaluation for conditional distribution estimation.}

We measure the accuracy of conditional distribution estimation by average TV distance between the estimated and true conditional distributions, referred to as \emph{average TV error}:
\begin{align}
\label{eq:expected_TV_distance}
\rtv(\hat{p}_{X\vert Y})\coloneqq\E_Y\bracket*{\mathrm{TV}(\hat{p}_{X\vert Y}, p^*_{X\vert Y})},
\end{align}
where 
$\mathrm{TV}(\hat{p}_{X\vert Y}, p^*_{X\vert Y})\!=\!\frac{1}{2}\!\int_{\sX}\!\vert\hat{p}_{X\vert Y}(\vx\vert y)\!-\!p^*_{X\vert Y}(\vx\vert y)\vert d\vx$ is the total variation distance between $\hat{p}_{X\vert Y}$ and $p^*_{X\vert Y}$. 

In the following sections, we investigate the effectiveness of multi-source training by measuring and comparing $\rtv(\hat{p}^{\multi}_{X\vert Y})$ and $\rtv(\hat{p}^{\single}_{X\vert Y})$.

\section{Provable advantage of multi-source training}
\label{sec:general_guarantee_for_multi}

In this section, we establish a general upper bound on the average TV error for conditional MLE and provide a statistical guarantee for the benefits of multi-source training.  
Our analysis extends the classical MLE guarantees~\cite{geer_2000_book_empiricalprocess,ge_2024_iclr_unsupervised}, which leverage the bracketing number and the uniform law of large numbers.  

\subsection{Complexity of the conditional distribution space}
We begin by introducing an extended notion of the bracketing number as follows.

\begin{definition}[$\epsilon$-upper bracketing number for conditional distribution space]
\label{def:upper_bracketing_number}
    Let $\epsilon$ be a real number that $\epsilon>0$ and $\sfp$ be an integer that $1\leq \sfp\leq\infty$.
    An $\epsilon$-upper bracket of a conditional distribution space $\cP_{X\vert Y}$ with respect to $L^\sfp(\sX)$ is a finite function set $\cB$ such that for any $p_{X\vert Y}\in \cP_{X\vert Y}$, there exists some $p^\prime\in \cB$ such that given any $y \in \sY$, it holds 
    \begin{align*}
        &\forall \vx\in \sX: p^\prime(\vx,y)\geq p_{X\vert Y}(\vx\vert y), \text{ and }\\
        &\norm{p^\prime(\cdot,y)-p_{X\vert Y}(\cdot\vert y)}_{L^\sfp(\sX)}\leq \epsilon.
    \end{align*}
    The $\epsilon$-upper bracketing number $\br\paren*{\epsilon;\cP_{X\vert Y},L^\sfp(\sX)}$ is the cardinality of the smallest $\epsilon$-upper bracket. 
\end{definition}

This notion quantifies the minimal set of functions needed to upper bound every conditional distribution within a small margin, reducing error analysis from an infinite to a finite function class. 
Unlike traditional bracketing numbers for unconditional distributions $p_X$ using two-sided brackets~\cite{wellner_2002_empirical}, this extension employs one-sided upper brackets~\cite{ge_2024_iclr_unsupervised} and requires uniform coverage across $y$ for all conditional distributions.
We provide a concrete example and corresponding visualization in \cref{app:upper_brack} to make this notion more accessible.

\subsection{Guarantee for conditional MLE}

We now present a general error bound that applies to both training strategies.

\begin{theorem}[Average TV error bound for conditional MLE, proof in \cref{app:proof_thm:TV_upper_of_conditional_MLE}]
\label{thm:TV_upper_of_conditional_MLE}
Given a dataset $S$ of size $n$ that i.i.d. sampled from $p^*_{X, Y}$, let $\hat{p}_{X\vert Y}$ be the maximizer of $L_S(p_{X\vert Y})$ defined in \cref{eq:likelihood} in conditional distribution space $\cP_{X\vert Y}$. Suppose the real conditional distribution $p^*_{X\vert Y}$ is contained in $\cP_{X\vert Y}$. Then, for any $0<\delta\leq1/2$, it holds with probability at least $1-\delta$ that 
\begin{align*}
    \rtv(\hat{p}_{X\vert Y}\!)\!\leq\!3\sqrt{\!\frac{1}{n}\paren*{\log\br\paren*{\!\frac{1}{n};\cP_{X\vert Y},L^1(\sX)\!\!}\!\!+\!\log\frac{1}{\delta}\!}}.
\end{align*}
\end{theorem}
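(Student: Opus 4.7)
The plan is to adapt the classical bracketing-entropy argument for MLE to the conditional setting with an average-TV loss. Set $\epsilon = 1/n$ and let $\cB$ be a minimum-cardinality $\epsilon$-upper bracket of $\cP_{X\vert Y}$ with respect to $L^1(\sX)$, so $N := \br(1/n;\cP_{X\vert Y},L^1(\sX))$ is its cardinality. For each $p\in\cP_{X\vert Y}$ let $p'_p \in \cB$ denote the upper bracket guaranteed by \cref{def:upper_bracketing_number}. The pointwise dominance $p'_p(\cdot,y)\geq p(\cdot|y)$ combined with the $L^1$ proximity forces $\int_{\sX} p'(\vx,y)\,d\vx \leq 1+\epsilon$ uniformly in $y$ for every $p'\in\cB$, a fact that will be crucial when controlling moments.

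The first step is to exploit MLE optimality. Since $p^*_{X\vert Y}\in\cP_{X\vert Y}$ and $\hat p_{X\vert Y}$ maximizes $\cL_S$, pointwise dominance $p'_{\hat p}\geq\hat p_{X\vert Y}$ yields $\prod_{i=1}^n \sqrt{p'_{\hat p}(\vx_i,y_i)/p^*_{X\vert Y}(\vx_i|y_i)} \geq 1$. For each fixed $p'\in\cB$, Markov's inequality combined with independence gives
\[\Pr\!\left[\prod_{i=1}^n \sqrt{p'(\vx_i,y_i)/p^*_{X\vert Y}(\vx_i|y_i)}\geq 1\right] \leq \Bigl(\E_{(X,Y)\sim p^*_{X,Y}}\bigl[\sqrt{p'(X,Y)/p^*_{X\vert Y}(X|Y)}\bigr]\Bigr)^n.\]
Applying the identity $2\sqrt{ab}=a+b-(\sqrt{a}-\sqrt{b})^2$, the mass bound above, and $1+x\leq e^x$, the right-hand side is at most $\exp\bigl(n\epsilon/2 - (n/2)\E_Y[H^2(p'(\cdot,Y),p^*_{X\vert Y}(\cdot|Y))]\bigr)$, where $H$ denotes the Hellinger distance. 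A union bound over the $N$ elements of $\cB$ then yields that, with probability at least $1-\delta$, every $p'\in\cB$ whose likelihood-ratio product exceeds $1$ (in particular $p'_{\hat p}$) satisfies $\E_Y[H^2(p'(\cdot,Y),p^*_{X\vert Y}(\cdot|Y))] \leq \epsilon + 2(\log N + \log(1/\delta))/n$.

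The final step transfers this bracket-level bound back to $\hat p_{X\vert Y}$. The elementary inequality $(\sqrt{a}-\sqrt{b})^2\leq|a-b|$ together with $L^1$ proximity gives $H^2(\hat p_{X\vert Y}(\cdot|y),p'_{\hat p}(\cdot,y)) \leq \epsilon$ for every $y$. The squared triangle inequality $H^2(a,c)\leq 2H^2(a,b)+2H^2(b,c)$, applied with $b = p'_{\hat p}$, then yields $\E_Y[H^2(\hat p_{X\vert Y},p^*_{X\vert Y})]\leq 4\epsilon + 4(\log N+\log(1/\delta))/n$. Since $\mathrm{TV}\leq H$ for probability densities and $\E_Y[H]\leq\sqrt{\E_Y[H^2]}$ by Jensen, substituting $\epsilon = 1/n$ recovers the claimed bound up to a universal constant.

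I expect the principal obstacle to lie in threading the bracket argument through the conditional setting. Two subtleties stand out: (i) $\hat p_{X\vert Y}$ itself is not in $\cB$, so concentration is only established for the finitely many elements of $\cB$ and must be transferred to $\hat p_{X\vert Y}$ via the $L^1$-to-Hellinger comparison; and (ii) the mass bound $\int p'(\cdot,y)\,d\vx \leq 1+\epsilon$ must hold \emph{uniformly in $y$}, which is precisely what the strengthened ``uniform-in-$y$'' upper bracketing notion in \cref{def:upper_bracketing_number} is designed to deliver. Balancing the approximation error $\epsilon$ against the statistical term $\log N / n$ at $\epsilon = 1/n$ produces the advertised $\tilde{\cO}(\sqrt{\log \br/n})$ rate.
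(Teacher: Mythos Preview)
Your proposal is correct and follows essentially the same route as the paper: Markov's inequality on the square-root likelihood ratio product, a union bound over the minimal upper bracket, MLE optimality to ensure the bracket $p'_{\hat p}$ satisfies the tail condition, and a transfer from $p'_{\hat p}$ back to $\hat p_{X\vert Y}$ via the $L^1$ proximity guaranteed by the bracket. The only difference is organizational---you route through Hellinger distance (mass bound, Hellinger triangle inequality, then $\mathrm{TV}\le H$), whereas the paper decomposes $(2\,\mathrm{TV})^2$ directly and handles the analogous terms via Cauchy--Schwarz; both deliver the stated rate with comparable constants.
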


As formulated in \cref{eq:multi_mle_solution} and \cref{eq:single_mle_solution}, multi-source and single-source training apply conditional MLE on $S$ within different conditional distribution spaces. 
The following proposition shows that multi-source training reduces the bracketing number of its distribution space through source similarity.

\begin{proposition}[Multi-source training reducing complexity, proof in \cref{app:prop:multi_has_bracket_smaller_then_single}.]
\label{prop:multi_has_bracket_smaller_then_single}
    Let $\cP_{X\vert Y}^{\multi}$ and $\cP_{X\vert Y}^{\single}$ be as defined in \cref{sec:formulation_for_conditional_generative_modeling}. Then, for any $\epsilon >0$ and $1\leq \sfp\leq\infty$, we have
    \begin{align*}
        \br\paren*{\epsilon;\cP_{X\vert Y}^{\multi},L^{\sfp}(\sX)}\leq \br\paren*{\epsilon;\cP_{X\vert Y}^{\single},L^{\sfp}(\sX)}.
    \end{align*}
\end{proposition}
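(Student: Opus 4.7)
The plan is to derive the inequality from a simple inclusion $\cP_{X\vert Y}^{\multi}\subseteq \cP_{X\vert Y}^{\single}$ combined with the monotonicity of the upper bracketing number under set inclusion. There is no heavy calculation involved; the proposition is essentially a structural observation about the two parameter families.

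First I would fix an arbitrary element $p^{\multi}_{X\vert Y}\in \cP_{X\vert Y}^{\multi}$ associated with parameters $(\phi_1,\dots,\phi_K,\psi)$. By choosing $\psi_k=\psi$ for every $k\in[K]$ in the single-source parameterization, I get exactly the same conditional distribution $\prod_{k=1}^K (p_{\phi_k,\psi_k}(\vx\vert k))^{\I(y=k)}$, which lies in $\cP_{X\vert Y}^{\single}$. Hence $\cP_{X\vert Y}^{\multi}\subseteq \cP_{X\vert Y}^{\single}$.

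Next I would invoke monotonicity of the upper bracketing number. Given any $\epsilon$-upper bracket $\cB$ of $\cP_{X\vert Y}^{\single}$ with respect to $L^{\sfp}(\sX)$, Definition~\ref{def:upper_bracketing_number} tells us that for every $p\in\cP_{X\vert Y}^{\single}$ there exists $p'\in\cB$ with $p'(\vx,y)\ge p(\vx\vert y)$ for all $\vx\in\sX$ and all $y\in\sY$ and with the $L^{\sfp}$-norm of the difference bounded by $\epsilon$. Since every $p\in\cP_{X\vert Y}^{\multi}$ is in particular an element of $\cP_{X\vert Y}^{\single}$, the same finite set $\cB$ serves as an $\epsilon$-upper bracket for $\cP_{X\vert Y}^{\multi}$. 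Taking $\cB$ to be a smallest such bracket for the single-source space yields $\br(\epsilon;\cP_{X\vert Y}^{\multi},L^{\sfp}(\sX))\le |\cB| = \br(\epsilon;\cP_{X\vert Y}^{\single},L^{\sfp}(\sX))$.

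There is essentially no obstacle in this argument; the only point requiring a moment of care is checking that Definition~\ref{def:upper_bracketing_number} is formulated so that a bracket for the larger class is automatically a valid bracket for any subclass. This is immediate because the definition quantifies the bracketing conditions (pointwise domination in $\vx$, uniform coverage in $y$, and $L^{\sfp}$-closeness) for each element individually, so restricting to a subset of distributions only makes the requirement easier to satisfy. Consequently the stated inequality follows directly.
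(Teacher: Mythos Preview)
Your proposal is correct and mirrors the paper's proof essentially line for line: establish the inclusion $\cP_{X\vert Y}^{\multi}\subseteq \cP_{X\vert Y}^{\single}$ by setting $\psi_k=\psi$ for all $k$, then observe that a smallest $\epsilon$-upper bracket for $\cP_{X\vert Y}^{\single}$ is also a valid $\epsilon$-upper bracket for the subset $\cP_{X\vert Y}^{\multi}$, giving the inequality.
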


Combining Theorem~\ref{thm:TV_upper_of_conditional_MLE} and Proposition~\ref{prop:multi_has_bracket_smaller_then_single}, we conclude that when source distributions have parametric similarity and the model satisfies the realizable assumption, multi-source training can enjoy a sharper estimation guarantee than single-source training. 
Simulations and real-world experiments in \cref{sec:experiments} support our result.

\section{Instantiations}
\label{sec:instantiations}

We now apply our general analysis to conditional Gaussian estimation and two deep generative models to obtain concrete error bounds. 

\subsection{Parametric estimation on Gaussian distributions}
\label{sec:instantiate_conditional_gaussian}

As employed in extensive work~\cite{montanari2022universality,wang2022binary,he2022information,zheng2023toward,dandi2024universality,zheng2023revisiting}, Gaussian models provide a simple yet insightful case for illustrating the benefits of multi-source training and enable analytically tractable simulations under our theoretical assumptions.

\paragraph{Parametric distribution family.} 
Suppose each of the $K$ conditional distributions is a $d$-dimensional standard Gaussian distribution, i.e., 
\begin{align*}
    \forall k\in [K], \quad X\vert k \sim \cN(\meang_k^*, \vI_d)=(2\pi)^{-\frac{d}{2}}e^{-\frac{1}{2}\norm{\vx-\meang_k^*}_2^2},
\end{align*}
with a mean vector $\meang_k^*$ and an identity covariance matrix $\vI_d\in \R^{d\times d}$. 
We assume each mean vector has two parts: the first $d_1$ entries $\meang_k^*[1\!:\!d_1]$ represent the source-specific feature which is potentially different for each source, and the remaining entries $\meang_k^*[d_1\!+\!1\!:\!d]$ represent the shared feature which is identical across all sources.
Corresponding to the general formulation in \cref{sec:formulation_for_conditional_generative_modeling}, we denote
\begin{align*}
\phi_k \!\coloneqq\! \meang_k^*[1:d_1], 
\psi \!\coloneqq\! \meang_1^*[d_1\!+\!1\!:\!d]\!=\!\cdots\!=\!\meang_K^*[d_1\!+\!1\!:\!d],
\end{align*}
and the conditional distribution is parameterized as
\begin{align}
\label{eq:conditional_density_gaussian}
    p_{\phi_k,\psi}(\vx\vert k)=(2\pi)^{-\frac{d}{2}}e^{-\frac{1}{2}\norm{\vx-(\phi_k,\psi)}_2^2}.
\end{align}

\paragraph{Statistical guarantee of the average TV error.} 
In this formulation, the conditional MLE in $\cP_{X\vert Y}^{\multi}$ under multi-source training leads to the following result.

\begin{theorem}[Average TV error bound for conditional Gaussian estimation under multi-source training, proof in \cref{app:proof_thm:tv_upper_gaussian}]
\label{thm:tv_upper_gaussian}
Let $\hat{p}_{X\vert Y}^{\multi}$ be the likelihood maximizer defined in \cref{eq:multi_mle_solution} given $\cP_{X\vert Y}^{\multi}$ with conditional distributions as in \cref{eq:conditional_density_gaussian}. 
Suppose $\Phi = [-B,B]^{d_1}$, $\Psi = [-B,B]^{d-d_1}$ with constant $B>0$, and $\phi_k^*\in \Phi$, $\psi^*\in \Psi$. Then, for any $0<\delta\leq 1/2$, it holds with probability at least $1-\delta$ that 
\begin{align*}
    \rtv(\hat{p}_{X\vert Y}^{\multi})=\tilde{\cO}\paren*{\sqrt{\frac{(K-1)d_1+d}{n}}}.
\end{align*}
\end{theorem}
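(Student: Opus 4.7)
The plan is to apply \cref{thm:TV_upper_of_conditional_MLE} to $\cP_{X\vert Y}^{\multi}$ at scale $\epsilon = 1/n$ and thereby reduce the problem to bounding $\br(1/n; \cP_{X\vert Y}^{\multi}, L^1(\sX))$. The parametric dimension of $\cP_{X\vert Y}^{\multi}$ is $K d_1 + (d-d_1) = (K-1)d_1 + d$, which already foreshadows the target rate; the real work is lifting a Euclidean cover of the parameter box $\Phi^K \times \Psi \subset \R^{(K-1)d_1+d}$ to an $L^1$ upper bracket of the conditional Gaussian densities.

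First, I would fix a resolution $\epsilon_0>0$, to be tuned later, and cover $\Phi^K\times\Psi$ in the $\ell_2$-norm using $\cO((B\sqrt{d}/\epsilon_0)^{(K-1)d_1+d})$ points, so that every configuration $(\phi_1,\ldots,\phi_K,\psi)$ lies within distance $\epsilon_0$ of some cover point $(\phi_1',\ldots,\phi_K',\psi')$. In particular, each concatenated mean $(\phi_k,\psi)$ is $\epsilon_0$-close to $(\phi_k',\psi')$. Expanding $\norm{\vx-(\phi_k,\psi)}_2^2$ around $(\phi_k',\psi')$ and invoking Cauchy--Schwarz yields the pointwise bound
\begin{align*}
p_{\phi_k,\psi}(\vx\vert k) \leq p_{\phi_k',\psi'}(\vx\vert k)\exp\paren*{\epsilon_0\norm{\vx-(\phi_k',\psi')}_2}.
\end{align*}
I would take the right-hand side as the $k$-slice of the upper-bracket function
\begin{align*}
p'(\vx,y)\coloneqq\prod_{k=1}^K\bracket*{p_{\phi_k',\psi'}(\vx\vert k)\exp\paren*{\epsilon_0\norm{\vx-(\phi_k',\psi')}_2}}^{\I(y=k)}.
\end{align*}
Since each source is handled by its own slice, the $y$-uniformity required by \cref{def:upper_bracketing_number} is automatic.

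Second, I would bound the per-$y$ $L^1$ gap, which for $y=k$ reduces to evaluating
\begin{align*}
\int_{\R^d} p_{\phi_k',\psi'}(\vx\vert k)\paren*{e^{\epsilon_0\norm{\vx-(\phi_k',\psi')}_2}-1}d\vx.
\end{align*}
Writing this as $\E_{\cN(\meang',\vI_d)}\bracket*{e^{\epsilon_0 Z}-1}$ with $Z\coloneqq\norm{\vx-\meang'}_2$ and noting that $Z$ is $1$-Lipschitz in a standard Gaussian vector, Gaussian concentration gives an MGF bound of order $\exp(\epsilon_0\sqrt{d}+\cO(\epsilon_0^2))$, so the gap is $\cO(\epsilon_0\sqrt{d})$ for $\epsilon_0\lesssim 1/\sqrt{d}$. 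Choosing $\epsilon_0=\Theta(\epsilon/\sqrt{d})$ thus yields an $\epsilon$-upper bracket and hence $\log\br(1/n;\cP_{X\vert Y}^{\multi},L^1(\sX))=\cO(((K-1)d_1+d)\log(dBn))$. Plugging this into \cref{thm:TV_upper_of_conditional_MLE} delivers the claimed $\tilde{\cO}(\sqrt{((K-1)d_1+d)/n})$ rate.

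The main obstacle is the upper-bracket construction on the unbounded support $\R^d$: the inflation factor must remain a \emph{valid} pointwise upper bound on \emph{all} $\vx$ for \emph{every} mean in the $\epsilon_0$-cell (so a naive first-order linearization in $\meang$ is insufficient), yet its $L^1$ tail must still shrink at rate $\cO(\epsilon_0\sqrt{d})$. The exponential--linear form above is tailored to this trade-off; the integral is tamed by the chi MGF because the Gaussian factor absorbs the extra linear growth in $\norm{\vx-\meang'}_2$, leaving only logarithmic dependence on $d$ and $B$ in the final TV rate.
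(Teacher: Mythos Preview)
Your proposal is correct and follows the paper's high-level strategy exactly: invoke \cref{thm:TV_upper_of_conditional_MLE} and reduce to a bracketing-number bound whose exponent is the parametric dimension $(K-1)d_1+d$. The difference lies only in the explicit upper-bracket construction. The paper (Theorem~\ref{thm:bracket_gaussian}) uses, for each grid mean $\bar\meang$, the \emph{variance-inflated} density $(2\pi)^{-d/2}\exp\paren*{-\tfrac{c_1}{2}\norm{\vx-\bar\meang}_2^2+c_2}$ with $c_1=1-\eta$ and $c_2=d(1-\eta)\eta/2$; completing the square shows this dominates every unit-variance Gaussian with mean in the $\eta$-cell, and the $L^1$ gap is obtained in closed form as $c_1^{-1/2}e^{c_2}-1\le(1+d)\eta$, giving $\eta=\epsilon/(1+d)$. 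Your bracket instead multiplies the unit-variance Gaussian at $\meang'$ by $\exp(\epsilon_0\norm{\vx-\meang'}_2)$, so pointwise domination comes from Cauchy--Schwarz and the $L^1$ gap requires the chi MGF via Gaussian Lipschitz concentration. Both routes lead to an $\ell_\infty$ grid spacing of order $\epsilon/d$ on the $(K-1)d_1+d$ parameters and hence identical logarithmic factors in the final rate; the paper's version is marginally more economical (no concentration lemma, bracket is itself a rescaled Gaussian), while yours makes the Cauchy--Schwarz origin of the pointwise bound and the role of $\sqrt d$ in the tail integral more transparent.
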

In contrast, single-source training results in an error of $\rtv(\hat{p}_{X\vert Y}^{\single})=\tilde{\cO}\paren*{\sqrt{{Kd}/{n}}}$, with a formal result provided 
in Theorem~\ref{thm:tv_upper_gaussian_single}.  
The advantage of multi-source learning can be quantified by the ratio of error bounds:  
${\sqrt{\frac{(K-1)d_1+d}{Kd}}}=\sqrt{1 - \frac{K-1}{K}\frac{d-d_1}{d}}.$ 
The derivation in this subsection primarily follows~\citet{ge_2024_iclr_unsupervised}.

Letting $\beta_{\mathrm{sim}} \coloneqq \frac{d - d_1}{d}$, where $\frac{d - d_1}{d}$ represents the proportion of the shared mean dimensions relative to the total dimensionality, this quantity $\beta_{\mathrm{sim}}$ can thus be interpreted as the similarity among source distributions.
As we will see in subsequent instantiations, this general form of ratio $\sqrt{1 - \frac{K-1}{K}\beta_{\mathrm{sim}}}$ applies across \cref{sec:instantiation_arm} and \cref{sec:instantiation_ebm}, with $\beta_{\mathrm{sim}}$ instantiated in a case-specific manner. 
Further discussion on the notion of $\beta_{\mathrm{sim}}$ and the measure of distribution similarity in practice can be found in \cref{app:distribution_similarity}. 

Notably, this ratio decreases with both the number of sources $K$ and source similarity $\beta_{\mathrm{sim}}$. 
As $K$ increases from $1$ to $\infty$, the ratio decreases from $1$ to $\sqrt{1 - \beta_{\mathrm{sim}}}$, and as $\beta_{\mathrm{sim}}$ increases from $0$ (completely dissimilar distributions) to $1$ (completely identical distributions), it decreases from $1$ to $\sqrt{1/K}$, reflecting a transition from no asymptotic gain to a constant improvement. 
This highlights that the number of sources and distribution similarity enhance the benefits of multi-source training. Empirical results in~\cref{sec:real-world_experiment} confirm this trend.

\subsection{Conditional ARMs on discrete distributions}
\label{sec:instantiation_arm}

For deep generative models, our formulations are based on multilayer perceptrons (MLPs), a fundamental network component, with potential extensions to Transformers and convolution networks with existing literature~\cite{lin2019generalization,ledent2021norm, shen2021non,hu2024statistical,trauger2024sequence, jiao2024convergence}. We formally define MLPs mainly following notations in ~\citet{oko_2023_icml_diffusionminimax}

\begin{definition}[Class of MLPs]
\label{def:class_of_nn}
A class of MLPs $\cF(L,W,S,B)$ with depth $L$, width $W$, sparsity $S$, norm $B$, and element-wise ReLU activation that $\relu(x)=0\vee x$ is defined as 
$\cF(L,W,S,B)\!\coloneqq\!\{\vf(\vx)\!=\!(\vA^{(L)}\relu(\cdot)\!+\!\vb^{(L)})\circ\cdots\circ(\vA^{(1)}\vx\!+\!\vb^{(1)})\!:\! \{(\vA^{(l)},\vb^{(l)})\}_{l=1}^L\!\in\!\cW(L,W,S,B)\}$, where parameter space $\cW(L,W,S,B)$ is defined by $\cW(L,W,S,B)\coloneqq \{\{(\vA^{(l)},\vb^{(l)})\}_{l=1}^L: \vA^{(l)}\in\R^{W_l\times W_{l-1}}, \vb^{(l)}\in\R^{W_l}, \max_l W_l\leq W, \sum_{l=1}^L(\norm{\vA^{(l)}}_0+\norm{\vb^{(l)}}_0)\leq S, \max_l\norm{\vA^{(l)}}_{\infty}\vee \norm{\vb^{(l)}}_{\infty}\leq B\}$. 
\end{definition}

We now present the formulation for ARMs, which can be viewed as an extension of~\citet{uria_2016_jmlr_nade}.

\paragraph{Probabilistic modeling with autoregression.}
Consider a common data scenario for the natural language where $X$ represents a $D$-length text in $[M]^D$. Each dimension of $X$ is an integer token following an $M$-categorical distribution with $M$ being the vocabulary size. 
Adopting the autoregressive approach of probabilistic modeling, conditional distribution $p_{X \vert Y}(\vx\vert y)$ is factorized using the chain rule as: 
\begin{align*}
    p(\vx\vert y)&=p\paren*{x_1\vert y}\cdots p\paren*{x_D\vert \vx_{<D},y}\\
    &=p\paren*{x_1;\vparm(y)}\cdots p\paren*{x_D;\vparm(\vx_{<D},y)}.
\end{align*}
We omit the subscripts for notation simplicity.
Here, for any $d\in[D]$, $\vparm(x_{<d},y)$ is the distribution parameter for $X_d$ given $X_{<d}, Y\!=\!\vx_{<d}, y$ that 
$$p\paren*{x_d=m\vert \vx_{<d},y}=\vparm(\vx_{<d},y)[m],$$ 
satisfying $\vparm(\vx_{<d},y)\in\R_{+}^M$ and $\sum_{m=1}^M\vparm(\vx_{<d},y)[m]=1$. 

\paragraph{Distribution estimation via neural network.}
Aligning with common practices, we suppose the distribution parameter vector is estimated with $\vparm_{\pnn}(x_{<d},y)$ using a shared neural network parameterized by $\pnn$ across all dimensions. The network comprises an embedding layer, an encoding layer, an MLP block, and a softmax output layer. 

Specifically, we first look up $\vx$ and $y$ in two embedding matrices $\vV_X\in[0,1]^{M\times \de}$ and $\vV_Y\in[0,1]^{K\times \de}$, then stack the embeddings to get
$$\vE_{\vV_Y,\vV_X}(\vx,y)=\begin{bmatrix} \vV_Y[y,:] \\ \vV_X[x_1,:]\\ \vdots\\\vV_X[x_{D-1},:]  \end{bmatrix}\in[0,1]^{D\times\de},$$
where the last dimension of $x$ is excluded since it is not used when estimating the distribution.

Subsequently, we encode each embedding by a linear transformation with parameters $\vA_0\in \R^{D\times \de}$, $\vb_0\in\R^D$ and normalize the output with an element-wise sigmoid function $\sigma(x)=\frac{1}{1+e^{-x}}$ as
\begin{align*}
    &\vv_{\vA_0,\vb_0}(\vE_{\vV_Y,\vV_X}(\vx,y)) \\
    &=\begin{bmatrix} \sigma\paren*{\vA_0[1,:]\vV_Y[y,:]^\top \!+\!\vb_0[1]} \\\vdots \\ \sigma\paren*{\vA_0[D,:]\vV_X[x_{D-1},:]^\top\!+\!\vb_0[D]} \end{bmatrix}
    \in [0,1]^D.
\end{align*}

To ensure no components related to $\vx_{\geq d}$ is seen when estimating the conditional probability for $x_d$, we mask $\vv_{\vA_0,\vb_0}$ using a $(D-d)$-dimensional zero vector $\bm{0}_{D-d}$ as
$$\vv_{\vA_0,\vb_0}^{\backslash \bm{0}_{D-d}}\coloneqq \begin{bmatrix} \vv_{\vA_0,\vb_0}[1:d]^\top \ \bm{0}_{D-d}^\top\end{bmatrix}^\top.$$ 
Then we calculate the distribution parameter vector by an MLP $\vf_{\pmlp}\!\in\!\cF(L,W,S,B)$ with $W_0 \!=\! D$ and $W_L\!=\!M$, followed by a softmax layer as
\begin{align*}
    \vparm_{\pnn}(\vx_{<d},\!y)\!=\!\mathrm{softmax}\paren*{\!\vf_{\pmlp}\paren*{\!\vv_{\vA_0,\vb_0}^{\backslash \bm{0}_{D-d}}\!\paren*{\vE_{\vV_Y,\vV_X}\!(\vx,y)\!}\!}\!\!}.
\end{align*}
This leads to conditional distribution as
\begin{align}
\label{eq:conditional_density_ar}
    p_{\pnn}\paren*{\vx\vert y}=p\paren*{x_1; \vparm_{\pnn}(y)}\cdots p\paren*{x_D; \vparm_{\pnn}(\vx_{<D},y)}.
\end{align}

When training such an ARM, each row of $\vV_Y$ is only optimized on data with the corresponding condition, while parameters in $\vV_X,\vA_0,\vb_0$, and $\pmlp$ are optimized on data with all conditions. 
That means $\vV_Y[k,:]$ serves as the source-specific parameter, while other parameters are shared across all sources. 
Corresponding to the general formulation in \cref{sec:formulation_for_conditional_generative_modeling}, we denote
\begin{align*}
    \phi_k\coloneqq \vV_Y[k,:], \ \text{and }
    \psi\coloneqq \{\vV_X,\vA_0,\vb_0,\pmlp\}.
\end{align*}

\paragraph{Statistical guarantee of the average TV error.}
In this formulation, the conditional MLE in $\cP_{X\vert Y}^{\multi}$ under multi-source training leads to the following result.

\begin{theorem}[Average TV error bound for ARMs under multi-source training, proof in \cref{app:proof_of_ar}]
\label{thm:tv_upper_ar}
Let $\hat{p}_{X\vert Y}^{\multi}$ be the likelihood maximizer defined in \cref{eq:multi_mle_solution} given $\cP_{X\vert Y}^{\multi}$ with conditional distributions as in \cref{eq:conditional_density_ar}. 
Suppose $\Phi \!=\! [0,1]^{\de}$, $\Psi \!=\! [0,1]^{M\times \de}\!\times\! [-B,B]^{D\times\de}\!\times\![-B,B]^{D}\!\times\! \cW(L,W,S,B)$ with constants $L,W,S,B>0$, and $\phi_k^*\in \Phi$, $\psi^*\in \Psi$. Then, for any $0<\delta\leq 1/2$, it holds with probability at least $1-\delta$ that 
\begin{align*}
\rtv(\hat{p}_{X\vert Y}^{\multi})=\tilde{\cO}\paren*{\sqrt{\frac{L\paren*{S+D+(D+M+K)\de}}{n}}}.
\end{align*}

\end{theorem}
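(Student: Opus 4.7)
The plan is to invoke Theorem~\ref{thm:TV_upper_of_conditional_MLE} with $\cP_{X\vert Y}=\cP_{X\vert Y}^{\multi}$, which reduces the task to bounding $\log\br(1/n;\cP_{X\vert Y}^{\multi},L^1(\sX))$ for the autoregressive family defined by \cref{eq:conditional_density_ar}. Since $\sX=[M]^D$ is discrete and finite, the $L^1$ norm is a finite sum and the upper-bracket condition reduces to a pointwise dominance requirement at each of the $M^D$ configurations of $\vx$, for every source $y\in[K]$.

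I would construct the bracket by first producing an $\epsilon'$-cover of the parameter space and then lifting it to a distribution bracket via Lipschitz continuity of $p_{\pnn}(\cdot\vert y)$ in $\pnn$. The parameters decompose into a source-specific part $\{\vV_Y[k,:]\}_{k=1}^K$ contributing $K\de$ coordinates in $[0,1]$, and a shared part consisting of $\vV_X\in[0,1]^{M\times \de}$, the encoder $(\vA_0,\vb_0)\in[-B,B]^{D\times\de}\times[-B,B]^D$, and the sparse MLP $\vf_{\pmlp}\in\cF(L,W,S,B)$. Standard $\epsilon'$-nets for each bounded-box component produce log-cardinality of order $\tilde{\cO}((K+M+D)\de+D)$, while the MLP admits a log-covering bound of order $\tilde{\cO}(LS)$ obtained by adapting sparse-MLP covering results standard in the deep-generative-model literature. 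Summing gives the targeted scaling $L(S+D+(D+M+K)\de)$ up to logarithmic factors in $\epsilon'$, $B$, and $W$.

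The core technical step is propagating a parameter perturbation through the full computation pipeline to control $\norm{p_{\pnn'}(\cdot\vert y)-p_{\pnn}(\cdot\vert y)}_{L^1(\sX)}$ uniformly in $y$. I would (i) bound the variation of the embedded and sigmoid-encoded representation in $L^\infty$ using that $\sigma$ is $1$-Lipschitz and inputs are bounded; (ii) propagate through the masked MLP, where the Lipschitz constant with respect to $\pmlp$ inflates by at most a factor $(BW)^{L}$, which only enters the final radius through logarithmic terms; (iii) pass through the softmax, which is $1$-Lipschitz from $\ell^\infty$ logits to $\ell^1$ probabilities, so each factor $p(x_d\vert \vx_{<d},y)$ inherits the perturbation bound; and (iv) combine the $D$ autoregressive factors via a telescoping product-to-sum decomposition of TV, which picks up at most a factor $D$. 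Two-sided closeness is then converted to a one-sided upper bracket by inflating each covering element by an $\cO(\epsilon)$ additive slack, which costs only a multiplicative constant in the bracket cardinality while preserving the upper-bound requirement pointwise in $\vx$ and $y$.

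The main obstacle is the Lipschitz analysis through the deep ReLU MLP and softmax while ensuring that the parameter dimensions combine \emph{additively} (giving $S+D+(D+M+K)\de$) rather than multiplicatively, and keeping all $\epsilon'$-, $B$-, $W$-, and $n$-dependencies inside logarithmic factors that $\tilde{\cO}(\cdot)$ absorbs. Once the log-bracketing bound is established, setting $\epsilon=1/n$ and substituting into Theorem~\ref{thm:TV_upper_of_conditional_MLE} yields $\rtv(\hat{p}_{X\vert Y}^{\multi})=\tilde{\cO}(\sqrt{L(S+D+(D+M+K)\de)/n})$, as claimed.
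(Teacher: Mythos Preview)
Your high-level architecture matches the paper's proof: reduce via Theorem~\ref{thm:TV_upper_of_conditional_MLE}, build an $\epsilon$-cover of the logit map $\vH_{\pnn}$ by discretizing the embedding, encoder, and sparse-MLP parameters (the paper's Lemma~\ref{lem:arm_covering_of_distribution_parameter_vector}, yielding exponent $S+D+(D+M+K)\de$), and then turn that cover into an $L^1$ upper bracket of the autoregressive pmf (Lemma~\ref{lem:arm_bracket_density_class_via_parameter_vector}). The Lipschitz pipeline you outline through the sigmoid encoder, masked MLP, and softmax is exactly what the paper does.

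The one place your sketch is under-specified and, read literally, would fail is the cover-to-bracket conversion. You propose to ``inflate each covering element by an $\cO(\epsilon)$ additive slack.'' If this means adding a constant to the \emph{joint} pmf $p_{\pnn'}(\vx\vert y)$ at each of the $M^D$ points of $\sX$, the $L^1$ excess becomes $M^D\epsilon$, which blows up exponentially in $D$. The paper avoids this by building the bracket at the logit level: given a cover element $\vH'_{\pnn}$ with $\norm{\vH_{\pnn}-\vH'_{\pnn}}_\infty\le\epsilon_{\cH}$, it defines
\[
p'_{\pnn}(\vx\vert y)=\prod_{d=1}^D\prod_{m=1}^M\Bigl(\tfrac{\exp(\vH'_{\pnn}[m,d]+\epsilon_{\cH})}{\sum_i\exp(\vH'_{\pnn}[i,d]-\epsilon_{\cH})}\Bigr)^{\I(x_d=m)},
\]
shifting each numerator up and each normalizer down. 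Pointwise domination $p'_{\pnn}\ge p_{\pnn}$ is then immediate, and since every per-dimension factor now sums over $x_d$ to $e^{2\epsilon_{\cH}}$ rather than $1$, the telescoping $L^1$ bound is $\cO(D\epsilon_{\cH})$ with no $M^D$ factor. Your additive-slack idea can be repaired by inflating each \emph{conditional factor} separately and taking the product (incurring $(1+M\epsilon_d)^D-1=\cO(MD\epsilon_d)$), but the construction must live at the factor/logit level, not the joint level; your write-up does not make that distinction, and it is precisely the nontrivial step in this lemma.
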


In contrast, single-source training results in an error of $\rtv(\hat{p}_{X\vert Y}^{\single})=\tilde{\cO}\paren*{\sqrt{{KL(S\!+\!D\!+\!(D\!+\!M\!+\!1)\de)}/{n}}}$ with a formal result provided in Theorem~\ref{thm:tvbound_arm_single}.  
The advantage of multi-source learning is quantified by the ratio of error bounds:  
$\sqrt{\frac{L(S+D+(D+M+K)\de)}{KL(S+D+(D+M+1)\de)}} = \sqrt{1 \!-\! \frac{K-1}{K} \beta_{\mathrm{sim}} }$, 
where the term  
$\beta_{\mathrm{sim}} \coloneqq\frac{S+D+(D+M)\de}{S+D+(D+M+K)\de+\de} \in [0,1]$ quantifies source distribution similarity based on the proportion of shared parameters. This ratio follows the same pattern discussed in \cref{sec:instantiate_conditional_gaussian} where the number of sources $K$ and the distribution similarity $\beta_{\mathrm{sim}}$ are two key factors improving the advantage of multi-source training.

\subsection{Conditional EBMs on continuous distributions}
\label{sec:instantiation_ebm}

In this section, we study distribution estimation for conditional EBMs, a flexible probabilistic modeling approach on continuous data. Our formulation follows~\citet{du_2019_nips_ebm} with a simplified neural network architecture.

\paragraph{Probabilistic modeling with energy function.}
Consider a common scenario with natural image $X$ flattened and normalized in $[0,1]^D$. 
The conditional distribution $p_{X \vert Y}(\vx\vert y)$ is factorized with an energy function $u(\vx\vert y)$ as: 
\begin{align*}
    p(\vx\vert y)=\frac{e^{-u(\vx\vert y)}}{\int_{\sX}e^{-u(\vs\vert y)}d\vs}.
\end{align*}

\paragraph{Distribution estimation via neural network.}
We suppose the energy function is estimated with $u_{\pnn}(\vx\vert y)$ using a neural network parameterized by $\pnn$, which comprises a condition embedding layer and an energy-estimating MLP. 

Specifically, we first look up $y$ in a condition embedding matrix $\vV\in[0,1]^{K\times \de}$ and concat the embedding with $\vx$ 
$$\ve_{\vV}(\vx,y)=\begin{bmatrix}  \vx \\ \vV[y,:] \end{bmatrix}\in[0,1]^{D+\de}.$$
Then we use an MLP $f_{\pmlp}\in\cF(L,W,S,B)$ with $W_0=D+\de$ and $ W_L=1$ to estimate the energy as
\begin{align*}
    u_{\pnn}(\vx\vert y)=f_{\pmlp}\paren*{\ve_{\vV}(\vx,y)},
\end{align*}
where $\pnn\coloneqq\{\vV,\pmlp\}$. 
This leads to a conditional distribution as 
\begin{align}
\label{eq:conditional_density_ebm}
    p_{\pnn}(\vx\vert y)=\frac{e^{-u_{\pnn}(\vx\vert y)}}{\int_{\sX}e^{-u_{\pnn}(\vs\vert y)}d\vs}.
\end{align}

When training such an EBM, each row of $\vV$ is only optimized on data with the corresponding condition, while $\pmlp$ is optimized on data with all conditions. 
That means $\vV[k,:]$ serves as the source-specific parameter and $\pmlp$ is shared across all sources. 
Corresponding to the general formulation in \cref{sec:formulation_for_conditional_generative_modeling}, we denote
\begin{align*}
    \phi_k\coloneqq \vV[k,:], \ \text{and }
    \psi\coloneqq \pmlp.
\end{align*}

\paragraph{Statistical guarantee of the average TV error.}
In this formulation, the conditional MLE in $\cP_{X\vert Y}^{\multi}$ under multi-source training leads to the following result.

\begin{theorem}[Average TV error bound for EBMs under multi-source training, Proof in \cref{app:proof_of_ebm}]
\label{thm:tv_upper_ebm}
Let $\hat{p}_{X\vert Y}^{\multi}$ be the likelihood maximizer defined in \cref{eq:multi_mle_solution} given $\cP_{X\vert Y}^{\multi}$ with conditional distributions in \cref{eq:conditional_density_ebm}. 
Suppose $\Phi\!=\![0,1]^{\de}$ and $\Psi \!=\! \cW(L,W,S,B)$ with constants $L,W,S,B>0$ and assume $\phi_k^*\in \Phi$, $\psi^*\in \Psi$. Then, for any $0\!<\!\delta\!\leq\! 1/2$, it holds with probability at least $1\!-\!\delta$ that
\begin{align*}
\rtv(\hat{p}_{X\vert Y}^{\multi})=\tilde{\cO}\paren*{\sqrt{\frac{L\paren*{S+K\de}}{n}}}.
\end{align*}
\end{theorem}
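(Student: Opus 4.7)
The plan is to reduce the theorem to \cref{thm:TV_upper_of_conditional_MLE} by producing an explicit upper bound on $\log\br(1/n;\cP_{X\vert Y}^{\multi},L^1(\sX))$ of order $\tilde{\cO}(L(S+K\de))$. Since the EBM class has only $S+K\de$ effectively free parameters (the sparse MLP weights $\omega$, shared across sources, plus the $K$ embedding rows $\vV[k,:]\in[0,1]^{\de}$), the main task is to convert a parameter-space covering into a genuine one-sided bracket and then to verify that the covering radius $\eta$ may be chosen of order $\epsilon$ divided by a polynomial-in-the-problem-constants factor, so that $\log(1/\eta)$ only contributes logarithmic factors.

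First, I would build a parameter cover. For each $\phi_k=\vV[k,:]\in[0,1]^{\de}$ an $\eta$-cover in $\ell_\infty$ has cardinality at most $(\lceil 1/\eta\rceil)^{\de}$, and so the $K$-fold cover of $\Phi^K$ has log-cardinality $O(K\de\log(1/\eta))$. For $\omega\in\cW(L,W,S,B)$, the standard sparse-weight covering argument (enumerate the $\binom{O(LW^2)}{S}$ support patterns and quantize each nonzero entry in an $\eta$-net of $[-B,B]$) yields a cover of log-cardinality $O(S\log(LW^2B/\eta))$. Combined, the parameter cover has log-cardinality $O((S+K\de)\log(LW^2B/\eta))$.

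Second, I would translate parameter closeness into uniform closeness of the energy. By a layerwise induction on the MLP in \cref{def:class_of_nn}, if $\theta$ and $\theta'$ agree up to $\eta$ entrywise then $\|u_\theta(\cdot\vert y)-u_{\theta'}(\cdot\vert y)\|_{L^\infty(\sX)}\le\Lambda\eta$ uniformly in $y$, where $\Lambda\!=\!\Lambda(L,W,B)$ is polynomial in $W,B$ and of order $(WB)^{L}$ (so $\log\Lambda=O(L\log(WB))$). Exponentiating and normalizing, for every $\vx,y$ one obtains $e^{-\Lambda\eta}\le e^{-u_\theta(\vx\vert y)}/e^{-u_{\theta'}(\vx\vert y)}\le e^{\Lambda\eta}$, and the same interval controls $Z_\theta(y)/Z_{\theta'}(y)$. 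Hence $p_\theta(\vx\vert y)\le e^{2\Lambda\eta}p_{\theta'}(\vx\vert y)$ pointwise, so defining
\[
p'(\vx,y)\coloneqq e^{2\Lambda\eta}\,p_{\theta'}(\vx\vert y)
\]
yields a valid pointwise upper bound. Since $p'\ge p_\theta$, the $L^1(\sX)$ distance collapses to a difference of integrals: $\|p'(\cdot,y)-p_\theta(\cdot\vert y)\|_{L^1(\sX)}=e^{2\Lambda\eta}-1\le 3\Lambda\eta$ whenever $\Lambda\eta\le 1$.

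Finally, choosing $\eta=\epsilon/(3\Lambda)$ gives an $\epsilon$-upper bracket of $\cP_{X\vert Y}^{\multi}$ with log-cardinality $O((S+K\de)(\log(1/\epsilon)+L\log(WB)))$. Plugging $\epsilon=1/n$ and then invoking \cref{thm:TV_upper_of_conditional_MLE} yields $\rtv(\hat{p}_{X\vert Y}^{\multi})=\tilde{\cO}(\sqrt{L(S+K\de)/n})$, as claimed. I expect the main obstacle to be the third step: producing a genuine pointwise upper bracket (rather than a two-sided cover) in the presence of the unknown normalizer $Z_\theta(y)$ forces the multiplicative cushion $e^{2\Lambda\eta}$, and the resulting bracket $L^1$-mass must be controlled after the exponential and normalization cascade, which is where the factor $\Lambda=(WB)^{O(L)}$ enters and must be absorbed into the $\tilde{\cO}$-notation via the $\log(1/\eta)$ term. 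A minor subtlety is that the bracket functions $p'(\vx,y)$ must be valid simultaneously for all $y\in\sY$ as required by \cref{def:upper_bracketing_number}, which is automatic here because the Lipschitz bound on $u_\theta$ is uniform in $y$.
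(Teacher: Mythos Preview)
Your proposal is correct and follows essentially the same route as the paper: first build an $L^\infty$ cover of the energy class $\cU_\theta$ from a parameter cover (the paper's Lemma~E.1), then turn the energy cover into a one-sided bracket via the multiplicative cushion $p'=e^{2\Lambda\eta}p_{\theta'}$ (the paper's Lemma~E.2 uses the equivalent $e^{-u'+2\epsilon_\cU}/\!\int e^{-u'}$), and finally invoke \cref{thm:TV_upper_of_conditional_MLE}. Your $L^1$-mass bound $\|p'-p_\theta\|_{L^1}=e^{2\Lambda\eta}-1$ via the integral-difference shortcut is in fact cleaner than the paper's explicit computation; the only nit is that $e^{2\Lambda\eta}-1\le 3\Lambda\eta$ requires $\Lambda\eta$ a bit smaller than $1$ (e.g.\ $\Lambda\eta\le 1/2$ suffices with a slightly larger constant), which is harmless for the $\tilde{\cO}$ conclusion.
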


In contrast, single-source training results in an error of $\rtv(\hat{p}_{X\vert Y}^{\single})=\tilde{\cO}\paren*{\sqrt{{LK\paren*{S+\de}}/{n}}}$ with a formal proof provided in Theorem~\ref{thm:tvbound_ebm_single}.  
The advantage of multi-source learning is quantified by the ratio of error bounds:  
$\sqrt{\frac{L\paren*{S+K\de}}{LK\paren*{S+\de}}} = \sqrt{1 - \frac{K-1}{K} \beta_{\mathrm{sim}}}$,  
where  
$\beta_{\mathrm{sim}} \coloneqq \frac{S}{S+\de} \in [0,1]$  
quantifies source distribution similarity based on the proportion of shared parameters.  
Similar to the former two cases, the number of sources $K$ and the distribution similarity $\beta_{\mathrm{sim}}$ improve the advantage of multi-source training.

\begin{figure*}[ht]
    \centering
    \begin{minipage}[t]{0.33\linewidth} 
        \includegraphics[height=0.15\textheight]{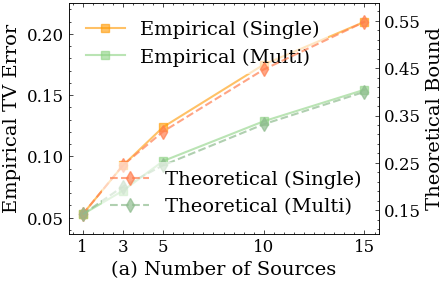}
        
        \label{fig:gaussian_K}
    \end{minipage}
    \hspace{-5pt}
    \begin{minipage}[t]{0.33\linewidth}
        \centering
        \includegraphics[height=0.15\textheight]{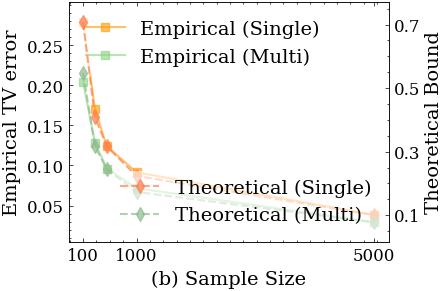}
        \label{fig:gaussian_n}
    \end{minipage}
    \hspace{-5pt}
    \begin{minipage}[t]{0.33\linewidth}
        \centering
        \includegraphics[height=0.15\textheight]{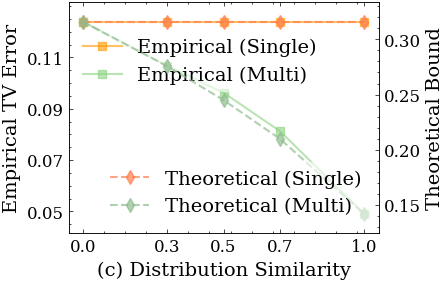}
        \label{fig:gaussian_sim}
    \end{minipage}
    \caption{Simulation results for conditional Gaussian estimation.  
    Empirical values (solid lines) correspond to the left vertical axis, while theoretical values (dashed lines) correspond to the right. \textcolor{orange}{Single-source} results are shown in orange, and \textcolor{xkcdLightGreen}{multi-source} results in green. 
    The matched orders of empirical errors and theoretical upper bounds support the validity of results in \cref{sec:instantiate_conditional_gaussian}.}
    \label{fig:gaussian_simulation_results}
\end{figure*}

\section{Experiments}
\label{sec:experiments}

In this section, simulations and real-world experiments are conducted to verify our theoretical results in Section~\ref{sec:general_guarantee_for_multi} and~\ref{sec:instantiations}.

\subsection{Simulations on conditional Gaussian estimation}
\label{sec:simulation}

In this part, we aim to examine the tightness of the derived upper bound that  $\rtv(\hat{p}_{X\vert Y}^\multi)=\tilde{\cO}\paren{\sqrt{\frac{(K-1)d_1+d}{n}}}$ in Theorem~\ref{thm:tv_upper_gaussian} and $\rtv(\hat{p}_{X\vert Y}^\single)=\tilde{\cO}\paren{\sqrt{\frac{Kd}{n}}}$ in Theorem~\ref{thm:tv_upper_gaussian_single}.

The number of sources $K$, sample size $n$, and the similarity factor $\beta_{\mathrm{sim}}\in [0,1]$ are key parameters. 
In all of our simulations, we fix data dimension $d=10$ and $p_Y^*(k) = {1}/{K}$ all $k \in [K]$. 
The dissimilar
dimension $d_1 = d - \lfloor\beta_{\mathrm{sim}}d\rfloor$.
We set the source-specific feature as $\phi_k = k\vone \in \R^{d_1}$ and the shared feature as $\psi = \vzero \in \R^{d-d_1}$. 
Under the setting of \cref{sec:instantiate_conditional_gaussian}, conditional MLE has analytical solutions: under multi-source training, we have
$$\textstyle{\hat{\phi}_k ={\sum_{y_i=k} \vx_i[1:d_1]}/{n_k}, \ \hat{\psi} = {\sum_{i=1}^n \vx_i[d_1+1:d]}/{n}}, $$
and under single-source training, we have
$$\textstyle{\hat{\phi}_k \!=\! {\sum_{y_i=k} \vx_i[1\!:\!d_1]}/{n_k}, \ \hat{\psi}_k \!=\! {\sum_{y_i=k} \vx_i[d_1+1\!:\!d]}/{n_k}}.$$

For evaluation, we randomly sample $n^{\mathrm{test}}=500$ data points according to the true joint distribution $p^*_{X,Y}$. 
Empirically, we approximate the true TV distance by using the Monte Carlo method based on the test set, which can be written formally as
\begin{align*}
\rtv(\hat{p}_{X\vert Y}\!) \!\approx\! \frac{1}{2n^{\mathrm{test}}} \!\sum_{i=1}^{n^{\mathrm{test}}} \abs*{ \frac{\hat{p}_{X\vert Y}(\vx_i \vert y_i)}{p^*_{X\vert Y}(\vx_i \vert y_i)} \!-\! 1} \!=\! \rtv^{\mathrm{em}}(\hat{p}_{X\vert Y}\!).
\end{align*}

To eliminate the randomness, we average over 5 random runs for each simulation and report the mean results.

\paragraph{Order of the average TV error about $K$.} We range the number of sources $K$ in $[1,3,5,10,15]$ with fixed sample size $n=500$ and similarity factor $\beta_{\mathrm{sim}}=0.5$.  
We display the empirical average TV error for each $K$ in Figure~\ref{fig:gaussian_simulation_results}(a), with $\rtv^{\mathrm{em}}(\hat{p}_{X\vert Y}^\multi)$ colored in green and $\rtv^{\mathrm{em}}(\hat{p}_{X\vert Y}^\single)$ colored in orange. 
Ignoring the influence of constants, it shows a good alignment between empirical errors (in solid lines) and theoretical upper bounds (in dashed lines), both scaling as $\tilde{\cO}(\sqrt{K})$. 

\paragraph{Order of the average TV error about $n$.}
We range sample size $n$ in $[100,300,500,1000,5000]$ with fixed number of sources $K=5$ and similarity factor $\beta_{\mathrm{sim}}=0.5$.
We display the empirical error for each $n$ in Figure~\ref{fig:gaussian_simulation_results}(b), with $\rtv^{\mathrm{em}}(\hat{p}_{X\vert Y}^\multi\!)$ colored in green and $\rtv^{\mathrm{em}}(\hat{p}_{X\vert Y}^\single\!)$ colored in orange. 
Ignoring the influence of constants, it shows that the orders of empirical error about $n$ match well with the theoretical upper bounds which scale as $\tilde{\cO}(1/\sqrt{n})$. 

\paragraph{Order of the average TV error about $\beta_{\mathrm{sim}}$.}
We range similarity factor $\beta_{\mathrm{sim}}$ in $[0,0.3,0.5,0.7,1]$ with fixed sample size $n=500$ and number of data sources $K=5$.
We display the empirical average TV error for each $\beta_{\mathrm{sim}}$ in Figure~\ref{fig:gaussian_simulation_results}(c) to observe how similarity factor $\beta_{\mathrm{sim}}$ impacts the advantage of multi-source training. 
Concretely, as predicted by the theoretical bounds, the changing of $\beta_{\mathrm{sim}}$ will not influence the performance of single-source training but will decrease the error of multi-source training in the order of $\tilde{\cO}(\sqrt{d_1}) = \tilde{\cO}(\sqrt{1-\beta_{\mathrm{sim}}})$.
The results show that the theoretical bounds predict the empirical performance well. 

To sum up, our simulations verify the validity of our theoretical bounds in \cref{sec:instantiate_conditional_gaussian}.
Moreover, in all experiments, $\rtv^{\mathrm{em}}(\hat{p}_{X\vert Y}^\multi)$ is consistently smaller than $\rtv^{\mathrm{em}}(\hat{p}_{X\vert Y}^\single)$, supporting our results in \cref{sec:general_guarantee_for_multi}

\subsection{Real-world experiments on diffusion models}
\label{sec:real-world_experiment}
In this section, we conduct experiments on diffusion models to validate our theoretical findings in real-world scenarios from two aspects: 
\begin{enumerate*}[(1)]
    \item We empirically compare multi-source and single-source training on conditional diffusion models and evaluate their performance to validate the guaranteed advantage of multi-source training against single-source training proved in \cref{sec:general_guarantee_for_multi}.
    \item We investigate the trend of this advantage about key factors---the number of sources and distribution similarity---as discussed in \cref{sec:instantiations}. 
\end{enumerate*}

\paragraph{Experimental settings.}
We train class-conditional diffusion models following EDM2~\cite{karras2024analyzing} at
256$\times$256 resolution on the selected classes from the ILSVRC2012 training set~\cite{ILSVRC15}, which is a subset of ImageNet~\cite{imagenet_cvpr09} containing 1.28M natural images from 1000 classes, each annotated with an integer class label from 1 to 1000. 
In our experiments, we treat each class as a distinct data source.
To control similarity among data sources, we manually design two levels of distribution similarity based on the semantic hierarchy of ImageNet~\cite{imagenet_statistics,imagenet_hierarchy} as shown in \cref{fig:hierachy} in Appendix~\ref{app:supplementary_material_for_real-world_experements} along with other experimental details.

For each controlled experiment comparing multi-source and single-source training, we fix $K$ target classes within one similarity level $\mathrm{Sim}$ and train the models on a dataset $S$ consisting of $N$ examples per class.
Under multi-source training, we train a single conditional diffusion model for all $K$ classes jointly. Under single-source training, we train $K$ separate conditional diffusion models, one for each class. Please refer to \cref{sec:formulation_for_conditional_generative_modeling} for the formal formulation of these two strategies.
We set each factor with two possible values: the number of classes $K$ in 3 or 10, distribution similarity  $\mathrm{Sim}$ in 1 or 2, and the sample size per class $N$ in 500 or 1000.
This results in a total of 8 sets of experiments comparing multi-source and single-source training.

We evaluate model performance using the average Fréchet Inception Distance~\cite{heusel2017gans} (FID, a widely used metric for image generation quality) across all conditions to assess the overall conditional generation performance. Results are displayed in Table~\ref{tab:real-world_fid}. 
Specifically, for multi-source training, we compute the FID for each class and take the average over all $K$ classes. 
For single-source training, we compute the FID for each of the $K$ separately trained models on their respective classes and calculate the average. 
Relative advantage of multi-source training is measured by $\frac{\text{Avg. FID (Single)}-\text{Avg. FID (Multi)}}{\text{Avg. FID (Single)}}$ as displayed in Figure~\ref{fig:figure2}.

\paragraph{Experimental results}

In the following, we interpret the results sequentially from the view of our theoretical findings.

From Table~\ref{tab:real-world_fid}, we observe that under different amounts of classes $K$, similarity level $\mathrm{Sim}$, and per-class sample size $N$, multi-source training generally achieves lower average FID than that of single-source training, which is consistent with our theoretical guarantees derived in \cref{sec:general_guarantee_for_multi},

From Figure~\ref{fig:figure2}, we observe that for any fixed similarity level $\mathrm{Sim}$ and per-class sample size $N$, the relative advantage of multi-sources training with a larger $K$ (the green bars) is larger than that with a smaller $K$ (the nearby orange bars). 
Additionally, for any fixed $K$ and $N$, the relative advantage of multi-sources training with a larger distribution similarity is larger than that with a smaller distribution similarity (as shown through the dashed lines). 
These results support our theoretical insights in \cref{sec:instantiations} that the number of sources and similarity among source distributions improves the advantage of multi-source training.

\begin{table}[t]
\begin{center}
        \centering
        \caption{Average FID for single-source and multi-source training. 
        Under different amounts of classes $K$, similarity level $\mathrm{Sim}$, and per-class sample size $N$, multi-source training generally achieves lower average FID than that of single-source training, which is consistent with our theoretical guarantees derived in \cref{sec:general_guarantee_for_multi}.
        }
        \vskip 0.15in
        \label{tab:real-world_fid}
        \begin{tabular}{ccccc}
            \toprule
             $N$ & $\mathrm{Sim}$ & $K$ & \makecell{Avg. FID $\downarrow$ \\ (Single)} & \makecell{Avg. FID $\downarrow$\\ (Multi)}\\
            \midrule
            \multirow{4}{*}{500} & \multirow{2}{*}{1} & 3  & 30.03 & \textbf{29.94} \\
                                 &                    & 10 & 30.18 & \textbf{29.28} \\
                                 & \multirow{2}{*}{2} & 3  & 32.69 & \textbf{30.69} \\
                                 &                    & 10 & 30.54 & \textbf{28.75} \\
            \midrule
            \multirow{4}{*}{1000} & \multirow{2}{*}{1} & 3  & 28.01 & \textbf{26.41} \\
                                  &                    & 10 & 27.49 & \textbf{25.84} \\
                                  & \multirow{2}{*}{2} & 3  & 30.58 & \textbf{29.35} \\
                                  &                    & 10 & 29.01 & \textbf{27.81} \\
            \bottomrule
        \end{tabular}

\end{center}
\vskip -0.2in
\end{table}

\section{Related works}
\label{sec:related_works}

\paragraph{Distribution estimation guarantee for MLE.}
Classical approaches investigate distribution estimation for MLE in Hellinger distance based on the bracketing number and the uniform law of large numbers from empirical process theory~\cite{wong_1995_aos_likelihood-ratios,geer_2000_book_empiricalprocess}, 
which yields high-probability bounds of similar order as \cref{thm:TV_upper_of_conditional_MLE}.
\citet{ge_2024_iclr_unsupervised} utilizes the analysis to derive TV error bounds under the realizable assumption.
We further adapt their techniques to conditional generative modeling by introducing the upper bracketing number to quantify the complexity of conditional distribution space in Definition~\ref{def:upper_bracketing_number} and modify the proofs to handle conditional MLE in \cref{app:proof_thm:TV_upper_of_conditional_MLE}.

\begin{figure}[t]
\begin{center}
    \centering
    \includegraphics[height=0.2\textheight]{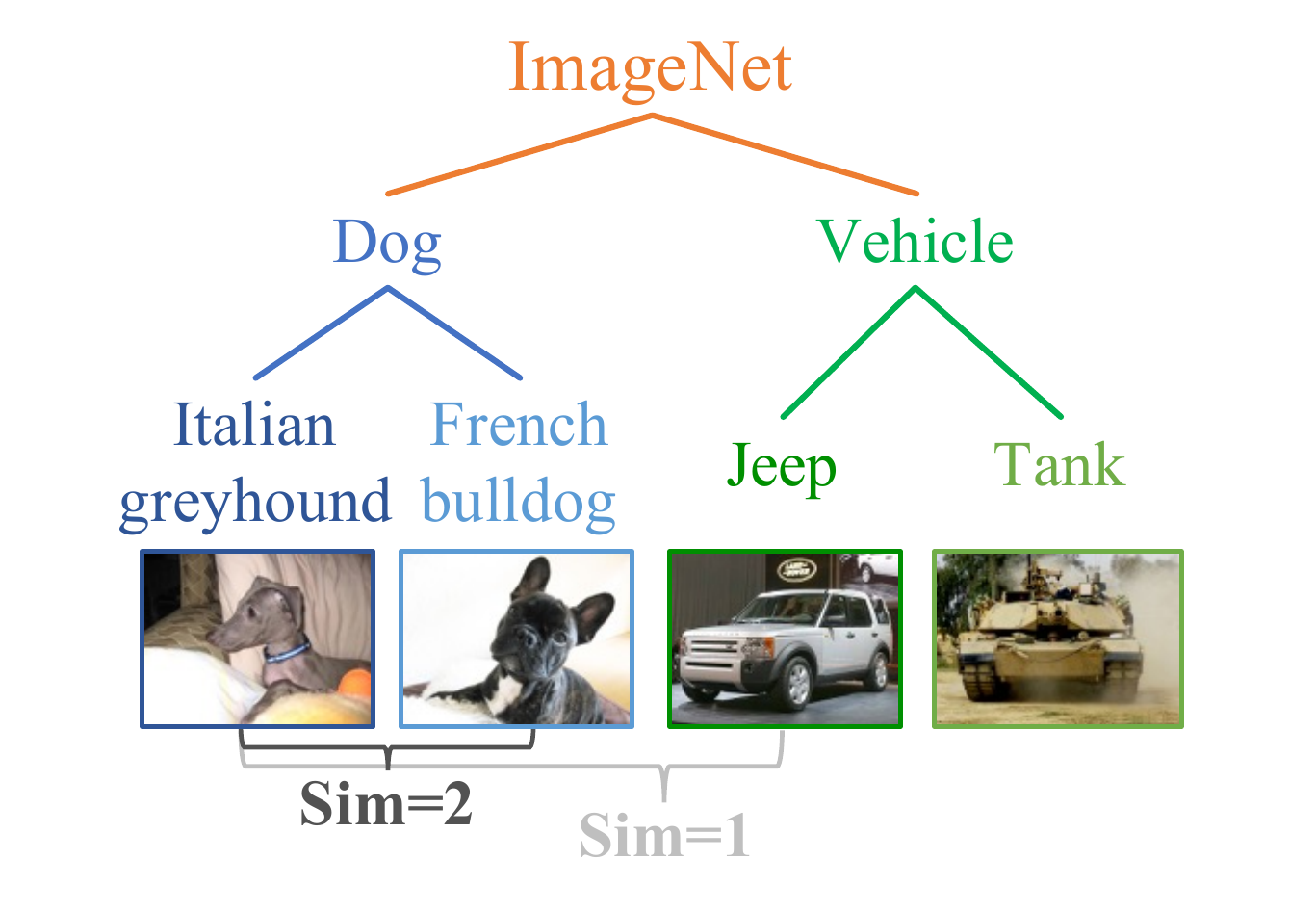}
    \vspace{-8pt}
    \caption{Similarity level in ILSVRC2012 training set.}
    \label{fig:hierachy}
\end{center}
\vskip -0.1in

\begin{center}
    \centering
    \hspace{-18pt}
    \begin{minipage}{0.5\linewidth} 
        \centering
        \includegraphics[height=0.18\textheight]{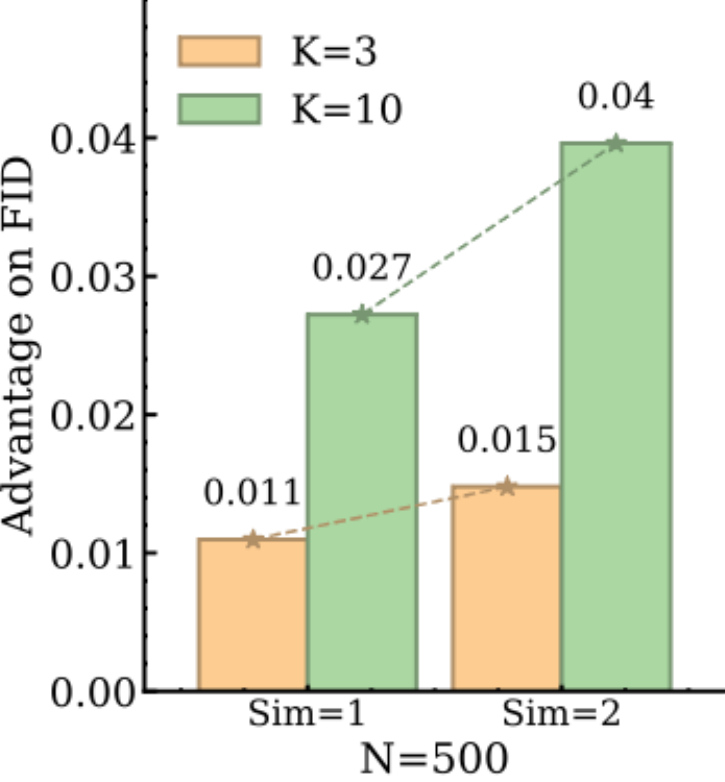}
    \end{minipage}
    \hspace{-10pt}
    \begin{minipage}{0.5\linewidth} 
        \centering
        \includegraphics[height=0.185\textheight]{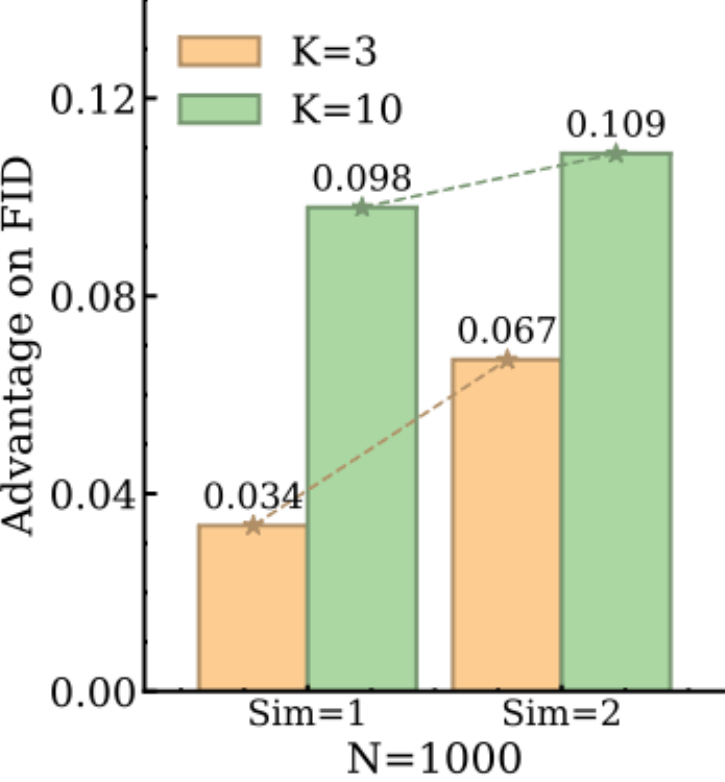}
    \end{minipage}
    \caption{Relative advantage of multi-source training.
    For any fixed similarity level $\mathrm{Sim}$ and per-class sample size $N$, a \textcolor{xkcdLightGreen}{larger $K$} yields a greater FID improvement than a \textcolor{orange}{smaller $K$}.  
    For any fixed $K$ and $N$, higher distribution similarity leads to greater FID improvement (illustrated by dashed lines).  
    These results support the theoretical findings in \cref{sec:instantiations}.
    }
    \label{fig:figure2}
\end{center}
\vskip -0.2in
\end{figure}

\paragraph{Theory on multi-task learning.}
Multi-task learning is a well-studied topic in supervised learning~\cite{caruana_1997_ml_multitask,baxter_2000_jair_inductivebias}.
It typically benefits from similarities across tasks, sharing some commonality with multi-source training. 
However, theoretical analyses in supervised learning often assume a bounded objective~\cite{ben-david_2008_ml_notiontaskrelatedness,maurer_2016_jmlr_multitask,tripuraneni_2020_nips_transfer}, whereas our MLE analysis imposes no such restriction.

\paragraph{Advanced theory on generative models.} 
Among generative models based on (approximate) MLE~\cite{lecun_2006_tutorial_ebm,uria_2016_jmlr_nade,Ho_2020_nips_ddpm}, diffusion models have been extensively studied theoretically on its score approximation, sampling behavior, distribution estimation, and scalability~\cite{oko_2023_icml_diffusionminimax,chen_2023_icml_diffusion,chen_2023_iclr_samplingeasy,fu_2024_arxiv_conditional-diffusion,zheng2025scaling}. 
This paper focuses on distribution estimation for general conditional generative modeling. 
Incorporating existing literature could be a promising direction for future work.

\section{Conclusion and discussion}

This paper provides the first attempt to rigorously analyze the conditional generative modeling on multiple data sources from a distribution estimation perspective.
In particular, we establish a general estimation error bound in average TV distance under the realizable assumption based on the bracketing number of the conditional distribution space.
When source distributions share parametric similarities, multi-source training has a provable advantage against single-source training by reducing the bracketing number.
We further instantiate the general theory on three specific models to obtain concrete error bounds. 
To achieve this, novel bracketing number bounds for ARMs and EBMs are established. 
The results show that the number of data sources and the similarity between source distributions enhance the benefits of multi-source training. 
Simulations and real-world experiments support our theoretical findings.

Our theoretical setting differs from practice in some aspects, e.g., language models have no explicit conditions, and image generation models are commonly conditioned on descriptive text involving multiple conditions.
However, our abstraction provides a simplified framework that preserves the core properties of multi-source training and isolates how individual source distributions are learned.
Moreover, recent studies suggest adding source labels, such as domain names, at the start of training text for language models can enhance performance~\cite{allen-zhuL_2024_icml_physicsofllm3-3,gao_2025_arxiv_metadata}, which may become a standard practice in the future.

\section*{Acknowledgments}
This work was supported by NSF of China (Nos. 92470118, 62206159); Beijing Nova Program (20220484044); Beijing Natural Science Foundation (L247030); Major Innovation \& Planning Interdisciplinary Platform for the ``Double-First Class" Initiative, Renmin University of China; the Fundamental Research Funds for the Central Universities, and the Research Funds of Renmin University of China (22XNKJ13); the Natural Science Foundation of Shandong Province (ZR2022QF117), the Fundamental Research Funds of Shandong University. G. Wu was also sponsored by the TaiShan Scholars Program (NO.tsqn202306051).



\section*{Impact Statement}

This paper presents work whose goal is to advance the field of  Machine Learning. There are many potential societal consequences of our work, none of which we feel must be specifically highlighted here.

\bibliography{main}
\bibliographystyle{icml2025}

\newpage
\appendix
\onecolumn
\section{Proofs for \cref{sec:general_guarantee_for_multi}}
\label{app:proofs_for_general_guarantee_for_multi}

\subsection{Proof of \cref{thm:TV_upper_of_conditional_MLE}}
\label{app:proof_thm:TV_upper_of_conditional_MLE}

\begin{proof}[Proof of \cref{thm:TV_upper_of_conditional_MLE}]

This theorem applies to both discrete and continuous random variables, while we use integration notation in the proof for generality. 
In the following, we first present an elementary inequality (in \cref{eq:upper_bound_of_one_minus_sqrt_of_product}) which serves as a toolkit for the subsequent derivations. Then we decompose the TV distance and derive its complexity-based upper bound (in \cref{eq:upper_bound_of_expected_TV_error_with_general_epsilon}) using the former inequality. Finally, after specifying certain constants in this upper bound, a clearer order w.r.t. $n$ is revealed (in \cref{eq:upper_bound_of_expected_TV_error}).

\paragraph{Intermediate result induced by union bound.}
Let $\epsilon$ be a real number that $\epsilon>0$ and $\sfp$ be an integer that $1\leq \sfp\leq \infty$. Let $\cB$ be an $\epsilon$-upper bracket of $\cP_{X\vert Y}$ w.r.t. $L^1(\sX)$ such that $\abs{\cB}=\br\paren*{\epsilon;\cP_{X\vert Y},L^1(\sX)}$. 

According to the minimum cardinality requirement, we obtain a proposition of $\cB$ that: for any $p^\prime\in\cB$, $p^\prime(\vx,y)\geq 0$ on $\sX \times \sY$. 
Let's first consider $\prod_{i=1}^{n}\sqrt{\frac{p^\prime(\vx_i, y_i)}{p^*_{X\vert Y}(\vx_i \vert y_i)}}$ as a random variable on $S$, where we suppose $p^*_{X, Y}(\vx_i, y_i) > 0$ since $(\vx_i, y_i)$ are sampled from $p^*_{X, Y}$ and thus $p^*_{X|Y}(\vx_i | y_i)\neq 0$.
By applying the \emph{Markov inequality}, we have: given any $0<\delta^\prime<1$,  
\begin{align}
\label{eq:markov_for_fix_p_prime}
\PR_{S}\paren*{\prod_{i=1}^{n}\sqrt{\frac{p^\prime(\vx_i, y_i)}{p^*_{X\vert Y}(\vx_i \vert y_i)}}\geq\frac{1}{\delta^\prime}\E_{S}\bracket*{\prod_{i=1}^{n}\sqrt{\frac{p^\prime(\vx_i, y_i)}{p^*_{X\vert Y}(\vx_i \vert y_i)}}}} \leq \delta^\prime.
\end{align}
Applying the \emph{union bound} on all $p^\prime\in\cB$, we further have:
\begin{align}
& \PR_{S}\paren*{\forall p^\prime\in\cB, \prod_{i=1}^{n}\sqrt{\frac{p^\prime(\vx_i, y_i)}{p^*_{X\vert Y}(\vx_i \vert y_i)}}<\frac{1}{\delta^\prime}\E_{S}\bracket*{\prod_{i=1}^{n}\sqrt{\frac{p^\prime(\vx_i, y_i)}{p^*_{X\vert Y}(\vx_i \vert y_i)}}}} \notag \\
= & 1-\PR_{S}\paren*{\exists p^\prime\in\cB, \prod_{i=1}^{n}\sqrt{\frac{p^\prime(\vx_i, y_i)}{p^*_{X\vert Y}(\vx_i \vert y_i)}}\geq\frac{1}{\delta^\prime}\E_{S}\bracket*{\prod_{i=1}^{n}\sqrt{\frac{p^\prime(\vx_i, y_i)}{p^*_{X\vert Y}(\vx_i \vert y_i)}}}} \notag \\
= & 1-\PR_{S}\paren*{\bigcup_{p^\prime\in\cB}\curl*{ \prod_{i=1}^{n}\sqrt{\frac{p^\prime(\vx_i, y_i)}{p^*_{X\vert Y}(\vx_i \vert y_i)}}\geq\frac{1}{\delta^\prime}\E_{S}\bracket*{\prod_{i=1}^{n}\sqrt{\frac{p^\prime(\vx_i, y_i)}{p^*_{X\vert Y}(\vx_i \vert y_i)}}}}}\notag \\
\geq & 1-\sum_{p^\prime\in\cB}\PR_{S}\paren*{ \prod_{i=1}^{n}\sqrt{\frac{p^\prime(\vx_i, y_i)}{p^*_{X\vert Y}(\vx_i \vert y_i)}}\geq\frac{1}{\delta^\prime}\E_{S}\bracket*{\prod_{i=1}^{n}\sqrt{\frac{p^\prime(\vx_i, y_i)}{p^*_{X\vert Y}(\vx_i \vert y_i)}}}} \tag{by \emph{union bound}} \\
\geq & 1- \br\paren*{\epsilon;\cP_{X\vert Y},L^1(\sX)}\delta^\prime. \tag{by \cref{eq:markov_for_fix_p_prime}}
\end{align}
By denoting that $\delta \coloneqq \br\paren*{\epsilon;\cP_{X\vert Y},L^1(\sX)}\delta^\prime$, we have: it holds with probability at least $1-\delta$ that for all $p^\prime\in\cB$,
\begin{align*}
\prod_{i=1}^{n}\sqrt{\frac{p^\prime(\vx_i, y_i)}{p^*_{X\vert Y}(\vx_i \vert y_i)}}<\frac{\br\paren*{\epsilon;\cP_{X\vert Y},L^1(\sX)}}{\delta} \E_{S}\bracket*{\prod_{i=1}^{n}\sqrt{\frac{p^\prime(\vx_i, y_i)}{p^*_{X\vert Y}(\vx_i \vert y_i)}}}.
\end{align*}

Taking logarithms at both sides, we have 
\begin{align}
\frac{1}{2}\sum_{i=1}^{n} \log\frac{p^\prime(\vx_i, y_i)}{p^*_{X\vert Y}(\vx_i \vert y_i)}
& \leq \log \E_{S}\bracket*{\prod_{i=1}^{n}\sqrt{\frac{p^\prime(\vx_i, y_i)}{p^*_{X\vert Y}(\vx_i \vert y_i)}}} + \log\frac{\br\paren*{\epsilon;\cP_{X\vert Y},L^1(\sX)}}{\delta} \notag \\
& = \log \prod_{i=1}^{n}\E_{(\vx_i,y_i)\sim p^*_{X,Y}}\bracket*{\sqrt{\frac{p^\prime(\vx_i, y_i)}{p^*_{X\vert Y}(\vx_i \vert y_i)}}} + \log\frac{\br\paren*{\epsilon;\cP_{X\vert Y},L^1(\sX)}}{\delta} \tag{$\{x_i\}_{i=1}^n$ are i.i.d. sampled from $p^*_X$} \\
& = n \log \E_{X,Y}\bracket*{\sqrt{\frac{p^\prime(\vx, y)}{p^*_{X\vert Y}(\vx \vert y)}}} + \log\frac{\br\paren*{\epsilon;\cP_{X\vert Y},L^1(\sX)}}{\delta} \notag \\
& = n \log \E_Y\bracket*{\E_{X\vert Y}\bracket*{\sqrt{\frac{p^\prime(\vx, y)}{p^*_{X\vert Y}(\vx \vert y)}}}} + \log\frac{\br\paren*{\epsilon;\cP_{X\vert Y},L^1(\sX)}}{\delta} \notag \\
& = n \log \E_Y\bracket*{\int_{\sX}p^*_{X\vert Y}(\vx \vert y)\sqrt{\frac{p^\prime(\vx, y)}{p^*_{X\vert Y}(\vx \vert y)}}d\vx} + \log\frac{\br\paren*{\epsilon;\cP_{X\vert Y},L^1(\sX)}}{\delta} \notag\\ 
& = n \log \E_Y\bracket*{\int_{\sX}\sqrt{p^\prime(\vx, y)p^*_{X\vert Y}(\vx \vert y)}d\vx} + \log\frac{\br\paren*{\epsilon;\cP_{X\vert Y},L^1(\sX)}}{\delta}. \notag 
\end{align}
As $\log x \leq x - 1$ for all $x > 0$, the inequality can be further transformed into
\begin{align}
\label{eq:intermediate_by_union_bound}
    \frac{1}{2}\sum_{i=1}^{n} \log\frac{p^\prime(\vx_i, y_i)}{p^*_{X\vert Y}(\vx_i \vert y_i)} \leq n \paren*{\E_Y\bracket*{\int_{\sX}\sqrt{p^\prime(\vx, y)p^*_{X\vert Y}(\vx \vert y)}d\vx}-1} + \log\frac{\br\paren*{\epsilon;\cP_{X\vert Y},L^1(\sX)}}{\delta}.
\end{align}

\paragraph{Elementary inequality for MLE estimators.}
Since the real conditional distribution $p^*_{X\vert Y}$ is in $\cP_{X\vert Y}$, for the likelihood maximizers $\hat{p}_{X\vert Y}\in\cP_{X\vert Y}$, we have $L_S(\hat{p}_{X\vert Y})=\prod_{i=1}^n \hat{p}_{X\vert Y}(\vx_i\vert y_i)\geq L_S(p^*_{X\vert Y})=\prod_{i=1}^n p^*_{X\vert Y}(\vx_i\vert y_i)$, and thus $\frac{1}{2}\sum_{i=1}^n \log \frac{\hat{p}_{X\vert Y}(\vx_i\vert y_i)}{p^*_{X\vert Y}(\vx_i\vert y_i)}=\frac{1}{2}\log \frac{\prod_{i=1}^n\hat{p}_{X\vert Y}(\vx_i\vert y_i)}{\prod_{i=1}^np^*_{X\vert Y}(\vx_i\vert y_i)}\geq \frac{1}{2}\log 1 = 0$.
According to the definition of upper bracketing number, there exists some $\hat{p}^\prime\in\cB$ such that given any $y \in \sY$, it holds that: (i) $\forall x\in \sX, \hat{p}^\prime(\vx,y)\geq \hat{p}_{X\vert Y}(\vx\vert y)$, and (ii) $\norm{\hat{p}^\prime(\cdot,y)-\hat{p}_{X\vert Y}(\cdot\vert y)}_{L^1(\sX)}=\int_{\sX}\abs{\hat{p}^\prime(\vx,y)-\hat{p}_{X\vert Y}(\vx\vert y)}d\vx \leq \epsilon.$ 
Applying (i), we have: 
\begin{align*}
\frac{1}{2}\sum_{i=1}^n \log \frac{\hat{p}^\prime(\vx_i,y_i)}{p^*_{X\vert Y}(\vx_i\vert y_i)} \geq \frac{1}{2}\sum_{i=1}^n \log \frac{\hat{p}_{X\vert Y}(\vx_i\vert y_i)}{p^*_{X\vert Y}(\vx_i\vert y_i)}\geq 0.
\end{align*} 
Combining this with \cref{eq:intermediate_by_union_bound} and rearranging the terms, we have: it holds with at least probability $1-\delta$ that
\begin{align}
\label{eq:upper_bound_of_one_minus_sqrt_of_product}
1-\E_Y\bracket*{\int_{\sX}\sqrt{p^\prime(\vx, y)p^*_{X\vert Y}(\vx \vert y)}d\vx}\leq\frac{1}{n}\log\frac{\br\paren*{\epsilon;\cP_{X\vert Y},L^1(\sX)}}{\delta}.
\end{align}
This serves as an elementary toolkit for deriving the subsequent upper bounds.

\paragraph{Decomposing the square of the TV distance.}
Recalling that $\mathrm{TV}(\hat{p}_{X\vert Y}, p^*_{X\vert Y}) = \frac{1}{2}\int_{\sX}\abs{\hat{p}_{X\vert Y}(\vx \vert y)- p^*_{X\vert Y}(\vx \vert y)}d\vx$, we will decompose its square and then bound each term sequentially.
First, we use the above $\hat{p}^\prime(\vx, y)$ as an intermediate term to decompose the square of $2\mathrm{TV}(\hat{p}_{X\vert Y}, p^*_{X\vert Y})$ into parts that can be effectively upper bounded:
\begin{align*}
&\paren*{2\mathrm{TV}(\hat{p}_{X\vert Y}, p^*_{X\vert Y})}^2=\paren*{\int_{\sX}\abs{\hat{p}_{X\vert Y}(\vx \vert y)- p^*_{X\vert Y}(\vx \vert y)}d\vx}^2 \\
= & \underbrace{\paren*{\int_{\sX}\abs{\hat{p}_{X\vert Y}(\vx \vert y)- p^*_{X\vert Y}(\vx \vert y)}d\vx}^2 - \paren*{\int_{\sX}\abs{\hat{p}^\prime(\vx, y)- p^*_{X\vert Y}(\vx \vert y)}d\vx}^2}_{\text{(I)}}+ \underbrace{\paren*{\int_{\sX}\abs{\hat{p}^\prime(\vx, y)- p^*_{X\vert Y}(\vx \vert y)}d\vx}^2}_{\text{(II)}}.
\end{align*}
For (I), we have 
\begin{align*}
& \paren*{\int_{\sX}\abs{\hat{p}_{X\vert Y}(\vx \vert y)- p^*_{X\vert Y}(\vx \vert y)}d\vx}^2 - \paren*{\int_{\sX}\abs{\hat{p}^\prime(\vx, y)- p^*_{X\vert Y}(\vx \vert y)}d\vx}^2\\
= & \paren*{\int_{\sX}\abs{\hat{p}_{X\vert Y}(\vx \vert y)- p^*_{X\vert Y}(\vx \vert y)} + \abs{\hat{p}^\prime(\vx, y)- p^*_{X\vert Y}(\vx \vert y)}d\vx} \paren*{\int_{\sX}\abs{\hat{p}_{X\vert Y}(\vx \vert y)- p^*_{X\vert Y}(\vx \vert y)} - \abs{\hat{p}^\prime(\vx, y)- p^*_{X\vert Y}(\vx \vert y)}d\vx} \\
\leq & \paren*{\int_{\sX}\abs{\hat{p}_{X\vert Y}(\vx \vert y)}+\abs{p^*_{X\vert Y}(\vx \vert y)} + \abs{\hat{p}^\prime(\vx, y)-\hat{p}_{X\vert Y}(\vx \vert y)}+\abs{\hat{p}_{X\vert Y}(\vx \vert y)}+\abs{p^*_{X\vert Y}(\vx \vert y)}d\vx} \paren*{\int_{\sX}\abs{\hat{p}_{X\vert Y}(\vx \vert y)- \hat{p}^\prime(\vx, y)}d\vx}\\
\leq & (\epsilon + 4)\epsilon. 
\end{align*}

The first inequality holds for the \emph{triangle inequality} $\abs{a+b}\leq\abs{a}+\abs{b}$ and the \emph{reverse triangle inequality} $\abs{\abs{a}-\abs{b}}\leq\abs{a-b}$. 
The second inequality holds for the normalization property of conditional distributions ($\int_{\sX}\abs{\hat{p}_{X\vert Y}(\vx \vert y)}d\vx$ and $\int_{\sX}\abs{p^*_{X\vert Y}(\vx \vert y)}d\vx$ equal $1$) and the property of the $\epsilon$-upper bracket ($\int_{\sX}\abs{\hat{p}^\prime(\vx,y)-\hat{p}_{X\vert Y}(\vx\vert y)}d\vx \leq \epsilon$). 

For (II), we have 
\begin{align}
& \paren*{\int_{\sX}\abs{\hat{p}^\prime(\vx, y)- p^*_{X\vert Y}(\vx \vert y)}d\vx}^2 \notag \\
\leq & \paren*{\int_{\sX}\paren*{\sqrt{\hat{p}^\prime(\vx, y)}+\sqrt{p^*_{X\vert Y}(\vx \vert y)}}^2dx} \paren*{\int_{\sX}\paren*{\sqrt{\hat{p}^\prime(\vx, y)}-\sqrt{p^*_{X\vert Y}(\vx \vert y)}}^2dx} \tag{by \emph{Cauchy–Schwarz inequality}} \\
\leq & \paren*{\int_{\sX}2\paren*{\hat{p}^\prime(\vx, y)+p^*_{X\vert Y}(\vx \vert y)}d\vx} \paren*{\int_{\sX}{\hat{p}^\prime(\vx, y)+p^*_{X\vert Y}(\vx \vert y)-2\sqrt{\hat{p}^\prime(\vx, y)p^*_{X\vert Y}(\vx \vert y)}}d\vx} \tag{by $(a+b)^2\leq 2(a^2+b^2)$} \\
=& 2\paren*{\int_{\sX}{\hat{p}^\prime(\vx, y)-\hat{p}_{X\vert Y}(\vx \vert y)+\hat{p}_{X\vert Y}(\vx \vert y)+p^*_{X\vert Y}(\vx \vert y)}d\vx} \notag\\
&\paren*{\int_{\sX}{\hat{p}^\prime(\vx, y)-\hat{p}_{X\vert Y}(\vx \vert y)+\hat{p}_{X\vert Y}(\vx \vert y)+p^*_{X\vert Y}(\vx \vert y)-2\sqrt{\hat{p}^\prime(\vx, y)p^*_{X\vert Y}(\vx \vert y)}}d\vx} \notag \\
\leq & 2(\epsilon + 2) \paren*{\epsilon + 2 - 2 \int_{\sX}\sqrt{\hat{p}^\prime(\vx, y)p^*_{X\vert Y}(\vx \vert y)}d\vx}. \tag{by $\int_{\sX}\abs{\hat{p}_{X\vert Y}(\vx \vert y)}d\vx=\int_{\sX}\abs{p^*_{X\vert Y}(\vx \vert y)}d\vx=1$ and $\int_{\sX}\abs{\hat{p}^\prime(\vx, y)-\hat{p}_{X\vert Y}(\vx \vert y)}d\vx \leq \epsilon$} 
\end{align}

Putting together (I) and (II), we get: 
\begin{align}
\label{eq:upper_bound_of_TV}
\mathrm{TV}(\hat{p}_{X\vert Y}, p^*_{X\vert Y})
&=\frac{1}{2}\sqrt{\paren*{\int_{\sX}\abs{\hat{p}_{X\vert Y}(\vx \vert y)- p^*_{X\vert Y}(\vx \vert y)}d\vx}^2}\notag\\
&\leq \frac{1}{2}\sqrt{(\epsilon + 4)\epsilon+2(\epsilon + 2) \paren*{\epsilon + 2 - 2 \int_{\sX}\sqrt{\hat{p}^\prime(\vx, y)p^*_{X\vert Y}(\vx \vert y)}d\vx}}.
\end{align}

\paragraph{Bounding the average TV error.}
Based on the above results, we upper bound the average TV error (defined in \cref{eq:expected_TV_distance}) of $\hat{p}_{X\vert Y}$ as follows:
\begin{align*}
\rtv(\hat{p}_{X\vert Y})
= & \E_Y\bracket*{\mathrm{TV}(\hat{p}_{X\vert Y}, p^*_{X\vert Y})}\\
\leq & \frac{1}{2}\E_Y\bracket*{\sqrt{(\epsilon + 4)\epsilon+2(\epsilon + 2) \paren*{\epsilon + 2 - 2 \int_{\sX}\sqrt{\hat{p}^\prime(\vx, y)p^*_{X\vert Y}(\vx \vert y)}d\vx}}} \tag{by \cref{eq:upper_bound_of_TV}} \\
\leq & \frac{1}{2}\sqrt{\E_Y\bracket*{(\epsilon + 4)\epsilon+2(\epsilon + 2) \paren*{\epsilon + 2 - 2 \int_{\sX}\sqrt{\hat{p}^\prime(\vx, y)p^*_{X\vert Y}(\vx \vert y)}d\vx}}} \tag{by concavity of $f(x)=\sqrt{x}$ and \emph{Jensen's inequality}} \\
= & \frac{1}{2}\sqrt{(\epsilon + 4)\epsilon +  2(\epsilon + 2)\paren*{\epsilon + 2\paren*{1 -  \E_Y\bracket*{\int_{\sX}\sqrt{\hat{p}^\prime(\vx, y)p^*_{X\vert Y}(\vx \vert y)}d\vx}}}} \tag{by the linearity of expectation}.
\end{align*}

Recalling the elementary inequality we derived formerly in \cref{eq:upper_bound_of_one_minus_sqrt_of_product}, we have: it holds with at least probability $1-\delta$ that
\begin{align}
\label{eq:upper_bound_of_expected_TV_error_with_general_epsilon}
\rtv(\hat{p}_{X\vert Y})\leq \frac{1}{2}\sqrt{(\epsilon + 4)\epsilon +  2(\epsilon + 2)\paren*{\epsilon + \frac{2}{n}\log\frac{\br\paren*{\epsilon;\cP_{X\vert Y},L^1(\sX)}}{\delta}}}.
\end{align}

Recalling that $0\leq \delta\leq \frac{1}{2}$ and for non-empty $\cP_{X\vert Y}$, $\br\paren*{\epsilon;\cP_{X\vert Y},L^1(\sX)}\geq 1$, we have $\br\paren*{\epsilon;\cP_{X\vert Y},L^1(\sX)}/\delta\geq 2\geq e^{\frac{1}{2}}$.
Taking $\epsilon = 1/n$ in \cref{eq:upper_bound_of_expected_TV_error_with_general_epsilon}, it then holds with probability at least $1-\delta$ that
\begin{align}
\rtv(\hat{p}_{X\vert Y})
& \leq \frac{1}{2}\sqrt{(\frac{1}{n} + 4)\frac{1}{n} + 2(\frac{1}{n} + 2) \paren*{\frac{1}{n} + \frac{2}{n} \log \frac{\br\paren*{\frac{1}{n};\cP_{X\vert Y},L^1(\sX)}}{\delta}}}\notag\\
&\leq \frac{1}{2}\sqrt{\frac{5}{n} + 6 \paren*{\frac{1}{n} + \frac{2}{n} \log \frac{\br\paren*{\frac{1}{n};\cP_{X\vert Y},L^1(\sX)}}{\delta}}} \tag{by $\frac{1}{n} \leq 1$}\\
&\leq \frac{1}{2}\sqrt{\frac{10}{n} \log \frac{\br\paren*{\frac{1}{n};\cP_{X\vert Y},L^1(\sX)}}{\delta} + 6 \paren*{\frac{4}{n} \log \frac{\br\paren*{\frac{1}{n};\cP_{X\vert Y},L^1(\sX)}}{\delta}}}\tag{by $\frac{1}{n}\leq \frac{2}{n} \log \frac{\br\paren*{\frac{1}{n};\cP_{X\vert Y},L^1(\sX)}}{\delta}$}\\
&=\frac{1}{2}\sqrt{\frac{34}{n} \log \frac{\br\paren*{\frac{1}{n};\cP_{X\vert Y},L^1(\sX)}}{\delta}}\leq 3\sqrt{\frac{1}{n}{ \log \frac{\br\paren*{\frac{1}{n};\cP_{X\vert Y},L^1(\sX)}}{\delta}}}\notag\\
&=3\sqrt{\frac{1}{n}\paren*{ \log \br\paren*{\frac{1}{n};\cP_{X\vert Y},L^1(\sX)}+\log\frac{1}{\delta}}}. \label{eq:upper_bound_of_expected_TV_error}
\end{align}
Until now, we have completed the proof of this theorem.

\end{proof}

\subsection{Proof of Proposition~\ref{prop:multi_has_bracket_smaller_then_single}}
\label{app:prop:multi_has_bracket_smaller_then_single}
\begin{proof}[Proof of Proposition~\ref{prop:multi_has_bracket_smaller_then_single}]
    As defined in \cref{sec:formulation_for_conditional_generative_modeling}, it holds that $\cP_{X\vert Y}^{\multi}\subset\cP_{X\vert Y}^{\single}$. 
    Then, for any $p_{X\vert Y}^{\multi}\in \cP_{X\vert Y}^{\multi}$, there exists some $p_{X\vert Y}^{\single}\in\cP_{X\vert Y}^{\single}$ such that $p_{X\vert Y}^{\single}=p_{X\vert Y}^{\multi}$. 
    Given any $\epsilon>0$ and $1\leq \sfp \leq \infty$, let $\cB^{\single}$ be a $\epsilon$-upper bracket w.r.t. $L^\sfp(\sX)$ for $\cP_{X\vert Y}^{\single}$ such that $\abs{\cB^{\single}}=\br\paren*{\epsilon;\cP_{X\vert Y}^{\single},L^\sfp(\sX)}$. 
    According to the definition of $\epsilon$-upper bracket (as in Definition~\ref{def:upper_bracketing_number}), there exists some $p^\prime\in \cB^{\single}$ such that given any $y \in \sY$, it holds that: $\forall \vx\in \sX, p^\prime(\vx,y)\geq p_{X\vert Y}^{\single}(\vx\vert y)=p_{X\vert Y}^{\multi}(\vx\vert y)$, and $\norm{p^\prime(\cdot,y)-p_{X\vert Y}^{\multi}(\cdot\vert y)}_{L^\sfp(\sX)}=\norm{p^\prime(\cdot,y)-p_{X\vert Y}^{\single}(\cdot\vert y)}_{L^\sfp(\sX)}\leq \epsilon.$ 
    Therefore, $\cB^{\single}$ is also a $\epsilon$-upper bracket w.r.t. $L^\sfp(\sX)$ for $\cP_{X\vert Y}^{\multi}$, and thus $\br\paren*{\epsilon;\cP_{X\vert Y}^{\multi},L^\sfp(\sX)}\leq \abs{\cB^{\single}}=\br\paren*{\epsilon;\cP_{X\vert Y}^{\single},L^\sfp(\sX)}$.
\end{proof}

\section{Proofs for \cref{sec:instantiate_conditional_gaussian}}
\label{app:proof_of_gaussian}

\subsection{Bracketing number of conditional Gaussian distribution space}

According to  Theorem~\ref{thm:TV_upper_of_conditional_MLE}, to derive the upper bound of average TV error, we need to measure the upper bracketing number for the conditional Gaussian distribution space. This result mainly follows the bracketing number analysis of Gaussian distribution space in Lemma C.5 in~\cite{ge_2024_iclr_unsupervised}, and slightly modifies it to conditional Gaussian distribution space.
 
\begin{theorem}[Bracketing number upper bound for conditional Gaussian distribution space under multi-source training]
\label{thm:bracket_gaussian}
Let $B$ be a constant that $0< B< \infty$, suppose that $\Phi = [-B,B]^{d_1}$, $\Psi = [-B,B]^{d-d_1}$, and conditional distributions in $\cP_{X\vert Y}^{\multi}$ are formulated as in \cref{eq:conditional_density_gaussian}. Then, given any $0<\epsilon\leq 1$, the $\epsilon$-upper bracketing number of $\cP_{X\vert Y}^{\multi}$ w.r.t. $L^1(\sX)$ satisfies
\begin{align*}
    \br\paren*{\epsilon;\cP_{X\vert Y}^{\multi},L^1(\sX)}\leq \paren*{{\frac{2(1+d)B}{\epsilon}}+1}^{(K-1)d_1+d}.
\end{align*}
\end{theorem}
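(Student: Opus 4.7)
The plan is to reduce the bracketing number bound to a volumetric count of a discretization of the parameter space. Under the multi-source setting, the family $\cP_{X\vert Y}^{\multi}$ is completely parameterized by the tuple $\theta = (\phi_1,\dots,\phi_K,\psi) \in [-B,B]^{(K-1)d_1+d}$. I would place an $\ell^\infty$-grid of spacing $\delta \coloneqq \epsilon/(1+d)$ on this box, so that each coordinate admits at most $2B/\delta+1 = 2(1+d)B/\epsilon+1$ grid points and the total number of grid tuples is at most $\paren*{2(1+d)B/\epsilon + 1}^{(K-1)d_1+d}$. It then suffices to attach to each grid tuple $\tilde\theta = (\tilde\phi_1,\dots,\tilde\phi_K,\tilde\psi)$ a single function $p'_{\tilde\theta}(\vx,y)$ that (i) pointwise dominates every $p_{\phi_k,\psi}(\vx\vert k)$ whose parameters fall within the $\delta$-Voronoi cell of $\tilde\theta$, and (ii) has $L^1(\sX)$ distance from each such conditional at most $\epsilon$ for every label $y \in \sY$.

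The construction exploits the product structure of the isotropic Gaussian and the separability of the $\ell^\infty$-ball. Writing $\tilde{\meang}_k \coloneqq (\tilde\phi_k,\tilde\psi)$, I would set
\[
p'_{\tilde\theta}(\vx,k) \coloneqq (2\pi)^{-d/2}\exp\Bigl(-\tfrac12 \sum_{i=1}^d \bigl(\max(0,\, \abs{\vx[i] - \tilde{\meang}_k[i]} - \delta)\bigr)^2 \Bigr),
\]
which equals the pointwise supremum $\sup_{\meang: \norm{\meang-\tilde{\meang}_k}_\infty \leq \delta} (2\pi)^{-d/2}e^{-\norm{\vx-\meang}_2^2/2}$. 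Indeed, for fixed $\vx$, $\norm{\vx-\meang}_2^2 = \sum_i (\vx[i]-\meang[i])^2$ is minimized on the $\ell^\infty$-ball by clipping each $\meang[i]$ into $[\tilde{\meang}_k[i]-\delta,\tilde{\meang}_k[i]+\delta]$. Hence for any $(\phi_k,\psi)$ within the cell, $p_{\phi_k,\psi}(\vx\vert k) \leq p'_{\tilde\theta}(\vx,k)$, establishing the upper-bracket domination uniformly in $y$.

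Since $p'_{\tilde\theta}(\cdot,k) \geq p_{\phi_k,\psi}(\cdot\vert k)$ pointwise and the latter integrates to $1$, the $L^1$ error equals $\int p'_{\tilde\theta}(\vx,k)\,d\vx - 1$. The integral factorizes coordinate-wise, and each one-dimensional factor splits into a flat central region of width $2\delta$ (contributing $2\delta/\sqrt{2\pi}$) and two symmetric Gaussian tails (contributing $1$ jointly), yielding $\int p'_{\tilde\theta}(\vx,k)\,d\vx = (1+2\delta/\sqrt{2\pi})^d$. With $\delta = \epsilon/(1+d)$ and $\epsilon \leq 1$, the bound $(1+x)^d - 1 \leq e^{dx}-1$ together with $d\delta \leq \epsilon$ controls this inflation by $\epsilon$ (up to constants absorbable by a mild rescaling of $\delta$). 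Consequently $\{p'_{\tilde\theta}\}$ indexed by grid tuples is a valid $\epsilon$-upper bracket, giving the announced cardinality.

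I do not foresee a substantial obstacle: the argument is conceptually the standard Gaussian parametric bracketing recipe, and the only delicate point is bookkeeping the constants so that $(1+2\delta/\sqrt{2\pi})^d-1$ is dominated by $\epsilon$ while $2B/\delta+1 \leq 2(1+d)B/\epsilon+1$. The essential structural ingredient\,---\,and what makes the exponent $(K-1)d_1+d$ rather than $Kd$\,---\,is that $\psi$ is shared across sources, so the grid covers the joint parameter space only once rather than once per label, which is precisely the mechanism underlying \cref{prop:multi_has_bracket_smaller_then_single}.
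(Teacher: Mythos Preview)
Your argument is correct in outline and yields the right exponent $(K-1)d_1+d$, but the bracket you build differs from the paper's. You take the pointwise envelope $p'_{\tilde\theta}(\vx,k)=\sup_{\|\meang-\tilde\meang_k\|_\infty\leq\delta}(2\pi)^{-d/2}e^{-\|\vx-\meang\|_2^2/2}$, for which domination is automatic by construction and the $L^1$ mass factorizes cleanly as $(1+2\delta/\sqrt{2\pi})^d$. The paper instead attaches to each grid point a \emph{widened} Gaussian $(2\pi)^{-d/2}\exp\bigl(-\tfrac{c_1}{2}\|\vx-\bar\meang_k\|_2^2+c_2\bigr)$ with reduced precision $c_1=1-\eta<1$ and offset $c_2=d(1-\eta)\eta/2$, where $\eta=\epsilon/(1+d)$, and verifies domination by completing the square in $\vx$. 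Your envelope is conceptually more transparent, while the paper's bracket remains in the Gaussian family; both rest on the same discretization of the $Kd_1+(d-d_1)$--dimensional joint parameter space, which is precisely where the sharing of $\psi$ collapses $Kd$ down to $(K-1)d_1+d$.

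One quantitative remark: with envelope radius equal to the full grid spacing $\delta=\epsilon/(1+d)$, your $L^1$ excess $(1+2\delta/\sqrt{2\pi})^d-1$ can exceed $\epsilon$ by a bounded factor (it tends to $e^{2/\sqrt{2\pi}}-1\approx1.22$ as $d\to\infty$ at $\epsilon=1$), so the ``mild rescaling of $\delta$'' you invoke would actually inflate the base of the stated bound. This is fixable without changing the grid count: a grid of spacing $\delta$ already places every parameter within $\delta/2$ of some grid point, so envelope radius $\delta/2$ suffices, and then the excess $(1+\delta/\sqrt{2\pi})^d-1\leq e^{d\delta/\sqrt{2\pi}}-1\leq e^{\epsilon/\sqrt{2\pi}}-1\leq\epsilon$ for all $\epsilon\in(0,1]$, recovering the exact constant in the statement.
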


\begin{proof}
    According to the assumptions, the conditional distribution space expressed by the parametric estimation model is 
    \begin{align*}
        \cP_{X\vert Y}^{\multi}\!\coloneqq\!\!\curl*{\!p_{X\vert Y}^{\multi}(\vx \vert y)\!=\!\!\prod_{k=1}^K\!\paren*{p_{\phi_k, \psi}(\vx\vert k)}^{\I(y=k)}\!=\!\!\prod_{k=1}^K\!\paren*{(2\pi)^{-\frac{d}{2}}e^{-\frac{1}{2}\norm{\vx-(\phi_k, \psi)}_2^2}}^{\!\I(y=k)}\!\!:\!\phi_k\!\in\! [-B,B]^{d_1}, \!\psi\!\in\! [-B,B]^{d-d_1}\!}.
    \end{align*} 
    For any $p_{X\vert Y}^{\multi}(\vx \vert y)\!=\!\prod_{k=1}^K\paren*{(2\pi)^{-\frac{d}{2}}e^{-\frac{1}{2}\norm{\vx-(\phi_k, \psi)}_2^2}}^{\I(y=k)}\in\cP_{X\vert Y}^{\multi}$, let's first divide the mean vector $(\phi_k, \psi)$ into $\eta$-width grids with a small constant $\eta>0$ (the value of $\eta$ will be specified later): 
    If $(\phi_k)_i \in [j\eta, (j+1)\eta)$ for some $j\in\Z$, let $(\bar{\phi}_k)_i=j\eta$ and $\bar{\phi}_k\coloneqq\paren*{(\bar{\phi}_k)_1, \dots, (\bar{\phi}_k)_{d_1}}$. Similarly, if $(\psi)_i \in [j\eta, (j+1)\eta)$ for some $j\in\Z$, let $(\bar{\psi})_i=j\eta$ and $\bar{\psi}\coloneqq\paren*{(\bar{\psi})_1, \dots, (\bar{\psi})_{d-d_1}}$. In this case, we have $\norm{(\phi_k, \psi)-(\bar{\phi}_k, \bar{\psi})}_2^2\leq d\eta^2$. 
    
    Let
    \begin{align*}
        p^\prime(\vx, y)=\prod_{k=1}^K\paren*{(2\pi)^{-\frac{d}{2}}e^{-\frac{c_1}{2}\norm{\vx-(\bar{\phi}_k, \bar{\psi})}_2^2+c_2}}^{\I(y=k)}.
    \end{align*}
    According to the definition of the bracketing, we want to prove that $p^\prime(\vx, y) \geq  p_{X\vert Y}^{\multi}(\vx \vert y)$. By completing the square w.r.t. $\vx$, we have 
    \begin{align*}
        &-\frac{c_1}{2}\norm{\vx-(\bar{\phi}_k, \bar{\psi})}_2^2+c_2 - \paren*{-\frac{1}{2}\norm{\vx-(\phi_k, \psi)}_2^2}\\
        =&\frac{1}{2}\paren*{(1-c_1)\norm*{\vx+\frac{c_1(\bar{\phi}_k, \bar{\psi})-(\phi_k, \psi)}{1-c_1}}_2^2-\frac{c_1}{1-c_1}\norm*{(\bar{\phi}_k, \bar{\psi})-(\phi_k, \psi)}_2^2+2c_2}.
    \end{align*}
    Further taking $c_1=1-\eta$ and $c_2=d(1-\eta)\eta/2$, we have 
    \begin{align*}
        &(1-c_1)\norm*{\vx+\frac{c_1(\bar{\phi}_k, \bar{\psi})-(\phi_k, \psi)}{1-c_1}}_2^2-\frac{c_1}{1-c_1}\norm*{(\bar{\phi}_k, \bar{\psi})-(\phi_k, \psi)}_2^2+2c_2\\
        =&\eta\norm*{\vx+\frac{c_1(\bar{\phi}_k, \bar{\psi})-(\phi_k, \psi)}{1-c_1}}_2^2-\frac{1-\eta}{\eta}\norm*{(\bar{\phi}_k, \bar{\psi})-(\phi_k, \psi)}_2^2+2c_2 \tag{$c_1=1-\eta$}\\
        \geq & -\frac{1-\eta}{\eta}\norm*{(\bar{\phi}_k, \bar{\psi})-(\phi_k, \psi)}_2^2+2c_2 \tag{$\eta>0$}\\
        \geq & -\frac{1-\eta}{\eta}d\eta^2+2c_2 \tag{$\norm{(\phi_k, \psi)-(\bar{\phi}_k, \bar{\psi})}_2^2\leq d\eta^2$}\\
        = & -d (1-\eta)\eta + d(1-\eta)\eta = 0.
    \end{align*}
    Therefore, it holds that for all $y\in \sY$, 
    \begin{align}
    \label{eq:gaussian_bracket_condition_1}
        \forall \vx\in \sX: p^\prime(\vx, y)\geq p_{X\vert Y}^{\multi}(\vx \vert y).
    \end{align}  
    Moreover, given any $0<\epsilon \leq 1$, we take $\eta=\frac{\epsilon}{1+d}$, and thus $c_1=1-\frac{\epsilon}{1+d}$ and $c_2=\frac{1}{2}(1-\frac{\epsilon}{1+d})\frac{\epsilon}{\frac{1}{d}+1}$. Since $d\in\N$, we have $\eta\leq\frac{1}{2}$ and $c_2\leq\frac{1}{2}$. 
    Then, $\norm{p^\prime(\cdot,y)-p_{X\vert Y}^{\multi}(\cdot\vert y)}_{L^1(\sX)}$ can be bounded as
    \begin{align}
        &\norm{p^\prime(\cdot,y)-p_{X\vert Y}^{\multi}(\cdot\vert y)}_{L^1(\sX)}=\int_{\sX}\abs{p^\prime(\vx, y)-p_{X\vert Y}^{\multi}(\vx \vert y)}d\vx\notag\\
        =&\int_{\sX}p^\prime(\vx, y)d\vx-\int_{\sX}p_{X\vert Y}^{\multi}(\vx \vert y)d\vx
        =\frac{1}{\sqrt{c_1}}e^{c_2}-1\tag{$\int_{\sX} e^{-\frac{1}{2}\norm{\vx}_2^2}d\vx=(2\pi)^{\frac{d}{2}}$}\\
        \leq&\frac{1}{\sqrt{c_1}}(1+2c_2)-1\tag{$e^x\leq 1+2x$ for $x\in[0,\frac{1}{2}]$}\\
        =&\frac{1}{\sqrt{1-\eta}}(1+d(1-\eta)\eta)-1 \tag{$c_1=1-\eta$ and $c_2=d(1-\eta)\eta/2$}\\
        \leq&(1+\eta)(1+d(1-\eta)\eta)-1 \tag{$\frac{1}{\sqrt{1-x}}\leq 1+x$ for $x\in[0,\frac{1}{2}]$}\\
        =&\eta\paren*{1+d(1-\eta^2)}
        \leq\eta\paren*{1+d}
        =\epsilon \label{eq:gaussian_bracket_condition_2}
    \end{align}

    Combining \cref{eq:gaussian_bracket_condition_1} and \cref{eq:gaussian_bracket_condition_2}, we know that for any $p_{X\vert Y}^{\multi}(\vx \vert y)\in\cP_{X\vert Y}^{\multi}$ and $0<\epsilon\leq 1$, there exists some $p^\prime(\vx, y)\in \cB$ such that given any $y \in \sY$, it holds that $\forall \vx\in \sX: p^\prime(\vx, y)\geq p_{X\vert Y}(\vx \vert y),$ and $\norm{p^\prime(\cdot,y)-p_{X\vert Y}(\cdot\vert y)}_{L^\sfp(\sX)}\leq \epsilon$, where 
    \begin{align*}
        \cB\coloneqq\curl*{p^\prime(\vx, y)=\prod_{k=1}^K\paren*{(2\pi)^{-\frac{d}{2}}e^{-\frac{c_1}{2}\norm{\vx-(\bar{\phi}_k, \bar{\psi})}_2^2+c_2}}^{\I(y=k)}:(\bar{\phi}_k)_i, (\bar{\psi})_i\in [-B,B]\cap \eta\Z}
    \end{align*}
    
    Recalling the definition of the upper bracketing number in Definition~\ref{def:upper_bracketing_number}, we know that $\cB$ is an $\epsilon$-upper bracket of $\cP_{X\vert Y}^{\multi}$ w.r.t. $L^1(\sX)$. 
    Therefore, 
    \begin{align*}
        &\br\paren*{\epsilon;\cP_{X\vert Y}^{\multi},L^1(\sX)}\\
        \leq & \abs*{\cB}
        =\abs*{\curl*{\{\bar{\phi}_k\}_{k=1}^K, \bar{\psi}:(\bar{\phi}_k)_i, (\bar{\psi})_i\in [-B,B]\cap \eta\Z}}\\
        \leq & \paren*{\frac{2B}{\eta} +1}^{Kd_1+d-d_1}\\
        =&\paren*{\frac{2(1+d)B}{\epsilon} +1}^{(K-1)d_1+d},
    \end{align*}
    which completes the proof.
\end{proof}

\subsection{Proof of \cref{thm:tv_upper_gaussian}}
\label{app:proof_thm:tv_upper_gaussian}

\begin{proof}[Proof of \cref{thm:tv_upper_gaussian}]
    As $\phi_k^*\in\Phi, \psi^*\in\Psi$, and $\hat{p}_{X\vert Y}^{\multi}$ is the maximizer of likelihood $L_S(p_{X\vert Y})$ in $\cP_{X\vert Y}^{\multi}$, according to \cref{thm:TV_upper_of_conditional_MLE}, we know that 
    \begin{align*}
        \rtv(\hat{p}_{X\vert Y}^{\multi})\leq3\sqrt{\frac{1}{n}\paren*{\log\br\paren*{\frac{1}{n};\cP_{X\vert Y}^{\multi},L^1(\sX)}+\log\frac{1}{\delta}}}.
    \end{align*}
    According to \cref{thm:bracket_gaussian}, it holds that 
    \begin{align*}
        \br\paren*{\frac{1}{n};\cP_{X\vert Y}^{\multi},L^1(\sX)}\leq \paren*{{2(1+d)Bn}+1}^{(K-1)d_1+d}.
    \end{align*}
    Therefore, we obtain the result that 
    \begin{align*}
        \rtv(\hat{p}_{X\vert Y}^{\multi})\leq3\sqrt{\frac{1}{n}\paren*{\paren*{(K-1)d_1+d}\log \paren*{{2(1+d)Bn}+1}+\log\frac{1}{\delta}}}.
    \end{align*}
    Omitting constants about $n,K,d_1,d,B$, and the logarithm term we have $\rtv(\hat{p}_{X\vert Y}^{\multi})=\tilde{\cO}\paren*{\sqrt{\frac{(K-1)d_1+d}{n}}}$.
\end{proof}

\subsection{Average TV error bound under single-source training}

\begin{theorem}[Average TV error bound for conditional Gaussian distribution space under single-source training]
\label{thm:tv_upper_gaussian_single}
Let $\hat{p}_{X\vert Y}^{\single}$ be the likelihood maximizer defined in \cref{eq:single_mle_solution} given $\cP_{X\vert Y}^{\single}$ with conditional distributions as in \cref{eq:conditional_density_gaussian}. 
Suppose $\Phi = [-B,B]^{d_1}$, $\Psi = [-B,B]^{d-d_1}$ with constant $B>0$, and $\phi_k^*\in \Phi$, $\psi^*\in \Psi$. Then, for any $0<\delta\leq 1/2$, it holds with probability at least $1-\delta$ that 
\begin{align*}
    \rtv(\hat{p}_{X\vert Y}^{\single})=\tilde{\cO}\paren*{\sqrt{\frac{Kd}{n}}}.
\end{align*}
\end{theorem}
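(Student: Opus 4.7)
The plan is to mirror the proof of \cref{thm:tv_upper_gaussian} almost verbatim, simply replacing the multi-source distribution space $\cP_{X\vert Y}^{\multi}$ with the larger single-source distribution space $\cP_{X\vert Y}^{\single}$, and then invoking the general conditional MLE bound of \cref{thm:TV_upper_of_conditional_MLE}. Since $\hat p_{X\vert Y}^{\single}$ is the MLE in $\cP_{X\vert Y}^{\single}$ and the realizable assumption $\phi_k^* \in \Phi, \psi^* \in \Psi$ guarantees $p^*_{X\vert Y} \in \cP_{X\vert Y}^{\single}$, \cref{thm:TV_upper_of_conditional_MLE} directly gives
\begin{equation*}
\rtv(\hat p_{X\vert Y}^{\single}) \le 3\sqrt{\tfrac{1}{n}\paren*{\log \br\paren*{\tfrac{1}{n};\cP_{X\vert Y}^{\single},L^1(\sX)}+\log\tfrac{1}{\delta}}},
\end{equation*}
so the whole task reduces to bounding the bracketing number of the single-source Gaussian class.

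For the bracketing number, I would repeat the $\eta$-grid construction used in the proof of \cref{thm:bracket_gaussian}, with the one key modification that now each source has its own shared-feature parameter $\psi_k$ rather than a single $\psi$ common to all $k$. Concretely, for each $\phi_k \in [-B,B]^{d_1}$ and each $\psi_k \in [-B,B]^{d-d_1}$ I would quantize every coordinate onto $\eta\Z$, obtaining $(\bar\phi_k, \bar\psi_k)$ with $\norm{(\phi_k,\psi_k) - (\bar\phi_k,\bar\psi_k)}_2^2 \le d\eta^2$. The same candidate upper bracket $p'(\vx,y) = \prod_k \paren*{(2\pi)^{-d/2} e^{-\frac{c_1}{2}\norm{\vx - (\bar\phi_k,\bar\psi_k)}_2^2 + c_2}}^{\I(y=k)}$ with $c_1 = 1-\eta$, $c_2 = d(1-\eta)\eta/2$, $\eta = \epsilon/(1+d)$ satisfies the two bracket conditions pointwise in $y$ by exactly the same completing-the-square argument. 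The only change is in counting: now there are $K$ independent parameter blocks of dimension $d_1 + (d-d_1) = d$ each, so
\begin{equation*}
\br\paren*{\epsilon;\cP_{X\vert Y}^{\single},L^1(\sX)} \le \paren*{\tfrac{2(1+d)B}{\epsilon}+1}^{Kd}.
\end{equation*}

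Plugging $\epsilon = 1/n$ into the MLE bound and using the exponent $Kd$ in place of $(K-1)d_1 + d$ yields
\begin{equation*}
\rtv(\hat p_{X\vert Y}^{\single}) \le 3\sqrt{\tfrac{1}{n}\paren*{Kd\log(2(1+d)Bn+1)+\log\tfrac{1}{\delta}}} = \tilde{\cO}\paren*{\sqrt{\tfrac{Kd}{n}}},
\end{equation*}
hiding constants in $d$, $B$, $\delta$ and the logarithmic factor in $n$. There is no genuine technical obstacle here; the main thing to be careful about is the bookkeeping of the parameter count, namely that the single-source class has a full copy of $\Psi$ per source rather than a shared copy, which is precisely what produces the gap $Kd$ vs.\ $(K-1)d_1 + d$ and hence the quantitative advantage of multi-source training claimed in \cref{sec:instantiate_conditional_gaussian}.
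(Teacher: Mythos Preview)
Your proposal is correct and follows essentially the same approach as the paper's proof: both establish the bracketing number bound $\br(\epsilon;\cP_{X\vert Y}^{\single},L^1(\sX)) \le (2(1+d)B/\epsilon+1)^{Kd}$ via the identical $\eta$-grid construction with $c_1=1-\eta$, $c_2=d(1-\eta)\eta/2$, $\eta=\epsilon/(1+d)$, and then plug into \cref{thm:TV_upper_of_conditional_MLE}. The only difference from the multi-source case is indeed the parameter count, exactly as you note.
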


\begin{proof}
The proof is very similar to that in the multi-source case.
According to the assumptions, the conditional distribution space expressed by the parametric estimation model is 
    \begin{align*}
        \cP_{X\vert Y}^{\single}\!\coloneqq\!\!\curl*{\!p_{X\vert Y}^{\single}(\vx \vert y)\!=\!\!\prod_{k=1}^K\!\paren*{p_{\phi_k, \psi_k}(\vx\vert k)}^{\I(y=k)}\!=\!\!\prod_{k=1}^K\!\paren*{(2\pi)^{-\frac{d}{2}}e^{-\frac{1}{2}\norm{\vx-(\phi_k, \psi_k)}_2^2}}^{\!\I(y=k)}\!\!:\!\phi_k\!\in\! [-B,B]^{d_1}, \!\psi_k\!\in\! [-B,B]^{d-d_1}\!}.
    \end{align*} 
    For any $p_{X\vert Y}^{\single}(\vx \vert y)\!=\!\prod_{k=1}^K\paren*{(2\pi)^{-\frac{d}{2}}e^{-\frac{1}{2}\norm{\vx-(\phi_k, \psi_k)}_2^2}}^{\I(y=k)}\in\cP_{X\vert Y}^{\single}$, let's first divide the mean vector $(\phi_k, \psi_k)$ into $\eta$-width grids with a small constant $\eta>0$ (the value of $\eta$ will be specified later): 
    If $(\phi_k)_i \in [j\eta, (j+1)\eta)$ for some $j\in\Z$, let $(\bar{\phi}_k)_i=j\eta$ and $\bar{\phi}_k\coloneqq\paren*{(\bar{\phi}_k)_1, \dots, (\bar{\phi}_k)_{d_1}}$. Similarly, if $(\psi_k)_i \in [j\eta, (j+1)\eta)$ for some $j\in\Z$, let $(\bar{\psi}_k)_i=j\eta$ and $\bar{\psi}_k \coloneqq \paren*{(\bar{\psi}_k)_1, \dots, (\bar{\psi}_k)_{d-d_1}}$. In this case, we have $\norm{(\phi_k, \psi_k)-(\bar{\phi}_k, \bar{\psi}_k)}_2^2\leq d\eta^2$. 
    
    Let
    \begin{align*}
        p^\prime(\vx, y)=\prod_{k=1}^K\paren*{(2\pi)^{-\frac{d}{2}}e^{-\frac{c_1}{2}\norm{\vx-(\bar{\phi}_k, \bar{\psi}_k)}_2^2+c_2}}^{\I(y=k)}.
    \end{align*}
    
    We need $p^\prime(\vx, y) \ge  p_{X\vert Y}^{\single}(\vx \vert y)$ by the definition of the bracketing. By completing the square w.r.t. $\vx$, we have 
    \begin{align*}
        &-\frac{c_1}{2}\norm{\vx-(\bar{\phi}_k, \bar{\psi}_k)}_2^2+c_2 - \paren*{-\frac{1}{2}\norm{\vx-(\phi_k, \psi_k)}_2^2}\\
        =&\frac{1}{2}\paren*{(1-c_1)\norm*{\vx+\frac{c_1(\bar{\phi}_k, \bar{\psi}_k)-(\phi_k, \psi_k)}{1-c_1}}_2^2-\frac{c_1}{1-c_1}\norm*{(\bar{\phi}_k, \bar{\psi}_k)-(\phi_k, \psi_k)}_2^2+2c_2}.
    \end{align*}
    Further taking $c_1=1-\eta$ and $c_2=d(1-\eta)\eta/2$, we have 
    \begin{align*}
        &(1-c_1)\norm*{\vx+\frac{c_1(\bar{\phi}_k, \bar{\psi}_k)-(\phi_k, \psi_k)}{1-c_1}}_2^2-\frac{c_1}{1-c_1}\norm*{(\bar{\phi}_k, \bar{\psi}_k)-(\phi_k, \psi_k)}_2^2+2c_2\\
        =&\eta\norm*{\vx+\frac{c_1(\bar{\phi}_k, \bar{\psi}_k)-(\phi_k, \psi_k)}{1-c_1}}_2^2-\frac{1-\eta}{\eta}\norm*{(\bar{\phi}_k, \bar{\psi}_k)-(\phi_k, \psi_k)}_2^2+2c_2 \tag{$c_1=1-\eta$}\\
        \geq & -\frac{1-\eta}{\eta}\norm*{(\bar{\phi}_k, \bar{\psi}_k)-(\phi_k, \psi_k)}_2^2+2c_2 \tag{$\eta>0$}\\
        \geq & -\frac{1-\eta}{\eta}d\eta^2+2c_2 \tag{$\norm{(\phi_k, \psi)-(\bar{\phi}_k, \bar{\psi}_k)}_2^2\leq d\eta^2$}\\
        = & -d (1-\eta)\eta + d(1-\eta)\eta = 0.
    \end{align*}
    Therefore, it holds that for all $y\in \sY$, 
    \begin{align}
    \label{eq:gaussian_bracket_condition_1_single}
        \forall \vx\in \sX: p^\prime(\vx, y)\geq p_{X\vert Y}^{\single}(\vx \vert y).
    \end{align}  
    Moreover, given any $0<\epsilon \leq 1$, we take $\eta=\frac{\epsilon}{1+d}$, and thus $c_1=1-\frac{\epsilon}{1+d}$ and $c_2=\frac{1}{2}(1-\frac{\epsilon}{1+d})\frac{\epsilon}{\frac{1}{d}+1}$. Since $d\in\N$, we have $\eta\leq\frac{1}{2}$ and $c_2\leq\frac{1}{2}$. 
    Then, $\norm{p^\prime(\cdot,y)-p_{X\vert Y}^{\single}(\cdot\vert y)}_{L^1(\sX)}$ can be bounded as
    \begin{align}
        &\norm{p^\prime(\cdot,y)-p_{X\vert Y}^{\single}(\cdot\vert y)}_{L^1(\sX)}=\int_{\sX}\abs{p^\prime(\vx, y)-p_{X\vert Y}^{\single}(\vx \vert y)}d\vx\notag\\
        =&\int_{\sX}p^\prime(\vx, y)d\vx-\int_{\sX}p_{X\vert Y}^{\single}(\vx \vert y)d\vx
        =\frac{1}{\sqrt{c_1}}e^{c_2}-1\tag{$\int_{\sX} e^{-\frac{1}{2}\norm{\vx}_2^2}d\vx=(2\pi)^{\frac{d}{2}}$}\\
        \leq&\frac{1}{\sqrt{c_1}}(1+2c_2)-1\tag{$e^x\leq 1+2x$ for $x\in[0,\frac{1}{2}]$}\\
        =&\frac{1}{\sqrt{1-\eta}}(1+d(1-\eta)\eta)-1 \tag{$c_1=1-\eta$ and $c_2=d(1-\eta)\eta/2$}\\
        \leq&(1+\eta)(1+d(1-\eta)\eta)-1 \tag{$\frac{1}{\sqrt{1-x}}\leq 1+x$ for $x\in[0,\frac{1}{2}]$}\\
        =&\eta\paren*{1+d(1-\eta^2)}
        \leq\eta\paren*{1+d}
        =\epsilon \label{eq:gaussian_bracket_condition_2_single}
    \end{align}

    Combining \cref{eq:gaussian_bracket_condition_1_single} and \cref{eq:gaussian_bracket_condition_2_single}, we know that for any $p_{X\vert Y}^{\single}(\vx \vert y)\in\cP_{X\vert Y}^{\single}$ and $0<\epsilon\leq 1$, there exists some $p^\prime(\vx, y)\in \cB$ such that given any $y \in \sY$, it holds that $\forall \vx\in \sX: p^\prime(\vx, y)\geq p_{X\vert Y}(\vx \vert y),$ and $\norm{p^\prime(\cdot,y)-p_{X\vert Y}(\cdot\vert y)}_{L^\sfp(\sX)}\leq \epsilon$, where 
    \begin{align*}
        \cB\coloneqq\curl*{p^\prime(\vx, y)=\prod_{k=1}^K\paren*{(2\pi)^{-\frac{d}{2}}e^{-\frac{c_1}{2}\norm{\vx-(\bar{\phi}_k, \bar{\psi}_k)}_2^2+c_2}}^{\I(y=k)}:(\bar{\phi}_k)_i, (\bar{\psi}_k)_i\in [-B,B]\cap \eta\Z}
    \end{align*}
    
    Recalling the definition of the upper bracketing number in Definition~\ref{def:upper_bracketing_number}, we know that $\cB$ is an $\epsilon$-upper bracket of $\cP_{X\vert Y}^{\single}$ w.r.t. $L^1(\sX)$. 
    Therefore, 
    \begin{align*}
        &\br\paren*{\epsilon;\cP_{X\vert Y}^{\single},L^1(\sX)}\\
        \leq & \abs*{\cB}
        =\abs*{\curl*{\{\bar{\phi}_k\}_{k=1}^K, \{\bar{\psi}_k\}_{k=1}^K:(\bar{\phi}_k)_i, (\bar{\psi}_k)_i\in [-B,B]\cap \eta\Z}}\\
        \leq & \paren*{\frac{2B}{\eta} +1}^{Kd_1+K(d-d_1)}\\
        =&\paren*{\frac{2(1+d)B}{\epsilon} +1}^{Kd}.
    \end{align*}

Besides, according to \cref{thm:TV_upper_of_conditional_MLE}, we know that 
    \begin{align*}
        \rtv(\hat{p}_{X\vert Y}^{\single}) &\leq3\sqrt{\frac{1}{n}\paren*{\log\br\paren*{\frac{1}{n};\cP_{X\vert Y}^{\single},L^1(\sX)}+\log\frac{1}{\delta}}} \\
        &\leq 3\sqrt{\frac{1}{n}\paren*{Kd\log \paren*{{2(1+d)Bn}+1}+\log\frac{1}{\delta}}}.
    \end{align*}
    Omitting constants about $n,K,d_1,d,B$, and the logarithm term we have $\rtv(\hat{p}_{X\vert Y}^{\multi})=\tilde{\cO}\paren*{\sqrt{\frac{Kd}{n}}}$.
\end{proof}

\section{Proofs for \cref{sec:instantiation_arm}}
\label{app:proof_arm}

\subsection{Preliminaries for evaluating the bracketing number of neural networks}
\label{app:Preliminaries for evaluating the bracketing number of neural networks}

Our results build on the intrinsic connection between the bracketing number of conditional distribution spaces and the covering number of neural network models.
There is a rich body of work on the complexity of ReLU fully connected neural networks, also known as multilayer perceptrons (MLPs) as defined in Definition~\ref{def:class_of_nn} from various perspectives, including Rademacher complexity~\cite{bartlett_2017_nips_spectrally}, VC-dimension~\cite{bartlett_2019_jmlr_vcdim}, and covering numbers~\cite{suzuki_2019_iclr_relunetwork,shen_2024_icml_networkequivalence,ou_2024_arxiv_covering}. 
Specifically, prior results indicate that the logarithm of the covering number of an MLP scales as $\tilde{\cO}(LS)$, where $L$ is the depth and $S$ is the sparsity constraint. Furthermore, \citet{ou_2024_arxiv_covering} establish a lower bound, showing that for $B\geq 1$ and $W, L\geq 60$, the covering number scales as $\tilde{\Theta}(LS)$.
Their proofs share a common idea.
To enhance clarity, we include a detailed derivation below following these prior works.

\begin{definition}[$\epsilon$-covering number]
\label{def:covering_number}
    Let $\epsilon$ be a real number that $\epsilon>0$ and $\sfp, \sfq$ be integers that $1\leq \sfp, \sfq\leq\infty$.
    An $\epsilon$-cover of a function space $\cF$ with respect to $\norm*{\cdot}_{}$ is a finite function set $\cC\subset\cF$ such that for any $\vf\in \cF$, there exists some $\vf^\prime\in \cC$ such that 
    $\norm*{\norm{\vf(\vx)-\vf^\prime(\vx)}_{\sfq}}_{L^{\sfp}(\sX)}\leq \epsilon.$ 
    In particular, when $\sfp=\sfq=\infty$, it requires $\norm*{\norm{\vf(\vx)-\vf^\prime(\vx)}_{\infty}}_{L^{\infty}(\sX)}=\sup_{\vx\in\sX}\norm{\vf(\vx)-\vf^\prime(\vx)}_{\infty}\leq \epsilon.$
    The $\epsilon$-covering number $\cov\paren*{\epsilon;\cF,\norm*{\cdot}_{\sfq,L^{\sfp}(\sX)}}$ is the cardinality of the smallest $\epsilon$-cover with respect to $\norm*{\cdot}_{\sfq,L^{\sfp}(\sX)}$. 

\end{definition}

\begin{lemma}[Lipschitz property of ReLU and sigmoid, Lemma A.1 in \cite{bartlett_2017_nips_spectrally}]
\label{lem:lip_relu}
   Element-wise ReLU $\relu(\cdot):\R^d\to\R^d$ and sigmoid $\sigma(x)=\frac{1}{1+e^{-x}}$ are $1$-Lipschitz according to $\norm{\cdot}_{\sfp}$ for any $\sfp \geq 1$.
\end{lemma}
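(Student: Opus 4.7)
The plan is to establish the scalar one-dimensional Lipschitz property of each activation and then lift it to vector inputs by exploiting the coordinate-wise structure of $\norm{\cdot}_{\sfp}$.

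First, for scalar ReLU, I would verify $\abs{\relu(a) - \relu(b)} \leq \abs{a-b}$ by a short case analysis on the signs of $a$ and $b$: when both are nonpositive the left side is zero; when both are positive it reduces to $\abs{a-b}$; when they have opposite signs, say $a > 0 \geq b$, then $\abs{\relu(a)-\relu(b)} = a \leq a - b = \abs{a-b}$. Alternatively, observe that $\relu$ is piecewise linear with slopes in $\{0,1\}$, hence $1$-Lipschitz.

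For scalar sigmoid, I would invoke the mean value theorem together with the derivative identity $\sigma'(x) = \sigma(x)\paren*{1-\sigma(x)}$. Since $\sigma(x) \in (0,1)$, AM--GM gives $\sigma'(x) \leq 1/4 \leq 1$, so $\abs{\sigma(a) - \sigma(b)} \leq \abs{a - b}$.

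To extend to element-wise application on vectors, note that $\relu(\vx)[i] = \relu(x_i)$ and $\sigma(\vx)[i] = \sigma(x_i)$. For $1 \leq \sfp < \infty$, the scalar bound yields
\begin{align*}
\norm{\relu(\vx) - \relu(\vy)}_{\sfp}^{\sfp} = \sum_{i} \abs{\relu(x_i) - \relu(y_i)}^{\sfp} \leq \sum_{i} \abs{x_i - y_i}^{\sfp} = \norm{\vx - \vy}_{\sfp}^{\sfp},
\end{align*}
and for $\sfp = \infty$, $\norm{\relu(\vx)-\relu(\vy)}_{\infty} = \max_i\abs{\relu(x_i)-\relu(y_i)} \leq \max_i \abs{x_i-y_i} = \norm{\vx-\vy}_{\infty}$. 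The identical argument applies with $\sigma$ in place of $\relu$. No substantive obstacle arises here; the only care needed is to keep the monotonicity of the $\ell^{\sfp}$ norm under coordinate-wise domination explicit, and to treat the $\sfp=\infty$ case separately from finite $\sfp$.
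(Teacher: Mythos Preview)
Your proof is correct. The paper does not actually give its own proof of this lemma; it simply cites the result as Lemma~A.1 of \cite{bartlett_2017_nips_spectrally} and moves on, so your self-contained argument (scalar $1$-Lipschitzness via case analysis for $\relu$ and the derivative bound $\sigma'(x)=\sigma(x)(1-\sigma(x))\le 1/4$ for sigmoid, followed by coordinate-wise domination to lift to $\ell^{\sfp}$) supplies exactly the elementary details the paper omits.
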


\begin{lemma}[Covering number of a composite function class]
\label{lem:covering_number_of_composite_function_class}
Suppose we have two classes of functions, $\cG$ consisting of functions mapping from $\sX_1$ to $\sX_2$ and $\cF$ consisting of functions mapping from $\sX_2$ to $\sX_3$. We denote by $\cF\circ\cG$ all possible composition of functions in $\cF$ and $\cG$ that $\cF\circ\cG=\{\vf\circ \vg: \vf\in\cF,\vg\in\cG\}$.
Assume that any $\vf\in\cF$ is $\kappa_{\cF}$-Lipschitz w.r.t. $\norm{\cdot}_{\infty}$, i.e., for all $\vx_2, \vx_2^\prime\in\sX_2$, $\norm{\vf(\vx_2)-\vf(\vx_2^\prime)}_{\infty}\leq\kappa_{\cF}\norm{\vx_2-\vx_2^\prime}_{\infty}$. 
Then, given constants $\epsilon_{\cF}, \epsilon_{\cG}>0$ we have
\begin{align*}
    \cov\paren*{\epsilon_{\cF}+\kappa_{\cF}\epsilon_{\cG};\cF\circ\cG,\norm*{\cdot}_{\infty,L^{\infty}(\sX_1)}} \leq \cov\paren*{\epsilon_{\cF};\cF,\norm*{\cdot}_{\infty,L^{\infty}(\sX_2)}}\cov\paren*{\epsilon_{\cG};\cG,\norm*{\cdot}_{\infty,L^{\infty}(\sX_1)}}
\end{align*}
\begin{proof}
    Let $\cC_{\cF}$ be an $\epsilon_{\cF}$-cover of $\cF$ w.r.t. $\norm*{\cdot}_{\infty,L^{\infty}(\sX_2)}$ such that $\abs{\cC_{\cF}}=\cov\paren*{\epsilon_{\cF};\cF,\norm*{\cdot}_{\infty,L^{\infty}(\sX_2)}}$, and $\cC_{\cG}$ be an $\epsilon_{\cG}$-cover of $\cG$ w.r.t. $\norm*{\cdot}_{\infty,L^{\infty}(\sX_1)}$ such that $\abs{\cC_{\cG}}=\cov\paren*{\epsilon_{\cG};\cG,\norm*{\cdot}_{\infty,L^{\infty}(\sX_1)}}$.
    For any $\vf\circ \vg\in\cF\circ\cG$, there exists $\vf^\prime\in\cC_{\cF}$ and $\vg^\prime\in\cC_{\cG}$ such that 
    \begin{align*}
        \forall \vx_2\in\sX_2, \norm{\vf(\vx_2)-\vf^\prime(\vx_2)}_{\infty}\leq\epsilon_{\cF}, \quad \text{and} \quad  \forall \vx_1\in\sX_1, \norm{\vg(\vx_1)-\vg^\prime(\vx_1)}_{\infty}\leq\epsilon_{\cG}.
    \end{align*}
    Then, for any $\vx_1\in\sX_1$, we have 
    \begin{align*}
        \norm{\vf\circ \vg(\vx_1)-\vf^\prime \circ \vg^\prime(\vx_1)}_{\infty}
        &\leq \norm{\vf\circ \vg(\vx_1)-\vf^\prime\circ \vg(\vx_1)}_{\infty}+\norm{\vf^\prime\circ \vg(\vx_1)-\vf^\prime \circ \vg^\prime(\vx_1)}_{\infty}\\
        &\leq \epsilon_{\cF}+\kappa_{\cF} \norm{\vg(\vx_1)- \vg^\prime(\vx_1)}_{\infty}\\
        &\leq \epsilon_{\cF}+\kappa_{\cF}\epsilon_{\cG}.
    \end{align*}
    Therefore, we have $\cC_{\cF}\circ\cC_{\cG}$ is an $\epsilon_{\cF}+\kappa_{\cF}\epsilon_{\cG}$-cover of $\cF\circ\cG$, and thus
    \begin{align*}
        \cov\paren*{\epsilon_{\cF}+\kappa_{\cF}\epsilon_{\cG};\cF\circ\cG,\norm*{\cdot}_{\infty,L^{\infty}(\sX_1)}}
        \leq \abs{\cC_{\cF}\circ\cC_{\cG}}
        \leq \abs{\cC_{\cF}}\abs{\cC_{\cG}}
        =\cov\paren*{\epsilon_{\cF};\cF,\norm*{\cdot}_{\infty,L^{\infty}(\sX_2)}}\cov\paren*{\epsilon_{\cG};\cG,\norm*{\cdot}_{\infty,L^{\infty}(\sX_1)}}.
    \end{align*}
\end{proof}
    
\end{lemma}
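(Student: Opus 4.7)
\medskip

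The plan is to construct a cover of $\cF\circ\cG$ by taking the ``product'' of minimal covers of $\cF$ and $\cG$. Let $\cC_{\cF}\subset\cF$ achieve $|\cC_{\cF}|=\cov(\epsilon_{\cF};\cF,\norm{\cdot}_{\infty,L^\infty(\sX_2)})$ and let $\cC_{\cG}\subset\cG$ achieve $|\cC_{\cG}|=\cov(\epsilon_{\cG};\cG,\norm{\cdot}_{\infty,L^\infty(\sX_1)})$. I claim that the composite family $\{\vf'\circ\vg':\vf'\in\cC_{\cF},\vg'\in\cC_{\cG}\}$ is an $(\epsilon_{\cF}+\kappa_{\cF}\epsilon_{\cG})$-cover of $\cF\circ\cG$ under $\norm{\cdot}_{\infty,L^\infty(\sX_1)}$, after which the cardinality bound is immediate since $|\cC_{\cF}\circ\cC_{\cG}|\le|\cC_{\cF}|\cdot|\cC_{\cG}|$.

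To verify the cover property, fix any $\vf\circ\vg\in\cF\circ\cG$. By the defining property of the two covers, there exist $\vf'\in\cC_{\cF}$ and $\vg'\in\cC_{\cG}$ such that $\sup_{\vx_2\in\sX_2}\norm{\vf(\vx_2)-\vf'(\vx_2)}_\infty\le\epsilon_{\cF}$ and $\sup_{\vx_1\in\sX_1}\norm{\vg(\vx_1)-\vg'(\vx_1)}_\infty\le\epsilon_{\cG}$. For any $\vx_1\in\sX_1$, I would insert the intermediate point $\vf'(\vg(\vx_1))$ and apply the triangle inequality:
\begin{align*}
\norm{\vf(\vg(\vx_1))-\vf'(\vg'(\vx_1))}_\infty
&\le \norm{\vf(\vg(\vx_1))-\vf'(\vg(\vx_1))}_\infty + \norm{\vf'(\vg(\vx_1))-\vf'(\vg'(\vx_1))}_\infty\\
&\le \epsilon_{\cF} + \kappa_{\cF}\norm{\vg(\vx_1)-\vg'(\vx_1)}_\infty \le \epsilon_{\cF}+\kappa_{\cF}\epsilon_{\cG},
\end{align*}
where the first term is handled by the $\cF$-cover property evaluated at the point $\vg(\vx_1)\in\sX_2$, and the second term uses the $\kappa_{\cF}$-Lipschitz assumption applied to $\vf'$ combined with the $\cG$-cover property. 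Taking the supremum over $\vx_1$ yields the desired bound.

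There is no real difficulty here; the only subtlety worth flagging is that the Lipschitz hypothesis must apply to the covering element $\vf'$ as well, which is fine because the definition of an $\epsilon$-cover in Definition~\ref{def:covering_number} requires $\cC_{\cF}\subset\cF$. The rest is a routine triangle-inequality decomposition and a trivial product cardinality bound, so I expect the entire argument to occupy only a few lines.
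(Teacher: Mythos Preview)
Your proposal is correct and essentially identical to the paper's proof: both construct the product cover $\cC_{\cF}\circ\cC_{\cG}$, insert the same intermediate point $\vf'(\vg(\vx_1))$, and bound the two pieces via the $\cF$-cover property and the Lipschitz assumption on $\vf'\in\cC_{\cF}\subset\cF$. Your remark about needing $\cC_{\cF}\subset\cF$ so that $\vf'$ inherits the Lipschitz constant is exactly the point the paper uses implicitly.
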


\begin{lemma}[Covering number of an MLP class]
\label{lem:covering_number_of_nn}
Given any constant $\delta>0$, the covering number with respect to $\norm*{\cdot}_{\infty,L^{\infty}(\sX)}$ with $\sX\subset[0,1]^{W_0}$ of an MLP class $\cF(L, W, S, B)$ defined in Definition~\ref{def:class_of_nn} can be bounded by
\begin{align*}
    \cov\paren*{L(B\vee 1)^{L-1}(W+1)^L\delta;\cF(L,W,S,B),\norm*{\cdot}_{\infty,L^{\infty}(\sX)}}\leq \paren*{\frac{2B}{\delta}+1}^S.
\end{align*}
\end{lemma}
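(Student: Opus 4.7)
The plan is the standard ``discretize the parameters, propagate the error through the layers'' strategy. First, I will introduce a scalar grid $\Lambda_\delta \coloneqq \{-B, -B+\delta, \ldots, B-\delta, B\}$ of cardinality at most $\frac{2B}{\delta}+1$, with $0 \in \Lambda_\delta$. For any $\vf \in \cF(L,W,S,B)$ with parameters $\{(\vA^{(l)},\vb^{(l)})\}_{l=1}^L$, I will construct $\vf' \in \cF(L,W,S,B)$ by rounding each entry of each $\vA^{(l)}$ and $\vb^{(l)}$ to its nearest point in $\Lambda_\delta$. This only affects the at most $S$ nonzero entries (zeros remain zero, so sparsity is preserved), and it creates at most $(\tfrac{2B}{\delta}+1)^S$ distinct rounded networks, which will be our cover.

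Next, I would control $\norm{\vf(\vx)-\vf'(\vx)}_\infty$ uniformly on $\sX\subset[0,1]^{W_0}$ by a two-step induction on the layer index $l$. Writing $\vh^{(l)}=\relu(\vA^{(l)}\vh^{(l-1)}+\vb^{(l)})$ with $\vh^{(0)}=\vx$, and similarly $\vh'^{(l)}$ for $\vf'$, the first induction bounds the forward activations: since each row of $\vA^{(l)}$ has at most $W$ entries of magnitude $\le B$, I get $\norm{\vh^{(l)}}_\infty \le BW\norm{\vh^{(l-1)}}_\infty + B \le (B\vee 1)(W+1)\max(\norm{\vh^{(l-1)}}_\infty,1)$, hence $\norm{\vh^{(l)}}_\infty \le [(B\vee 1)(W+1)]^l$ using $\norm{\vh^{(0)}}_\infty\le 1$. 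The second induction bounds the discrepancy $\Delta\vh^{(l)}\coloneqq\vh^{(l)}-\vh'^{(l)}$: using \cref{lem:lip_relu} for the $1$-Lipschitzness of ReLU, the triangle inequality, and the standard split
\begin{equation*}
\vA^{(l)}\vh^{(l-1)}-\vA'^{(l)}\vh'^{(l-1)} = (\vA^{(l)}-\vA'^{(l)})\vh^{(l-1)} + \vA'^{(l)}(\vh^{(l-1)}-\vh'^{(l-1)}),
\end{equation*}
I obtain $\norm{\Delta\vh^{(l)}}_\infty \le \delta W\norm{\vh^{(l-1)}}_\infty + BW\norm{\Delta\vh^{(l-1)}}_\infty + \delta \le \delta(W+1)[(B\vee 1)(W+1)]^{l-1} + (B\vee 1)(W+1)\norm{\Delta\vh^{(l-1)}}_\infty$. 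Unrolling this recursion with $\Delta\vh^{(0)}=\bm 0$ telescopes to $\norm{\Delta\vh^{(L)}}_\infty \le L\,\delta\,(B\vee 1)^{L-1}(W+1)^L$, which is the radius appearing in the statement.

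Combining the two ingredients yields the claim: the set of rounded networks has cardinality at most $(\tfrac{2B}{\delta}+1)^S$ and forms an $L(B\vee 1)^{L-1}(W+1)^L\delta$-cover of $\cF(L,W,S,B)$ with respect to $\norm{\cdot}_{\infty,L^\infty(\sX)}$. The main obstacle is the bookkeeping in the perturbation induction: one has to carefully absorb the additive ``$+\delta$'' bias term and the multiplicative ``$BW$'' factor into the single clean quantity $(B\vee 1)(W+1)$ so that the unrolled geometric sum collapses to the factor $L$ rather than something exponentially larger; the use of the forward-activation bound inside the perturbation recursion is precisely what makes the two factors of $(W+1)^L$ and $(B\vee 1)^{L-1}$ match the statement. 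A minor subtlety is that rounding to $\Lambda_\delta$ preserves the sparsity constraint (since $0\in\Lambda_\delta$), so the rounded networks genuinely lie in $\cF(L,W,S,B)$ and the counting $(\tfrac{2B}{\delta}+1)^S$ is valid.
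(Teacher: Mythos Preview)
Your proposal is correct and follows essentially the same approach as the paper: discretize the parameters onto a $\delta$-grid, prove the forward activation bound $\norm{\vf_l(\vx)}_\infty\le[(B\vee 1)(W+1)]^l$ and the layerwise perturbation bound by induction (feeding the former into the latter), and count the discretized networks as $(\tfrac{2B}{\delta}+1)^S$. The only cosmetic difference is that the paper writes out the perturbation induction with an explicit case split on $B\ge 1$ versus $B<1$ at each step, whereas you set up the recursion $\norm{\Delta\vh^{(l)}}_\infty\le \delta(W+1)[(B\vee 1)(W+1)]^{l-1}+(B\vee 1)(W+1)\norm{\Delta\vh^{(l-1)}}_\infty$ and unroll it directly; the resulting bound is identical.
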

\begin{proof}
    Fix any $\vx\in[0,1]^{W_0}$. Given any network $\vf\in\cF(L,W,S,B)$ expressed as 
    \begin{align*}
        \vf(\vx)=(\vA^{(L)}\relu(\cdot)+\vb^{(L)})\circ\dots\circ(\vA^{(1)}\vx+\vb^{(1)}),
    \end{align*}
    let $\vf_l(\vx)\coloneqq (\vA^{(l)}\relu(\cdot)+\vb^{(l)})\circ\dots\circ(\vA^{(1)}\vx+\vb^{(1)})$ for $l=2,\dots,L$ and $\vf_1(\vx)=\vA^{(1)}\vx+\vb^{(1)}$.

    \paragraph{Sup-norm of the output at each layer.} 
    We first prove the statement that $\norm{\vf_l(\vx)}_{\infty}\leq(B\vee 1)^l(W+1)^l$ for all $l\in[L]$ by induction. 
    When $l=1$, 
    \begin{align*}
        \norm{\vf_1(\vx)}_{\infty}
        &=\norm{\vA^{(1)}\vx+\vb^{(1)}}_{\infty}\leq \max_i\norm{\vA^{(1)}[i,:]}_1\norm{\vx}_{\infty}+\norm{\vb^{(1)}}_{\infty}\\
        &\leq WB+B \tag{$W_0\leq W, \norm{\vA^{(1)}}_{\infty}\leq B, \norm{\vb^{(1)}}_{\infty}\leq B, \vx\in\sX\subset[0,1]^{W_0}$}\\
        &=B(W+1) \leq(B\vee 1)^1(W+1)^1,
    \end{align*}
    which implies the statement is true for $l=1$.
    Assume that for some $l=i\geq 1$, $\norm{\vf_i(\vx)}_{\infty}\leq(B\vee 1)^i(W+1)^i$, then we have
    \begin{align*}
        \norm{\vf_{i+1}(\vx)}_{\infty}
        &=\norm{\vA^{(i+1)}\relu\paren*{\vf_i(\vx)}+\vb^{(i+1)}}_{\infty}\leq \max_i\norm{\vA^{(i+1)}[i,:]}_1\norm{\relu\paren*{\vf_i(\vx)}}_{\infty}+\norm{\vb^{(i+1)}}_{\infty}\\
        &\leq WB\norm{\relu\paren*{\vf_i(\vx)}}_{\infty}+B \tag{$W_{i}\leq W, \norm{\vA^{(i+1)}}_{\infty}\leq B, \norm{\vb^{(i+1)}}_{\infty}\leq B$}\\
        &\leq WB\norm{\vf_i(\vx)}_{\infty}+B 
        \tag{$\relu(\cdot)$ is $1$-Lipschitz continuous for Lemma~\ref{lem:lip_relu} and $\relu(\bm{0})=\bm{0}$}\\
        &\leq WB(B\vee 1)^i(W+1)^i+B  \\
        &\leq\left\{
             \begin{array}{lr}
             WB^{i+1}(W+1)^i+B^{i+1}(W+1)^i, & B\geq 1,\\
             W(W+1)^i+(W+1)^i, & B<1,  
             \end{array}\right.\\
        &=(B\vee 1)^{i+1}(W+1)^{i+1},
    \end{align*}
    which implies the statement is true for $l=i+1$, completing the induction steps.
    Therefore, it holds that for all $l\in[L]$, 
    \begin{align}
    \label{eq:bounding_sup_form}
        \norm{\vf_l(\vx)}_{\infty}\leq(B\vee 1)^l(W+1)^l.
    \end{align}
    
    \paragraph{Parameter-Lipschitzness at each layer.} 
    For any two different neural networks $\vf,\vf^\prime\in\cF(L,W,S,B)$ expressed by 
    \begin{align*}
        \vf(x)=(\vA^{(L)}\relu(\cdot)+\vb^{(L)})\circ\dots\circ(\vA^{(1)}\vx+\vb^{(1)}), 
        \vf^\prime(\vx)= ({\vA^{(L)}}^\prime\relu(\cdot)+{\vb^{(L)}}^\prime)\circ\dots\circ({\vA^{(1)}}^\prime \vx+{\vb^{(1)}}^\prime),
    \end{align*}
    with parameter distance that $\max_{l}\norm{\vA^{(l)}-{\vA^{(l)}}^\prime}_{\infty}\vee \norm{\vb^{(l)}-{\vb^{(l)}}^\prime}_{\infty}\leq \delta$, 
    we prove the statement that $\norm{\vf_l(\vx)-\vf_l^\prime(\vx)}_{\infty}\leq l(B\vee 1)^{l-1}(W+1)^l\delta$ for all $l\in[L]$ by induction.
    When $l=1$, 
    \begin{align*}
        \norm{\vf_1(\vx)-\vf_1^\prime(\vx)}_{\infty}
        &=\norm{\vA^{(1)}\vx+\vb^{(1)}-{\vA^{(1)}}^\prime \vx-{\vb^{(1)}}^\prime}_{\infty}\\
        &\leq \norm{(\vA^{(1)}-{\vA^{(1)}}^\prime)\vx}_{\infty}+\norm{\vb^{(1)}-{\vb^{(1)}}^\prime}_{\infty}\\
        &\leq W\delta+\delta \tag{$W_0\leq W, \norm{\vA^{(1)}-{\vA^{(1)}}^\prime}_{\infty}\leq \delta, \norm{\vb^{(1)}-{\vb^{(1)}}^\prime}_{\infty}\leq \delta, \vx\in[0,1]^{W_0}$}\\
        &=(W+1)\delta \\
        &\leq (B\vee 1)^{0}(W+1)^1 \delta,
    \end{align*}
    which implies the statement is true for $l=1$.   
    Assume that for some $l=i\geq 1$, $\norm{\vf_i(\vx)-\vf_i^\prime(\vx)}_{\infty}\leq i(B\vee 1)^{i-1}(W+1)^i\delta$, then we have
    \begin{align*}
        \norm{\vf_{i+1}(\vx)-\vf_{i+1}^\prime(\vx)}_{\infty}
        &=\norm{\vA^{(i+1)}\relu\paren*{\vf_i(\vx)}+\vb^{(i+1)}-{\vA^{(i+1)}}^\prime \relu\paren*{\vf_i^\prime(x)}-{\vb^{(i+1)}}^\prime}_{\infty}\\
        &\leq \norm{\paren*{\vA^{(i+1)}-{\vA^{(i+1)}}^\prime }\relu\paren*{\vf_i(\vx)}}_{\infty} +\norm{{\vA^{(i+1)}}^\prime\paren*{\relu\paren*{\vf_i(\vx)}-\relu\paren*{\vf_i^\prime(\vx)}}}_{\infty}\\
        &\quad +\norm{\vb^{(i+1)}-{\vb^{(i+1)}}^\prime}_{\infty}\\
        &\leq W\delta\norm{\relu\paren*{\vf_i(\vx)}}_{\infty}+WB\norm*{\relu\paren*{\vf_i(\vx)}-\relu\paren*{\vf_i^\prime(\vx)}}_{\infty}+\delta \tag{$W_{i}\leq W, \norm{\vA^{(i+1)}-{\vA^{(i+1)}}^\prime}_{\infty}\leq \delta, \norm{{\vA^{(i+1)}}^\prime}_{\infty}\leq B, \norm{\vb^{(i+1)}-{\vb^{(i+1)}}^\prime}_{\infty}\leq \delta$}\\
        &\leq W\delta\norm{\vf_i(\vx)}_{\infty}+WB\norm*{\vf_i(\vx)-\vf_i^\prime(\vx)}_{\infty}+\delta \tag{$\relu(\cdot)$ is $1$-Lipschitz continuous for Lemma~\ref{lem:lip_relu} and $\relu(\bm{0})=\bm{0}$}\\
        &\leq W\delta(B\vee 1)^i(W+1)^i+WB\norm*{\vf_i(\vx)-\vf_i^\prime(\vx)}_{\infty}+\delta \tag{\cref{eq:bounding_sup_form}}\\
        &\leq W\delta(B\vee 1)^i(W+1)^i+WBi(B\vee 1)^{i-1}(W+1)^i\delta+\delta \\
        &\leq\paren*{W(B\vee 1)^i(W+1)^i+iW(B\vee 1)^{i}(W+1)^i+1}\delta\\
        &=\paren*{(i+1)W(B\vee 1)^i(W+1)^i+1}\delta\\
        &\leq\left\{
             \begin{array}{lr}
             \paren*{W(i+1)B^i(W+1)^i+(i+1)B^i(W+1)^i}\delta, & B\geq 1, \\
             \paren*{W(i+1)(W+1)^i+(i+1)(W+1)^i}\delta, & B<1,  
             \end{array}\right.\\
        &= (i+1)(B\vee 1)^i(W+1)^{i+1}\delta
    \end{align*}
    which implies the statement is true for $l=i+1$, completing the induction steps.
    Therefore, it holds that for all $l\in[L]$, 
    \begin{align}
    \label{eq:parameter_Lip_nn}
        \norm{\vf_l(\vx)-\vf_l^\prime(\vx)}_{\infty}\leq l(B\vee 1)^{l-1}(W+1)^l\delta.
    \end{align}
    
    \paragraph{Discretization of entry space.} Let $\cS_\mathrm{entry}(\{(\vA^{(l)}, \vb^{(l)})\}_{l=1}^L)\coloneqq\bigcup_{l=1}^L\paren*{\cS_\mathrm{entry}(\vA^{(l)})\bigcup\cS_\mathrm{entry}(\vb^{(l)})}$ with $\cS_\mathrm{entry}(\vA)$ denotes the value space of all entries in $\vA$ and $\cS_\mathrm{entry}(\vb)$ denotes the value space of all entries in $b$. 
    Now we discretize the value spaces of $\cF(L, W, S, B)$ into $\delta$-width grids and get a finite class of neural network as 
    $\cF_{\delta\Z}(L, W, S, B) \coloneqq \{\vf\in\cF(L, W, S, B):\cS_\mathrm{entry}(\{(\vA^{(l)}, \vb^{(l)})\}_{l=1}^L))=[-B,B]\cap \delta\Z\}$, where $\delta\Z=\{k\delta \vert k\in\Z\}$
    Then, for any $\vf\in\cF(L, W, S, B)$ expressed as $\vf(x)=(\vA^{(L)}\relu(\cdot)+\vb^{(L)})\circ\dots\circ(\vA^{(1)}\vx+\vb^{(1)})$, there exists $\vf^\prime\in\cF_{\delta\Z}(L, W, S, B)$ expressed as $\vf^\prime(\vx)=({\vA^{(L)}}^\prime\relu(\cdot)+{\vb^{(L)}}^\prime)\circ\dots\circ({\vA^{(1)}}^\prime \vx+{\vb^{(1)}}^\prime)$ such that $\max_{l}\norm{\vA^{(l)}-{\vA^{(l)}}^\prime}_{\infty}\vee \norm{\vb^{(l)}-{\vb^{(l)}}^\prime}_{\infty}\leq \delta$.
    According to \cref{eq:parameter_Lip_nn}, we have for any $\vx\in[0,1]^{W_0}$, $\norm{\vf(\vx)-\vf^\prime(\vx)}_{\infty}\leq L(B\vee 1)^{L-1}(W+1)^L\delta.$
    Therefore, $\cF_{\delta\Z}(L, W, S, B)$ is an $L(B\vee 1)^{L-1}(W+1)^L\delta$-cover of $\cF(L, W, S, B)$ with respect to $\norm*{\cdot}_{\infty,L^{\infty}(\sX)}$ and thus we have
    \begin{align*}
        &\cov\paren*{L(B\vee 1)^{L-1}(W+1)^L\delta;\cF(L,W,S,B),\norm*{\cdot}_{\infty,L^{\infty}(\sX)}}\\
        \leq &\abs{\cF_{\delta\Z}(L, W, S, B)}=\abs*{\curl*{\{(\vA^{(l)},\vb^{(l)})\}_{l=1}^L: \cS_\mathrm{entry}(\vA^{(l)})=\cS_\mathrm{entry}(\vb^{(l)})=[-B,B]\cap \delta\Z}}\leq \paren*{\frac{2B}{\delta}+1}^S,
    \end{align*}
    which completes the proof.
\end{proof}

Here, we further establish the Lipschitz property of MLPs, which is useful in the following proofs for deriving the covering number for MLPs with an embedding layer applied to input data.

\begin{lemma}[Lipschitz property of MLPs about the input]
\label{lem:lip_of_nn_about_input}
    For any $\vf\in\cF(L, W, S, B)$ defined in Definition~\ref{def:class_of_nn}, $\vf$ is $B^LW^L$-Lipschitz continuous w.r.t. $\norm{\cdot}_{\infty}$ about $\vx$ on $\sX$, i.e., for any $\vx,\vx^\prime\in\sX\subset\R^{W_0}$, it holds that
    \begin{align*}
        \norm{\vf(\vx)-\vf(\vx^\prime)}_{\infty}\leq B^LW^L\norm{\vx-\vx^\prime}_{\infty}.
    \end{align*}
\end{lemma}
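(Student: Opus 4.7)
The plan is to prove the stated Lipschitz bound by induction on the layer index, leveraging that bias terms cancel when we consider differences $\vf_l(\vx) - \vf_l(\vx')$. Concretely, for any network $\vf \in \cF(L,W,S,B)$ written as $\vf(\vx) = (\vA^{(L)}\relu(\cdot) + \vb^{(L)}) \circ \cdots \circ (\vA^{(1)}\vx + \vb^{(1)})$, define the partial output $\vf_l(\vx) \coloneqq (\vA^{(l)}\relu(\cdot) + \vb^{(l)}) \circ \cdots \circ (\vA^{(1)}\vx + \vb^{(1)})$ for each $l \in [L]$, exactly as in the proof of Lemma~\ref{lem:covering_number_of_nn}. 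I will then establish the claim
\begin{align*}
\norm{\vf_l(\vx) - \vf_l(\vx')}_{\infty} \leq B^l W^l \norm{\vx - \vx'}_{\infty}, \quad \forall l \in [L],
\end{align*}
by induction on $l$, and take $l = L$ to conclude.

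For the base case $l=1$, the bias $\vb^{(1)}$ cancels, so that
\begin{align*}
\norm{\vf_1(\vx) - \vf_1(\vx')}_{\infty} = \norm{\vA^{(1)}(\vx - \vx')}_{\infty} \leq \max_i \norm{\vA^{(1)}[i,:]}_1 \cdot \norm{\vx - \vx'}_{\infty} \leq W_0 B \, \norm{\vx - \vx'}_{\infty} \leq BW \norm{\vx - \vx'}_{\infty},
\end{align*}
using $W_0 \leq W$ and $\norm{\vA^{(1)}}_\infty \leq B$ from Definition~\ref{def:class_of_nn}. For the inductive step, assume the bound holds at depth $i$; then
\begin{align*}
\norm{\vf_{i+1}(\vx) - \vf_{i+1}(\vx')}_{\infty} = \norm{\vA^{(i+1)}\paren*{\relu(\vf_i(\vx)) - \relu(\vf_i(\vx'))}}_{\infty} \leq WB \, \norm{\relu(\vf_i(\vx)) - \relu(\vf_i(\vx'))}_{\infty},
\end{align*}
after which the $1$-Lipschitz property of $\relu$ from Lemma~\ref{lem:lip_relu} and the inductive hypothesis give $\norm{\vf_{i+1}(\vx) - \vf_{i+1}(\vx')}_{\infty} \leq B^{i+1} W^{i+1} \norm{\vx - \vx'}_\infty$.

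This argument is almost identical in structure to the parameter-Lipschitzness derivation inside the proof of Lemma~\ref{lem:covering_number_of_nn}, but it is actually simpler because all biases cancel in the difference and the activation pattern issue is handled uniformly by the $1$-Lipschitzness of $\relu$; no separate control of $\norm{\vf_i(\vx)}_\infty$ is required. I do not anticipate a genuine obstacle — the only mild subtlety is to make sure the bound on the operator norm $\norm{\vA^{(l)} \vz}_\infty \leq \max_i \norm{\vA^{(l)}[i,:]}_1 \norm{\vz}_\infty \leq W_{l-1} B \norm{\vz}_\infty \leq WB \norm{\vz}_\infty$ is applied uniformly across layers with the width bound $\max_l W_l \leq W$, which is already built into Definition~\ref{def:class_of_nn}.
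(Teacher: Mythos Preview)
Your proposal is correct and follows essentially the same approach as the paper's proof: both argue by induction on the layer index $l$ that $\norm{\vf_l(\vx)-\vf_l(\vx')}_\infty \leq B^l W^l \norm{\vx-\vx'}_\infty$, using bias cancellation, the bound $\norm{\vA^{(l)}\vz}_\infty \leq WB\norm{\vz}_\infty$, and the $1$-Lipschitzness of $\relu$ from Lemma~\ref{lem:lip_relu}. Your remark that no control of $\norm{\vf_i(\vx)}_\infty$ is needed here (unlike in the parameter-Lipschitz argument) is accurate.
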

\begin{proof}
    Fix any $\vx,\vx^\prime\in\sX$. Given any $\vf\in\cF(L,W,S,B)$ expressed as $\vf(\vx)=(\vA^{(L)}\relu(\cdot)+\vb^{(L)})\circ\dots\circ(\vA^{(1)}\vx+\vb^{(1)}),$
    let $\vf_l(x)\coloneqq (\vA^{(l)}\relu(\cdot)+\vb^{(l)})\circ\dots\circ(\vA^{(1)}\vx+\vb^{(1)})$ for $l=2,\dots,L$ and $\vf_1(x)=\vA^{(1)}\vx+\vb^{(1)}$.
    We prove the statement that 
    $\norm{\vf_l(\vx)-\vf_l(\vx^\prime)}_{\infty}\leq B^lW^l\norm{\vx-\vx^\prime}_{\infty}$ for all $l\in[L]$ by induction. 
    When $l=1$, 
    \begin{align*}
        \norm{\vf_1(\vx)-\vf_1(\vx^\prime)}_{\infty}
        &=\norm{\vA^{(1)}\vx+\vb^{(1)}-\vA^{(1)}\vx^\prime-\vb^{(1)}}_{\infty}=\norm{\vA^{(1)}\paren*{\vx-\vx^\prime}}_{\infty}\\
        &\leq BW\norm{\vx-\vx^\prime}_{\infty}\tag{$W_0\leq W, \norm{\vA^{(1)}}_{\infty}\leq B$}\\
        &=B^1W^1\norm{\vx-\vx^\prime}_{\infty},
    \end{align*}
    which implies the statement is true for $l=1$.
    Assume that for some $l=i\geq 1$, $\norm{\vf_i(\vx)-\vf_i(\vx^\prime)}_{\infty}\leq B^iW^i\norm{\vx-\vx^\prime}_{\infty}$, then we have
    \begin{align*}
        \norm{\vf_{i+1}(\vx)-\vf_{i+1}(\vx^\prime)}_{\infty}
        &=\norm{\vA^{(i+1)}\relu\paren*{\vf_i(\vx)}+\vb^{(i+1)}-\vA^{(i+1)}\relu\paren*{\vf_i(\vx^\prime)}-\vb^{(i+1)}}_{\infty}\\
        &=\norm{\vA^{(i+1)}\paren*{\relu\paren*{\vf_i(\vx)}-\relu\paren*{\vf_i(\vx^\prime)}}}_{\infty}\\
        &\leq WB\norm{\relu\paren*{\vf_i(\vx)}-\relu\paren*{\vf_i(\vx^\prime)}}_{\infty}\tag{$W_{i}\leq W, \norm{\vA^{(i+1)}}_{\infty}\leq B$}\\
        &\leq  WB\norm{\vf_i(\vx)-\vf_i(\vx^\prime)}_{\infty}\tag{$\relu(\cdot)$ is $1$-Lipschitz continuous for Lemma~\ref{lem:lip_relu}}\\
        &\leq WBB^iW^i\norm{\vx-\vx^\prime}_{\infty}\\
        &= B^{i+1}W^{i+1}\norm{\vx-\vx^\prime}_{\infty},
    \end{align*}
    which implies the statement is true for $l=i+1$, completing the induction steps.
    Therefore, it holds that for all $l\in[L]$, 
    \begin{align*}
        \norm{\vf_l(\vx)-\vf_l(\vx^\prime)}_{\infty}\leq B^lW^l\norm{\vx-\vx^\prime}_{\infty}.
    \end{align*}
    Thus, when $l=L$, we have $\norm{\vf(\vx)-\vf(\vx^\prime)}_{\infty}=\norm{\vf_L(\vx)-\vf_L(\vx^\prime)}_{\infty}\leq B^LW^L\norm{\vx-\vx^\prime}_{\infty}$. 
\end{proof}

\subsection{Covering number of the logit space of ARMs}

We first characterize the output function space of the neural network without softmax operation, i.e., the unnormalized distribution parameter space, commonly referred to as logits in ARMS. This result serves as the foundation for deriving the bracketing number of the conditional probability mass function for each dimension.  
The derivation carefully analyzes the covering number of outputs for the entire network, which consists of the embedding layer, encoding layer, and an MLP. This analysis makes use of the previously established Lemma~\ref{lem:covering_number_of_nn} and Lemma~\ref{lem:lip_of_nn_about_input}.

\begin{lemma}[Covering number of the unnormalized distribution parameter vectors]
\label{lem:arm_covering_of_distribution_parameter_vector}
    Let $\vH_{\pnn}(\vx,y)\coloneqq\begin{bmatrix} \vh_{\pnn,1}(\vx,y)\ \cdots \ \vh_{\pnn,D}(\vx,y) \end{bmatrix}:\sX\times\sY\to\R^{M\times D}$ with $\vh_{\pnn,d}(\vx,y)=\vf_{\pmlp}\paren*{\vv_{\vA_0,\vb_0}^{\backslash \bm{0}_{D-d}}\!\paren*{\vE_{\vV_Y,\vV_X}(\vx,y)}}:\sX\times\sY\to\R^M$ as defined in \cref{sec:instantiation_arm}. 
    Let $\cH_{\pnn}=\{\vH_{\pnn}(\vx,y):\pmlp\in\cW(L,W,S,B), \vA_0\in[-B,B]^{D\times\de}, \vb_0\in[-B,B]^{D}, \vV_X\in[0,1]^{M\times\de}, \vV_Y\in[0,1]^{K\times\de}\}$ with constants $L,W,S,B>0$. 
    Then, given any $\epsilon>0$, we have
    \begin{align*}
        \cov\paren*{\epsilon;\cH_{\pnn},\norm*{\cdot}_{\infty,L^{\infty}(\sX\times\sY)}} \leq \paren*{\frac{3(L+3)(B\vee 1)^{L+2}(W+1)^L}{\epsilon}}^{S+D+(D+M+K)\de}
    \end{align*}
\end{lemma}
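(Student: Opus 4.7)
The plan is to view $\vH_\pnn$ as a composition $\vG_{\mathrm{MLP}} \circ \vG_{\mathrm{mask}} \circ \vG_{\mathrm{enc}} \circ \vG_{\mathrm{emb}}$, where $\vG_{\mathrm{emb}}$ (parameters $\vV_X, \vV_Y$) maps $(\vx,y)$ to the embedding matrix $\vE \in [0,1]^{D\times\de}$, $\vG_{\mathrm{enc}}$ (parameters $\vA_0, \vb_0$) applies the row-wise linear transform followed by sigmoid to produce a vector in $[0,1]^D$, $\vG_{\mathrm{mask}}$ performs the parameter-free zero-padding for each dimension $d\in[D]$, and $\vG_{\mathrm{MLP}}$ applies $\vf_\pmlp \in \cF(L,W,S,B)$ to each of the $D$ masked vectors and stacks the outputs into an $M\times D$ matrix. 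I would then build a cover of $\cH_\pnn$ by covering each parameterized component separately and chaining them with Lemma~\ref{lem:covering_number_of_composite_function_class}.

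For each component I would discretize the parameters onto uniform grids: (i) every entry of $\vV_X, \vV_Y \in [0,1]$ onto a grid of width $\delta_1$, giving $(1/\delta_1+1)^{(M+K)\de}$ configurations and sup-norm accuracy $\delta_1$ on the embedding output; (ii) every entry of $\vA_0, \vb_0 \in [-B,B]$ onto a grid of width $\delta_2$, giving $(2B/\delta_2+1)^{D\de + D}$ configurations and, using that sigmoid is $1$-Lipschitz (Lemma~\ref{lem:lip_relu}) and the embedding entries lie in $[0,1]$, a parameter-induced sup-norm error of at most $(\de+1)\delta_2$; (iii) applying Lemma~\ref{lem:covering_number_of_nn} to the MLP to get a cover of size $(2B/\delta_3+1)^S$ with sup-norm accuracy $L(B\vee 1)^{L-1}(W+1)^L \delta_3$.

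To propagate accuracies through the composition I need two Lipschitz constants: the encoder's Lipschitz constant with respect to $\vE$ is at most $\de B$ (since each sigmoid argument is a $\de$-dimensional inner product with a row of $\vA_0$), and the MLP's Lipschitz constant with respect to its input is at most $(B\vee 1)^L W^L$ by Lemma~\ref{lem:lip_of_nn_about_input}; masking is trivially $1$-Lipschitz and parameter-free. Chaining via Lemma~\ref{lem:covering_number_of_composite_function_class}, the total sup-norm accuracy is at most
\begin{equation*}
    L(B\vee 1)^{L-1}(W+1)^L \delta_3 \;+\; (B\vee 1)^L W^L \bracket*{(\de+1)\delta_2 + \de B \,\delta_1}.
\end{equation*}
Splitting $\epsilon$ into three equal parts and solving for $\delta_1, \delta_2, \delta_3$ so that each summand is at most $\epsilon/3$, the total cover is the product of the three component covers, of combined exponent $S + D\de + D + (M+K)\de = S + D + (D+M+K)\de$. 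Finally I would upper bound each of the three bases $(1/\delta_1+1)$, $(2B/\delta_2+1)$, $(2B/\delta_3+1)$ by the common expression $3(L+3)(B\vee 1)^{L+2}(W+1)^L/\epsilon$ so that the product collapses into a single exponential in the total parameter count.

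The main obstacle is the bookkeeping: carefully tracking how perturbations in each of the three parameter groups propagate through the differing Lipschitz stages, balancing the three discretization widths so their contributions are comparable, and then uniformly bounding the three distinct bases by a single expression so that the three exponents combine cleanly into $S + D + (D+M+K)\de$. A secondary but error-prone point is the $B<1$ case, where one must systematically replace $B^L$ with $(B\vee 1)^L$ in the Lipschitz estimates without inflating constants in an uncontrolled way.
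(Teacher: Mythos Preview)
Your proposal is correct and follows essentially the same strategy as the paper: discretize each parameter group, propagate the resulting perturbations through the composition via the Lipschitz constants from Lemmas~\ref{lem:lip_relu}, \ref{lem:covering_number_of_nn}, and \ref{lem:lip_of_nn_about_input}, and multiply the component cover sizes. The only organizational difference is that the paper merges the embedding and encoding layers into a single block $\cG_\alpha$ and uses one common grid width $\delta$ for all parameter groups instead of your three widths $\delta_1,\delta_2,\delta_3$; this makes the final unification of the bases automatic (every factor is already $(2B/\delta+1)$ or $(1/\delta+1)$) and sidesteps the balancing step you flagged as the main obstacle.
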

\begin{proof}
    For any $d\in[D]$, $\vh_{\pnn,d}$ can be written as $\vf_{\pmlp}\circ{\vv_{\vA_0,\vb_0}^{\backslash \bm{0}_{D-d}}\circ{\vE_{\vV_Y,\vV_X}}}$.
    Let the embedding space
    \begin{align*}
        \cG_{\alpha}\coloneqq\{&\vG_{\pnn}(\vx,y)=\begin{bmatrix} \vv_{\vA_0,\vb_0}^{\backslash \bm{0}_{D}}\circ{\vE_{\vV_Y,\vV_X}}(\vx,y)\ ,\cdots ,\ \vv_{\vA_0,\vb_0}^{\backslash \bm{0}_{0}}\circ{\vE_{\vV_Y,\vV_X}}(\vx,y) \end{bmatrix}: \alpha = \{\vA_0,\vb_0,\vV_X,\vV_Y\}, \\
        &\vA_0\in[-B,B]^{D\times\de}, \vb_0\in[-B,B]^{D}, \vV_X\in[0,1]^{M\times\de}, \vV_Y\in[0,1]^{K\times\de}\}.
    \end{align*}
    where 
    \begin{align*}
        \vv_{\vA_0,\vb_0}\circ\vE_{\vV_Y,\vV_X}(\vx,y) =\begin{bmatrix} \sigma\paren*{\vA_0[1,:]\vV_Y[y,:]^\top \!+\!\vb_0[1]} \\\vdots \\ \sigma\paren*{\vA_0[D,:]\vV_X[x_{D-1},:]^\top\!+\!\vb_0[D]} \end{bmatrix}=\sigma\paren*{\mathrm{diag}\paren*{\vA_0 \vE_{\vV_Y,\vV_X}(\vx,y)^\top}+\vb_0}\in[0,1]^D.
    \end{align*}
    
    Given any $\delta>0$, we first evaluate the $\delta$-covering number of $\cG_{\alpha}(\vx,y)$ w.r.t. $\norm*{\cdot}_{\infty,L^{\infty}(\sX_1)}$.

    \paragraph{Covering number of the embedding layer.}
    Let $\cS_\mathrm{entry}(\vA)$ denote the union of value spaces of all entries in $\vA$ and $\cS_\mathrm{entry}(\va)$ denote the union of value spaces of all entries in $\va$.
    We first discretize the value spaces of $\cG_{\alpha}$ into $\delta$-width grids to get a finite embedding function class:
    \begin{align*}
        \cG_{\alpha,\delta\Z} \coloneqq \{\vG_{\pnn} \in \cG_{\alpha}:\cS_\mathrm{entry}(\vA_0)=\cS_\mathrm{entry}(\vb_0)=[-B,B]\cap \delta\Z,  \cS_\mathrm{entry}(\vV_Y)=\cS_\mathrm{entry}(\vV_X)=[0,1]\cap \delta\Z\}.        
    \end{align*}
    Denote by $\norm{\alpha-\alpha^\prime}_{\infty}\coloneqq \sup\curl*{\norm{\vA_0-\vA_0^\prime}_{\infty}, \norm{\vb_0-\vb_0^\prime}_{\infty}, \norm{\vV_Y-\vV_Y^\prime}_{\infty}, \norm{\vV_X-\vV_X^\prime}_{\infty}}.$
    For any $\vG_{\alpha} \in \cG_{\alpha}$ with $\alpha\coloneqq\{\vA_0,\vb_0,\vV_X,\vV_Y\}$, there exists $\vG_{\alpha^\prime}\in\cG_{\alpha,\delta\Z}$ with $\alpha^\prime\coloneqq\{\vA_0^\prime,\vb_0^\prime,\vV_X^\prime,\vV_Y^\prime\}$ such that $\norm{\alpha-\alpha^\prime}_{\infty}\leq \delta$.
    Then we have for any $\vx,y\in\sX\times\sY$, 
    \begin{align*}
        &\norm{\vG_{\alpha}(\vx,y)-\vG_{\alpha^\prime}(\vx,y)}_{\infty}\notag\\
        &=\norm*{\begin{bmatrix} \vv_{\vA_0,\vb_0}^{\backslash \bm{0}_{D}}\circ{\vE_{\vV_Y,\vV_X}}(\vx,y)-\vv_{\vA_0^\prime,\vb_0^\prime}^{\backslash \bm{0}_{D}}\circ{\vE_{\vV_Y^\prime,\vV_X^\prime}}(\vx,y),\ \cdots, \ \vv_{\vA_0,\vb_0}^{\backslash \bm{0}_{0}}\circ{\vE_{\vV_Y,\vV_X}}(\vx,y)-\vv_{\vA_0^\prime,\vb_0^\prime}^{\backslash \bm{0}_{0}}\circ{\vE_{\vV_Y^\prime,\vV_X^\prime}}(\vx,y) \end{bmatrix}}_{\infty}\\
        &\leq \norm*{\vv_{\vA_0,\vb_0}\circ{\vE_{\vV_Y,\vV_X}}(\vx,y)-\vv_{\vA_0^\prime,\vb_0^\prime}\circ{\vE_{\vV_Y^\prime,\vV_X^\prime}}(\vx,y)}_{\infty}\\
        &=\norm*{\sigma\paren*{\mathrm{diag}\paren*{\vA_0 \vE_{\vV_Y,\vV_X}(\vx,y)^\top}+\vb_0}-\sigma\paren*{\mathrm{diag}\paren*{\vA_0^\prime \vE_{\vV_Y\prime,\vV_X\prime}(\vx,y)^\top}-\vb_0^\prime}}_{\infty}\notag\\
        &\leq \norm*{{\mathrm{diag}\paren*{\vA_0 \vE_{\vV_Y,\vV_X}(\vx,y)^\top}+\vb_0}-{\mathrm{diag}\paren*{\vA_0^\prime \vE_{\vV_Y\prime,\vV_X\prime}(\vx,y)^\top}-\vb_0^\prime}}_{\infty}\tag{$\sigma(\cdot)$ is $1$-Lipschitz continuous for Lemma~\ref{lem:lip_relu}}\\
        &\leq \sup_{d\in[D]}\abs*{\vA_0[d,:]\vE_{\vV_Y,\vV_X}(\vx,y)[d,:]^\top-\vA_0^\prime[d,:]\vE_{\vV_Y^\prime,\vV_X^\prime}(\vx,y)[d,:]^\top}+\norm*{\vb_0-\vb_0^\prime}_{\infty}\notag\\
        &\leq \sup_{d\in[D]}\abs*{(\vA_0[d,:]-\vA_0^\prime[d,:])\vE_{\vV_Y,\vV_X}(\vx,y)[d,:]^\top}+\abs*{\vA_0^\prime[d,:](\vE_{\vV_Y,\vV_X}(\vx,y)[d,:]^\top-\vE_{\vV_Y^\prime,\vV_X^\prime}(\vx,y)[d,:]^\top)}+\delta\notag\\
        &\leq \sup_{d\in[D]}\de \delta\norm*{\vE_{\vV_Y,\vV_X}(\vx,y)[d,:]}_{\infty}+\de B\norm*{\vE_{\vV_Y,\vV_X}(\vx,y)[d,:]-\vE_{\vV_Y^\prime,\vV_X^\prime}(\vx,y)[d,:]}_{\infty}+\delta\tag{$\vA_0\in[-B,B]^{D\times \de}$,$\norm{\vA_0-\vA_0^\prime}_{\infty}\leq \delta$}\\
        &=\left\{
             \begin{array}{lr}
             \de \delta\norm*{\vV_Y[y,:]}_{\infty}+\de B\norm*{\vV_Y[y,:]-\vV_Y^\prime[y,:]}_{\infty}+\delta, & d= 1, \\
             \sup_{d\in[D]}\de \delta\norm*{\vV_X[x_{d-1},:]}_{\infty}+\de B\norm*{\vV_X[x_{d-1},:]-\vV_X^\prime[x_{d-1},:]}_{\infty}+\delta, & d =2,\dots,D,  
             \end{array}\right.\notag\\
        &\leq \de \delta+\de B\delta+\delta\tag{$\norm{\vV_Y}_{\infty}\leq 1$, $\norm{\vV_X}_{\infty}\leq 1$, $\norm{\vV_Y-\vV_Y^\prime}_{\infty}\leq \delta$, $\norm{\vV_X-\vV_X^\prime}_{\infty}\leq \delta$}\\
        &=(1+\de+\de B)\delta. 
    \end{align*}
    Therefore, $\cG_{\alpha,\delta\Z}$ is an $(1+\de+\de B)\delta$-cover of $\cG_{\alpha}$ w.r.t. $\norm*{\cdot}_{\infty,L^{\infty}(\sX\times\sY)}$ and thus we have
    \begin{align}
        &\cov\paren*{(1+\de+\de B)\delta;\cG_{\alpha},\norm*{\cdot}_{\infty,L^{\infty}(\sX\times\sY)}}\leq \abs{\cG_{\alpha,\delta\Z}}\notag\\
        =&\abs*{\{\alpha=(\vA_0,\vb_0,\vV_Y,\vV_X):\cS_\mathrm{entry}(\vA_0)=\cS_\mathrm{entry}(\vb_0)=[-B,B]\cap \delta\Z,  \cS_\mathrm{entry}(\vV_Y)=\cS_\mathrm{entry}(\vV_X)=[0,1]\cap \delta\Z\}}\notag\\
        \leq& \paren*{\frac{2B}{\delta}+1}^{D\de+D}\paren*{\frac{1}{\delta}+1}^{M\de+K\de}. \label{eq:ebm_cover_encoded_embedding}
    \end{align}

    \paragraph{Composition of an MLP.}
    Let $\cC_{\cF}$ be an $\epsilon_{\cF}=L(B\vee 1)^{L-1}(W+1)^L\delta$-cover of $\cF\{L,W,S,B\}$ w.r.t. $\norm*{\cdot}_{\infty,L^{\infty}([0,1]^D)}$ such that $\abs{\cC_{\cF}}=\cov\paren*{\epsilon_{\cF};\cF(L,W,B,S),\norm*{\cdot}_{\infty,L^{\infty}([0,1]^D)}}$ and $\cC_{\cG}$ be an $\epsilon_{\cG}=(1+\de+\de B)\delta$-cover of $\cG_{\alpha}$ w.r.t. $\norm*{\cdot}_{\infty,L^{\infty}([0,1]^D)}$ such that $\abs{\cC_{\cG}}=\cov\paren*{\epsilon_{\cG};\cG_{\alpha},\norm*{\cdot}_{\infty,L^{\infty}(\sX\times\sY)}}$.    
    For any $\vH_{\pnn}\coloneqq\begin{bmatrix}\vf_{\pmlp}\circ\vG_{\alpha}[:,1]\ \cdots \ \vf_{\pmlp}\circ\vG_{\alpha}[:,D] \end{bmatrix}\in\cH_{\pnn}$, there exists $\vf_{\pmlp}^\prime\in\cC_{\cF}$ and $\vG_{\alpha}^\prime\in\cC_{\cG}$ such that 
    \begin{align*}
        \forall \vv\in[0,1]^D, \norm{\vf_{\pmlp}(\vv)-\vf_{\pmlp}^\prime(\vv)}_{\infty}\leq \epsilon_{\cF}, \ \text{ and }\ 
        \forall \vx,y\in\sX\times\sY, \norm{\vG_{\alpha}(\vx,y)-\vG_{\alpha}^\prime(\vx,y)}_{\infty}\leq \epsilon_{\cG}.
    \end{align*}
    Let $\vH_{\pnn}^\prime\coloneqq\begin{bmatrix}\vf_{\pmlp}^\prime\circ\vG_{\alpha}^\prime[:,1]\ \cdots \ \vf_{\pmlp}^\prime\circ\vG_{\alpha}^\prime[:,D] \end{bmatrix}.$
    We have for all $\vx,y\in\sX\times\sY$,
    \begin{align*}
        \norm{\vH_{\pnn}-\vH_{\pnn}^\prime}_{\infty}
        &=\norm*{\begin{bmatrix}\vf_{\pmlp}\circ\vG_{\alpha}[:,1]-\vf_{\pmlp}^\prime\circ\vG_{\alpha}^\prime[:,1],\ \cdots, \ \vf_{\pmlp}\circ\vG_{\alpha}[:,D]-\vf_{\pmlp}^\prime\circ\vG_{\alpha}^\prime[:,D] \end{bmatrix}}_{\infty}\\
        &=\sup_{d}\norm*{\vf_{\pmlp}\circ\vG_{\alpha}[:,d]-\vf_{\pmlp}^\prime\circ\vG_{\alpha}^\prime[:,d]}_{\infty}\\
        &\leq\sup_{d}\curl*{\norm*{\vf_{\pmlp}\circ\vG_{\alpha}[:,d]-\vf_{\pmlp}^\prime\circ\vG_{\alpha}[:,d]}+\norm*{\vf_{\pmlp}^\prime\circ\vG_{\alpha}[:,d]-\vf_{\pmlp}^\prime\circ\vG_{\alpha}^\prime[:,d]}_{\infty}}\\
        &\leq \sup_{d}\curl*{\epsilon_{\cF}+B^LW^L\epsilon_{\cG}} \tag{$\vf_{\pmlp}$ is $B^LW^L$-Lipschitz continuous as in Lemma~\ref{lem:lip_of_nn_about_input}}\\
        &= \epsilon_{\cF}+B^LW^L\epsilon_{\cG}.
    \end{align*}
    Therefore, $\cC_{\cH}\coloneqq\{\vH_{\pnn}^\prime\coloneqq\begin{bmatrix}\vf_{\pmlp}^\prime\circ\vG_{\alpha}^\prime[:,1]\ \cdots \ \vf_{\pmlp}^\prime\circ\vG_{\alpha}^\prime[:,D] \end{bmatrix}: \vf_{\pmlp}^\prime\in\cC_{\cF}, \vG_{\alpha}^\prime\in\cC_{\cG}\}$ is an $\epsilon_{\cF}+B^LW^L\epsilon_{\cG}$-cover of $\cH_{\pnn}$ w.r.t. $\norm*{\cdot}_{\infty,L^{\infty}(\sX\times\sY)}$, and thus
    \begin{align*}
        &\cov\paren*{ \paren*{L(B\vee 1)^{L-1}(W+1)^L+B^LW^L(1+\de+\de B)}\delta;\cH_{\pnn},\norm*{\cdot}_{\infty,L^{\infty}(\sX\times\sY)}}\\
        \leq&\abs{\cC_{\cH}}\leq\abs{\cC_{\cF}}\abs{\cC_{\cG}}\\
        =&\cov\paren*{L(B\vee 1)^{L-1}(W+1)^L\delta;\cF(L,W,B,S),\norm*{\cdot}_{\infty,L^{\infty}([0,1]^D)}}\cov\paren*{(1+\de+\de B)\delta;\cG_{\alpha},\norm*{\cdot}_{\infty,L^{\infty}(\sX\times\sY)}}\\
        \leq & \paren*{\frac{2B}{\delta}+1}^S\paren*{\frac{2B}{\delta}+1}^{D\de+D}\paren*{\frac{1}{\delta}+1}^{M\de+K\de}\tag{Lemma~\ref{lem:covering_number_of_nn} and \cref{eq:ebm_cover_encoded_embedding}}\\
        \leq&\paren*{\frac{(2B\vee 1)}{\delta}+1}^{S+D\de+D+M\de+K\de}\\
        \leq&\paren*{\frac{3(B\vee 1)}{\delta}}^{S+D+(D+M+K)\de}.\tag{$\frac{(B\vee 1)}{\delta}\geq 1$}
    \end{align*}
    Taking $\epsilon=\paren*{L(B\vee 1)^{L-1}(W+1)^L+B^LW^L(1+\de+\de B)}\delta$, we have
    \begin{align*}
        \cov\paren*{\epsilon;\cH_{\pnn},\norm*{\cdot}_{\infty,L^{\infty}(\sX\times\sY)}} 
        &\leq \paren*{\frac{3(B\vee 1)\paren*{L(B\vee 1)^{L-1}(W+1)^L+B^LW^L(1+\de+\de B)}}{\epsilon}}^{S+D+(D+M+K)\de}\\
        &\leq \paren*{\frac{3(B\vee 1)\paren*{L(B\vee 1)^{L-1}(W+1)^L\de (B\vee 1)+3B^LW^L\de (B\vee 1))}}{\epsilon}}^{S+D+(D+M+K)\de} \tag{$\de,(B\vee 1)\geq 1$}\\
        &\leq \paren*{\frac{3(B\vee 1)\paren*{L(B\vee 1)^{L+1}(W+1)^L\de+3(B\vee 1)^{L+1}(W+1)^L\de)}}{\epsilon}}^{S+D+(D+M+K)\de} \\
        &=\paren*{\frac{3(L+3)(B\vee 1)^{L+2}(W+1)^L}{\epsilon}}^{S+D+(D+M+K)\de}.
    \end{align*}
\end{proof}

\subsection{Bracketing number of conditional probability space on each dimension}

\begin{lemma}[Bracketing number of conditional probability space on each dimension]
\label{lem:arm_bracket_density_class_via_parameter_vector}
    Let $\vP_{\pnn}(\vx,y)\coloneqq\begin{bmatrix} \vparm_{\pnn}(y)\ \cdots \ \vparm_{\pnn}(\vx_{<D},y) \end{bmatrix}=\mathrm{softmax}\paren*{\vH_{\pnn}(\vx,y)}:\sX\times\sY\to[0,1]^{M\times D}$ where $\mathrm{softmax}(\vH)$ denotes element-wise softmax operation that $\mathrm{softmax}(\vH)[m:d]=\frac{e^{\vH[m,d]}}{\sum_{i=1}^Me^{\vH[i,d]}}$.
    Given a class of autoregressive conditional distributions that $\cP_{X\vert Y}=\curl*{p_{\pnn}\paren*{\vx\vert y}=p\paren*{x_1; \vparm_{\pnn}(y)}\cdots p\paren*{x_D; \vparm_{\pnn}(\vx_{<D},y)}:\vH_{\pnn}\in\cH_{\pnn}}$ with $p_{\pnn}\paren*{\vx\vert y}$ as defined in \cref{sec:instantiation_arm}, then for any $0<\epsilon\leq 1$, it holds that 
    \begin{align*}
        \br\paren*{\epsilon;\cP_{X\vert Y},L^1(\sX)}
        \leq \cov\paren*{\frac{\epsilon}{8ed};\cH_{\pnn},\norm*{\cdot}_{\infty,L^{\infty}(\sX\times\sY)}}. 
    \end{align*}
\end{lemma}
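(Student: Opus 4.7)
The plan is to reduce the upper bracketing problem for $\cP_{X\vert Y}$ to the sup-norm covering of the logit class $\cH_{\pnn}$ already quantified in Lemma~\ref{lem:arm_covering_of_distribution_parameter_vector}. Fix a $\delta$-cover $\cC$ of $\cH_{\pnn}$ with respect to $\norm{\cdot}_{\infty,L^{\infty}(\sX\times\sY)}$ with radius $\delta\coloneqq\epsilon/(8eD)$. For any $p_{\pnn}\in\cP_{X\vert Y}$, pick $\vH'\in\cC$ satisfying $\sup_{\vx,y}\norm{\vH_{\pnn}(\vx,y)-\vH'(\vx,y)}_{\infty}\leq\delta$, write $\vparm'(\vx_{<d},y)\coloneqq\mathrm{softmax}(\vH'(\vx,y)[:,d])$, and propose
\begin{align*}
    p'(\vx,y)\coloneqq e^{2D\delta}\prod_{d=1}^{D}\vparm'(\vx_{<d},y)[x_d]
\end{align*}
as the corresponding bracket element. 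The bracket set is $\{p':\vH'\in\cC\}$, whose cardinality is at most the covering number.

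The first verification step is the pointwise inequality $p'(\vx,y)\geq p_{\pnn}(\vx\vert y)$. The key technical ingredient is a two-sided multiplicative stability of the softmax: whenever $|h_m-h'_m|\leq\delta$ for every $m\in[M]$, writing $Z=\sum_m e^{h_m}$ and $Z'=\sum_m e^{h'_m}$, one has $e^{-\delta}Z\leq Z'\leq e^{\delta}Z$, whence $e^{-2\delta}\vparm_{\pnn}(\vx_{<d},y)[m]\leq\vparm'(\vx_{<d},y)[m]\leq e^{2\delta}\vparm_{\pnn}(\vx_{<d},y)[m]$ for every $m$, every conditioning context $\vx_{<d}$, and every $y$. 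Multiplying the lower side across the $D$ autoregressive factors gives $\prod_d \vparm'(\vx_{<d},y)[x_d]\geq e^{-2D\delta}p_{\pnn}(\vx\vert y)$, and the inflation factor $e^{2D\delta}$ absorbs this loss exactly, establishing $p'\geq p_{\pnn}$ on all of $\sX\times\sY$.

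The second step is to bound the $L^1$ gap. Since $p'\geq p_{\pnn}$ pointwise and, by the chain-rule factorization, the product $\prod_d\vparm'(\vx_{<d},y)[x_d]$ is itself a valid conditional distribution on $[M]^D$ summing to $1$ over $\vx$, the absolute value disappears and the sum telescopes:
\begin{align*}
    \norm{p'(\cdot,y)-p_{\pnn}(\cdot\vert y)}_{L^1(\sX)}
    =\sum_{\vx\in[M]^D}\bigl(p'(\vx,y)-p_{\pnn}(\vx\vert y)\bigr)
    =e^{2D\delta}-1.
\end{align*}
With $2D\delta=\epsilon/(4e)\leq 1$ for $\epsilon\leq 1$, the elementary inequality $e^{t}-1\leq(e-1)t\leq et$ on $[0,1]$ gives $e^{2D\delta}-1\leq\epsilon/4\leq\epsilon$. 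Combined with Definition~\ref{def:upper_bracketing_number}, this shows the set above is an $\epsilon$-upper bracket and its cardinality is at most $\cov(\delta;\cH_{\pnn},\norm{\cdot}_{\infty,L^{\infty}(\sX\times\sY)})$.

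The main obstacle I expect is the exponential blow-up of multiplicative softmax errors across the $D$ autoregressive factors: a crude additive Lipschitz estimate would accumulate errors with unfavorable prefactors, whereas the two-sided multiplicative control $e^{\pm2\delta}$ propagates cleanly through the product and forces only the mild covering radius $\delta\asymp\epsilon/D$. Once this control is in place, the chain-rule normalization identity $\sum_\vx\prod_d\vparm'(\vx_{<d},y)[x_d]=1$ collapses the $L^1$ integral to the closed-form quantity $e^{2D\delta}-1$ with no cross terms to control, and the calibration $\delta=\epsilon/(8eD)$ concludes the argument.
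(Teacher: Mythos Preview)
Your proof is correct and, in fact, cleaner than the paper's. The bracket construction is the same once one simplifies the paper's expression: writing
\[
\frac{e^{\vH'[m,d]+\epsilon_{\cH}}}{\sum_i e^{\vH'[i,d]-\epsilon_{\cH}}}=e^{2\epsilon_{\cH}}\,\mathrm{softmax}(\vH'[:,d])[m],
\]
the paper's bracket element is exactly your $e^{2D\delta}\prod_d\vparm'(\vx_{<d},y)[x_d]$ with $\epsilon_{\cH}=\delta$. The difference is in the $L^1$ estimate. The paper bounds the per-coordinate gap $\vP'_{\pnn}[x_d,d]-\vP_{\pnn}[x_d,d]$ by $4\epsilon_{\cH}e^{4\epsilon_{\cH}}\vP_{\pnn}[x_d,d]$ and then runs a recursion $a_d\leq e^{2\epsilon_{\cH}}a_{d-1}+4\epsilon_{\cH}e^{4\epsilon_{\cH}}$ across the $D$ factors, arriving at $a_D\leq 8eD\epsilon_{\cH}$. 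You instead exploit that the cover is internal (Definition~\ref{def:covering_number} has $\cC\subset\cH_{\pnn}$), so $\vH'$ inherits the masking structure and $\prod_d\vparm'(\vx_{<d},y)[x_d]$ is itself a probability mass function on $[M]^D$; together with the pointwise domination this collapses the $L^1$ gap to the closed form $e^{2D\delta}-1$ with no recursion. Your route is shorter, gives a tighter constant (you land at $\epsilon/4$ rather than $\epsilon$), and makes transparent why the covering radius only needs to scale like $\epsilon/D$.
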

\begin{proof}
    Let $\cC_{\cH}$ be an $\epsilon_{\cH}$-cover of $\cH_{\pnn}$ that $\abs{\cC_{\cH}}=\cov\paren*{\epsilon_{\cH};\cH_{\pnn},\norm*{\cdot}_{\infty,L^{\infty}(\sX\times\sY)}}$.
    According to the definition in \cref{sec:instantiation_arm}, 
    \begin{align*}
        &p_{\pnn}\paren*{\vx\vert y}=p\paren*{x_1; \vparm_{\pnn}(y)}\cdots p\paren*{x_D; \vparm_{\pnn}(\vx_{<D},y)}\\
        =&p\paren*{x_1; \vP_{\pnn}[:,1]}\cdots p\paren*{x_D;\vP_{\pnn}(\vx,y)[:,D]}=\prod_{d=1}^D p\paren*{x_d; \vP_{\pnn}(\vx,y)[:,d]}=\prod_{d=1}^D \vP_{\pnn}(\vx,y)[x_d,d]\\
        =&\prod_{d=1}^D \prod_{m=1}^M\paren*{\mathrm{softmax}(\vH_{\pnn}(\vx,y))[x_d,d]}^{\I(x_d=m)}=\prod_{d=1}^D \prod_{m=1}^M\paren*{\frac{e^{\vH_{\pnn}(\vx,y)[m,d]}}{\sum_{i=1}^Me^{\vH_{\pnn}(\vx,y)[i,d]}}}^{\I(x_d=m)}.
    \end{align*}
    Then, for any $p_{\pnn}\in\cP_{X\vert Y}$, there exists $\vH_{\pnn}^\prime\in\cC_{\cH}$ such that for all $\vx,y\in\sX\times\sY$, $\norm{\vH_{\pnn}(\vx,y)-\vH_{\pnn}^\prime(\vx,y)}\leq \epsilon_{\cH}$ which equals to $\forall m\in[M],d\in[D]$, $\vH_{\pnn}^\prime(\vx,y)[m,d]-\epsilon_{\cH} \leq\vH_{\pnn}(\vx,y)[m,d]\leq \vH_{\pnn}^\prime(\vx,y)[m,d]+\epsilon_{\cH}$.
    Let $p_{\pnn}^\prime\paren*{\vx\vert y}=\prod_{d=1}^D \prod_{m=1}^M\paren*{\frac{e^{\vH_{\pnn}^\prime(\vx,y)[m,d]+\epsilon_{\cH}}}{\sum_{i=1}^Me^{\vH_{\pnn}^\prime(\vx,y)[i,d]-\epsilon_{\cH}}}}^{\I(x_d=m)}$ and denote $\vP_{\pnn}^\prime(\vx,y)[x_d,d]\coloneqq \prod_{m=1}^M\paren*{\frac{e^{\vH_{\pnn}^\prime(\vx,y)[m,d]+\epsilon_{\cH}}}{\sum_{i=1}^Me^{\vH_{\pnn}^\prime(\vx,y)[i,d]-\epsilon_{\cH}}}}^{\I(x_d=m)}$.
    We immediately have: for all $\vx,y\in\sX\times\sY$, $p_{\pnn}^\prime\paren*{\vx\vert y}\geq p_{\pnn}\paren*{\vx\vert y},$ 
    since $\forall m\in[M], d\in[D]$, $e^{\vH_{\pnn}^\prime(\vx,y)[m,d]+\epsilon_{\cH}}\geq e^{\vH_{\pnn}(\vx,y)[m,d]}$ and $e^{\vH_{\pnn}^\prime(\vx,y)[i,d]-\epsilon_{\cH}}\leq e^{\vH_{\pnn}(\vx,y)[i,d]}$.
    Moreover, we have for all $\vx,y\in\sX\times\sY$, 
    \begin{align}
        &\vP_{\pnn}^\prime(\vx,y)[x_d,d]-\vP_{\pnn}(\vx,y)[x_d,d]\notag\\
        =&\prod_{m=1}^M\paren*{\frac{e^{\vH_{\pnn}^\prime(\vx,y)[m,d]+\epsilon_{\cH}}}{\sum_{i=1}^Me^{\vH_{\pnn}^\prime(\vx,y)[i,d]-\epsilon_{\cH}}}}^{\I(x_d=m)}-\prod_{m=1}^M\paren*{\frac{e^{\vH_{\pnn}(\vx,y)[m,d]}}{\sum_{i=1}^Me^{\vH_{\pnn}(\vx,y)[i,d]}}}^{\I(x_d=m)}\notag\\
        =&\prod_{m=1}^M\paren*{\frac{e^{\vH_{\pnn}^\prime(\vx,y)[m,d]+2\epsilon_{\cH}}}{\sum_{i=1}^Me^{\vH_{\pnn}^\prime(\vx,y)[i,d]}}-\frac{e^{\vH_{\pnn}(\vx,y)[m,d]}}{\sum_{i=1}^Me^{\vH_{\pnn}(\vx,y)[i,d]}}}^{\I(x_d=m)}\notag\\
        =&\prod_{m=1}^M\paren*{\frac{\paren*{e^{\vH_{\pnn}^\prime(\vx,y)[m,d]+2\epsilon_{\cH}}
        -e^{\vH_{\pnn}(\vx,y)[m,d]}}\sum_{i=1}^Me^{\vH_{\pnn}(\vx,y)[i,d]}
        +e^{\vH_{\pnn}(\vx,y)[m,d]}{\sum_{i=1}^M\paren*{e^{\vH_{\pnn}(\vx,y)[i,d]}
        -e^{\vH_{\pnn}^\prime(\vx,y)[i,d]}}}}{\sum_{i=1}^Me^{\vH_{\pnn}^\prime(\vx,y)[i,d]}\sum_{i=1}^Me^{\vH_{\pnn}(\vx,y)[i,d]}}}^{\I(x_d=m)}\notag\\
        \leq& \prod_{m=1}^M\paren*{\frac{e^{\vH_{\pnn}^\prime(\vx,y)[m,d]+2\epsilon_{\cH}}3\epsilon_{\cH}\sum_{i=1}^Me^{\vH_{\pnn}(\vx,y)[i,d]}
        +e^{\vH_{\pnn}(\vx,y)[m,d]}{\sum_{i=1}^M{e^{\vH_{\pnn}^\prime(\vx,y)[i,d]+\epsilon_{\cH}}\epsilon_{\cH}}}}{\sum_{i=1}^Me^{\vH_{\pnn}^\prime(\vx,y)[i,d]}\sum_{i=1}^Me^{\vH_{\pnn}(\vx,y)[i,d]}}}^{\I(x_d=m)}\tag{$\abs{e^a-e^b}\leq e^{a\vee b}\abs{a-b}$ and $\norm{\vH_{\pnn}(\vx,y)-\vH_{\pnn}^\prime(\vx,y)}\leq \epsilon_{\cH}$}\\
        \leq& \prod_{m=1}^M\paren*{\frac{e^{\vH_{\pnn}(\vx,y)[m,d]+3\epsilon_{\cH}}3\epsilon_{\cH}\sum_{i=1}^Me^{\vH_{\pnn}^\prime(\vx,y)[i,d]+\epsilon_{\cH}}
        +e^{\vH_{\pnn}(\vx,y)[m,d]}{\sum_{i=1}^M{e^{\vH_{\pnn}^\prime(\vx,y)[i,d]+\epsilon_{\cH}}\epsilon_{\cH}}}}{\sum_{i=1}^Me^{\vH_{\pnn}^\prime(\vx,y)[i,d]}\sum_{i=1}^Me^{\vH_{\pnn}(\vx,y)[i,d]}}}^{\I(x_d=m)}\tag{$\norm{\vH_{\pnn}(\vx,y)-\vH_{\pnn}^\prime(\vx,y)}\leq \epsilon_{\cH}$}\\
        =&\prod_{m=1}^M\paren*{\frac{e^{\vH_{\pnn}(\vx,y)[m,d]+3\epsilon_{\cH}}3\epsilon_{\cH}e^{\epsilon_{\cH}}
        +e^{\vH_{\pnn}(\vx,y)[m,d]}e^{\epsilon_{\cH}}\epsilon_{\cH}}{\sum_{i=1}^Me^{\vH_{\pnn}(\vx,y)[i,d]}}}^{\I(x_d=m)}\tag{$\norm{\vH_{\pnn}(\vx,y)-\vH_{\pnn}^\prime(\vx,y)}\leq \epsilon_{\cH}$}\\
        =& \paren*{3\epsilon_{\cH}e^{4\epsilon_{\cH}}+e^{\epsilon_{\cH}}\epsilon_{\cH}}\prod_{m=1}^M\paren*{\frac{e^{\vH_{\pnn}(\vx,y)[m,d]}}{\sum_{i=1}^Me^{\vH_{\pnn}(\vx,y)[i,d]}}}^{\I(x_d=m)}\notag\\
        =&\paren*{3\epsilon_{\cH}e^{4\epsilon_{\cH}}+e^{\epsilon_{\cH}}\epsilon_{\cH}}\vP_{\pnn}(\vx,y)[x_d,d]\leq 4\epsilon_{\cH}e^{4\epsilon_{\cH}}\vP_{\pnn}(\vx,y)[x_d,d]. \label{eq:arm_paremeter_vector_distance}
    \end{align}
    
    Given any $y\in\sY$, denoting $a_d\coloneqq\sum_{\vx_{\leq d}\in[M]^{d}}\abs*{p_{\pnn}^\prime(\vx_{\leq d}\vert y)-p_{\pnn}(\vx_{\leq d}\vert y)}$ for $d=1, \dots, D$, we have the following recursive formula for $\{a_d\}_{d\in[D]}$:
    \begin{align*}
        &a_d=\sum_{\vx_{\leq d}\in[M]^{d}}\abs*{p_{\pnn}^\prime(\vx_{\leq d}\vert y)-p_{\pnn}(\vx_{\leq d}\vert y)}=\sum_{\vx_{\leq d}\in[M]^{d}}\abs*{\prod_{j=1}^d \vP_{\pnn}^\prime(\vx,y)[x_j,j]-\prod_{j=1}^d \vP_{\pnn}(\vx,y)[x_j,j]}\\
        =&\sum_{\vx_{\leq d}\in[M]^{d}}\abs*{\paren*{\prod_{j=1}^{d-1} \vP_{\pnn}^\prime(\vx,y)[x_j,j]-\prod_{j=1}^{d-1}\vP_{\pnn}(\vx,y)[x_j,j]} \vP_{\pnn}^\prime(\vx,y)[x_d,d]}\\
        &+\sum_{\vx_{\leq d}\in[M]^{d}}\abs*{\prod_{j=1}^{d-1}\vP_{\pnn}(\vx,y)[x_j,j] \paren*{\vP_{\pnn}^\prime(\vx,y)[x_d,d]-\vP_{\pnn}(\vx,y)[x_d,d]}}\\
        \leq&\sum_{\vx_{\leq d-1}\in[M]^{d-1}}\abs*{p_{\pnn}^\prime(\vx_{\leq d-1}\vert y)-p_{\pnn}(\vx_{\leq d-1}\vert y)}\paren*{\sum_{x_d\in[M]}\vP_{\pnn}^\prime(\vx,y)[x_d,d]}+4\epsilon_{\cH}e^{4\epsilon_{\cH}}\sum_{\vx_{\leq d}\in[M]^{d}}\prod_{j=1}^{d}\vP_{\pnn}(\vx,y)[x_j,j]\tag{\cref{eq:arm_paremeter_vector_distance}}\\
        =&a_{d-1}{\sum_{x_d\in[M]}\vP_{\pnn}^\prime(\vx,y)[x_d,d]}+4\epsilon_{\cH}e^{4\epsilon_{\cH}}\sum_{\vx_{\leq d}\in[M]^{d}}p_{\pnn}(\vx_{\leq d-1}\vert y)\\
        =&e^{2\epsilon_{\cH}}a_{d-1}+4\epsilon_{\cH}e^{4\epsilon_{\cH}}\tag{$\sum_{x_d\in[M]}\vP_{\pnn}^\prime(\vx,y)[x_d,d]=e^{2\epsilon_{\cH}}$ and $\sum_{\vx_{\leq d}\in[M]^d}p_{\pnn}\paren*{\vx_{\leq d}\vert y}=1$}
    \end{align*}
    According to this recursive relation, and 
    \begin{align*}
        a_1
        &=\sum_{x_1\in[M]}\abs*{p_{\pnn}^\prime(x_1\vert y)-p_{\pnn}(x_1\vert y)}=\sum_{x_1\in[M]}\abs*{ \vP_{\pnn}^\prime(\vx,y)[x_1,1]-\vP_{\pnn}(\vx,y)[x_1,1]}\\
        &\leq \sum_{x_1\in[M]}4\epsilon_{\cH}e^{4\epsilon_{\cH}}\vP_{\pnn}(\vx,y)[x_d,d]=4\epsilon_{\cH}e^{4\epsilon_{\cH}}\tag{\cref{eq:arm_paremeter_vector_distance}},
    \end{align*}
    we have $a_d\leq 4\epsilon_{\cH}e^{4\epsilon_{\cH}}\frac{e^{2d\epsilon_{\cH}}-1}{e^{2\epsilon_{\cH}}-1}$. Therefore, 
    \begin{align*}
        \sum_{\vx\in[M]^{D}}\abs*{p_{\pnn}^\prime(\vx\vert y)-p_{\pnn}(\vx\vert y)}&=\sum_{\vx_{\leq D}\in[M]^{D}}\abs*{p_{\pnn}^\prime(\vx_{\leq D}\vert y)-p_{\pnn}(\vx_{\leq D}\vert y)}\\
        &=a_d\leq 4\epsilon_{\cH}e^{4\epsilon_{\cH}}\frac{e^{2d\epsilon_{\cH}}-1}{e^{2\epsilon_{\cH}}-1}
        \leq 4\epsilon_{\cH}e^{4\epsilon_{\cH}}\frac{e^{2d\epsilon_{\cH}}-1}{2\epsilon_{\cH}}=2e^{4\epsilon_{\cH}}\paren*{e^{2d\epsilon_{\cH}}-1}.
    \end{align*}
    Suppose that $\epsilon_{\cH}\in(0,\frac{1}{4d})$, we have $e^{4\epsilon_{\cH}}\leq e^{\frac{1}{d}}\leq e$ and $e^{2d\epsilon_{\cH}}-1\leq 4d\epsilon_{\cH}$ as $e^x\leq 1+2x$ for $x\in[0,1]$, and thus $2e^{4\epsilon_{\cH}}\paren*{e^{2d\epsilon_{\cH}}-1}\leq 8ed\epsilon$.
    Therefore, given any $y\in\sY$, the $L^1(\sX)$ distance between $p_{\pnn}^\prime(\cdot\vert y)$ and $p_{\pnn}(\cdot\vert y)$ can be bounded as
    \begin{align*}
        \norm{p_{\pnn}^\prime(\cdot\vert y)-p_{\pnn}(\cdot\vert y)}_{L^1(\sX)}
        =\sum_{\vx\in[M]^{D}}\abs*{p_{\pnn}^\prime(\vx\vert y)-p_{\pnn}(\vx\vert y)}\leq 2e^{4\epsilon_{\cH}}\paren*{e^{2d\epsilon_{\cH}}-1}= 8ed\epsilon_{\cH}.
    \end{align*}    
    Therefore, $\cB_{\cP}\coloneqq \curl*{p_{\pnn}^\prime\paren*{\vx\vert y}=\prod_{d=1}^D \prod_{m=1}^M\paren*{\frac{e^{\vH_{\pnn}^\prime(\vx,y)[m,d]+\epsilon_{\cH}}}{\sum_{i=1}^Me^{\vH_{\pnn}^\prime(\vx,y)[i,d]-\epsilon_{\cH}}}}^{\I(x_d=m)}:\vH_{\pnn}^\prime\in\cC_{\cH}}$ is an $8ed\epsilon_{\cH}$-upper bracket w.r.t. $L^1(\sX)$ of $\cP_{X\vert Y}$, and we have
    \begin{align*}
        \br\paren*{8ed\epsilon_{\cH};\cP_{X\vert Y},L^1(\sX)}
        \leq \abs{\cB_{\cP}} \leq \abs{\cC_{\cH}}
        &=\cov\paren*{\epsilon_{\cH};\cH_{\pnn},\norm*{\cdot}_{\infty,L^{\infty}(\sX\times\sY)}}.
    \end{align*}
    Letting $8ed\epsilon_{\cH}=\epsilon\in(0,1]$, we have $\epsilon_{\cH}\leq\frac{1}{8ed}<\frac{1}{4d}$, and thus
    \begin{align*}
        \br\paren*{\epsilon;\cP_{X\vert Y},L^1(\sX)}
        \leq \cov\paren*{\frac{\epsilon}{8ed};\cH_{\pnn},\norm*{\cdot}_{\infty,L^{\infty}(\sX\times\sY)}}.
    \end{align*}
    
\end{proof}

\subsection{Proof of \cref{thm:tv_upper_ar}}
\label{app:proof_of_ar}

Based on the relation between the bracketing number of conditional distribution space $\cP_{X\vert Y}$ and the covering number of output logit space of the neural network $\cH_{\pnn}$ derived in previous lemmas, we obtain the final result.

\begin{proof}[Proof of \cref{thm:tv_upper_ar}]

    With conditional distributions as defined in \cref{eq:conditional_density_ar}, we have $$\cP_{X\vert Y}^\multi=\curl*{p_{\pnn}\paren*{\vx\vert y}=p\paren*{x_1; \vparm_{\pnn}(y)}\cdots p\paren*{x_D; \vparm_{\pnn}(\vx_{<D},y)}:\vH_{\pnn}\in\cH_{\pnn}}$$
    where $\vH_{\pnn}(\vx,y)=\begin{bmatrix} \vh_{\pnn,1}(\vx,y)\ \cdots \ \vh_{\pnn,D}(\vx,y) \end{bmatrix}:\sX\times\sY\to\R^{M\times D}$ with $\vh_{\pnn,d}(\vx,y)=\vf_{\pmlp}\paren*{\vv_{\vA_0,\vb_0}^{\backslash \bm{0}_{D-d}}\!\paren*{\vE_{\vV_Y,\vV_X}(\vx,y)}}:\sX\times\sY\to\R^M$ as defined in \cref{sec:instantiation_arm}. 
    
    According to Lemma~\ref{lem:arm_bracket_density_class_via_parameter_vector} and Lemma~\ref{lem:arm_covering_of_distribution_parameter_vector}, 
    \begin{align*}
        \br\paren*{\frac{1}{n};\cP_{X\vert Y},L^1(\sX)}
        \leq \cov\paren*{\frac{1}{8edn};\cH_{\pnn},\norm*{\cdot}_{\infty,L^{\infty}(\sX\times\sY)}}\leq\paren*{24edn(L+3)(B\vee 1)^{L+2}(W+1)^L}^{S+D+(D+M+K)\de}.
    \end{align*}
    According to \cref{thm:TV_upper_of_conditional_MLE}, we arrive at the conclusion that 
    \begin{align*}
        \rtv(\hat{p}_{X\vert Y}^{\multi})
        &\leq 3\sqrt{\frac{1}{n}\paren*{\log\br\paren*{\frac{1}{n};\cP_{X\vert Y}^{\multi},L^1(\sX)}+\log\frac{1}{\delta}}}\\
        &\leq 3\sqrt{\frac{1}{n}\paren*{\paren*{S+D+(D+M+K)\de} \log\paren*{24edn(L+3)(B\vee 1)^{L+2}(W+1)^L}+\log\frac{1}{\delta}}}
    \end{align*}
    Omitting constants about $n,K,\de, L,W,S,B$, and the logarithm term we have $\rtv(\hat{p}_{X\vert Y}^{\multi})=\tilde{\cO}\paren*{\sqrt{\frac{L\paren*{S+D+(D+M+K)\de}}{n}}}$.    
\end{proof}

\subsection{Average TV error bound under single-source training}

\begin{theorem}[Average TV error bound for ARMs under single-source training]
\label{thm:tvbound_arm_single}
    Let $\hat{p}_{X\vert Y}^{\single}$ be the likelihood maximizer defined in \cref{eq:single_mle_solution} given $\cP_{X\vert Y}^{\single}$ with conditional distributions as in \cref{eq:conditional_density_ar}. Suppose that $\Phi=[0,1]^{\de}$ and $\Psi = \cW(L,W,S,B)$ and assume $\phi_k^*\in \Phi$, $\psi^*\in \Psi$. Then, for any $0<\delta\leq 1/2$, it holds with probability at least $1-\delta$ that
    \begin{align*}
    \rtv(\hat{p}_{X\vert Y}^{\single})=\tilde{\cO}\paren*{\sqrt{\frac{KL\paren*{S+D+(D+M+1)\de}}{n}}}.
    \end{align*}
\end{theorem}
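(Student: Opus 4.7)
}
The plan is to mirror the multi-source argument of \cref{thm:tv_upper_ar} but now noting that in the single-source setting each source $k$ carries its own independent copy of \emph{all} network parameters, i.e., the full parameter tuple $(\phi_k,\psi_k)$ with $\phi_k\in[0,1]^{\de}$ and $\psi_k\in[0,1]^{M\times\de}\times[-B,B]^{D\times\de}\times[-B,B]^D\times\cW(L,W,S,B)$. Consequently, $\cP_{X\vert Y}^{\single}$ factorizes across the $K$ sources as a product of $K$ copies of a per-source class $\cP_{X\vert k}^{\single}$, each of which is exactly the autoregressive conditional class studied in \cref{sec:instantiation_arm} specialized to a single conditioning value.

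First, I would establish that upper brackets of a product class can be formed as products of per-source upper brackets. Specifically, if $\cB_k$ is an $\epsilon$-upper bracket of $\cP_{X\vert k}^{\single}$ with respect to $L^1(\sX)$, then the set $\cB\coloneqq\{p'(\vx,y)=\prod_{k=1}^{K}(p'_k(\vx))^{\I(y=k)}:p'_k\in\cB_k\}$ is an $\epsilon$-upper bracket of $\cP_{X\vert Y}^{\single}$ with respect to $L^1(\sX)$, because given any $y=k$, the slice $p'(\cdot,k)=p'_k(\cdot)$ dominates $p_{\phi_k,\psi_k}(\cdot\vert k)$ pointwise and is $L^1$-close to it. This yields
\begin{align*}
\br\paren*{\epsilon;\cP_{X\vert Y}^{\single},L^1(\sX)}\leq \prod_{k=1}^K\br\paren*{\epsilon;\cP_{X\vert k}^{\single},L^1(\sX)}.
\end{align*}

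Second, I would bound each per-source bracketing number by reusing \cref{lem:arm_covering_of_distribution_parameter_vector} and \cref{lem:arm_bracket_density_class_via_parameter_vector} with the effective number of sources set to $1$. The only structural change is that the label embedding matrix $\vV_Y$ degenerates from $[0,1]^{K\times\de}$ to $[0,1]^{1\times\de}$, which replaces the $K\de$ term by $\de$ in the exponent of the covering number. Thus
\begin{align*}
\br\paren*{\tfrac{1}{n};\cP_{X\vert k}^{\single},L^1(\sX)}\leq\paren*{24eDn(L+3)(B\vee 1)^{L+2}(W+1)^L}^{S+D+(D+M+1)\de}.
\end{align*}

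Finally, I would combine the two inequalities to obtain
\begin{align*}
\log\br\paren*{\tfrac{1}{n};\cP_{X\vert Y}^{\single},L^1(\sX)}\leq K(S+D+(D+M+1)\de)\log\paren*{24eDn(L+3)(B\vee 1)^{L+2}(W+1)^L},
\end{align*}
and then plug this into \cref{thm:TV_upper_of_conditional_MLE} to conclude $\rtv(\hat{p}_{X\vert Y}^{\single})=\tilde{\cO}(\sqrt{KL(S+D+(D+M+1)\de)/n})$, absorbing $L$-dependent logarithmic factors into $\tilde{\cO}$. I expect the only nontrivial step to be the product-bracketing lemma, but it is essentially immediate from the pointwise and normwise conditions in \cref{def:upper_bracketing_number} since the indicator weighting $\I(y=k)$ makes different sources non-interacting; the rest is a direct reuse of the multi-source machinery with $K=1$ per source.
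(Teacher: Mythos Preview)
Your proposal is correct and follows essentially the same approach as the paper: you form an $\epsilon$-upper bracket of $\cP_{X\vert Y}^{\single}$ as a product of per-source brackets, bound each per-source bracketing number by invoking \cref{lem:arm_covering_of_distribution_parameter_vector} and \cref{lem:arm_bracket_density_class_via_parameter_vector} with the label-embedding matrix reduced to a single row (so $K\de\mapsto\de$), and then plug the resulting product bound into \cref{thm:TV_upper_of_conditional_MLE}. The paper's proof does exactly this, with the same product-bracketing observation and the same per-source exponent $S+D+(D+M+1)\de$.
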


\begin{proof}
As formulated in \cref{sec:formulation_for_conditional_generative_modeling} and with conditional distributions as in \cref{eq:conditional_density_ar}, we have 
        \begin{align*}
        \cP_{X\vert Y}^{\single}
        =\curl*{\prod_{k=1}^K\paren*{p_{\pnn_k}(\vx\vert y)}^{\I(y=k)}: 
        p_{\pnn_k}(\vx\vert y)=p\paren*{x_1; \vparm_{\pnn_k}(y)}\cdots p\paren*{x_D; \vparm_{\pnn_k}(\vx_{<D},y)}:\vH_{\pnn_k}\in\cH_{\pnn_k} },
    \end{align*}
where $\vH_{\pnn_k}(\vx,y)=\begin{bmatrix} \vh_{\pnn_k,1}(\vx,y)\ \cdots \ \vh_{\pnn_k,D}(\vx,y) \end{bmatrix}:\sX\times\sY\to\R^{M\times D}$ with $\vh_{\pnn_k,d}(\vx,y)=\vf_{\pmlp_k}\paren*{\vv_{\vA_{0k},\vb_{0k}}^{\backslash \bm{0}_{D-d}}\!\paren*{\vE_{\vV_Y[k,:],\vV_{Xk}}(\vx,y)}}:\sX\times\sY\to\R^M$ as defined in \cref{sec:instantiation_arm}.

    where 
    \begin{align*}
        \cU_{\pnn_k}=\curl*{u_{\pnn_k}(\vx\vert y)=f_{\pmlp_k}\circ \ve_{\vV[k,:]}(\vx,y): \pmlp_k\in\cW(L,W,S,B), \vV[k,:]\in[0,1]^{\de}}.
    \end{align*}
    For all $k\in[K]$, let $\cB_{\cP_k}$ be an $\frac{1}{n}$-upper bracket of $\cP_{X\vert Y,k}=\curl*{p_{\pnn_k}(\vx\vert y)=p\paren*{x_1; \vparm_{\pnn_k}(y)}\cdots p\paren*{x_D; \vparm_{\pnn_k}(\vx_{<D},y)}:\vH_{\pnn_k}\in\cH_{\pnn_k} }$ w.r.t. $L^1(\sX)$ such that $\abs{\cB_{\cP_k}}=\br\paren*{\frac{1}{n};\cP_{X\vert Y,k},L^1(\sX)}$. 
    According to Lemma~\ref{lem:arm_covering_of_distribution_parameter_vector} and Lemma~\ref{lem:arm_bracket_density_class_via_parameter_vector}, we know that 
    $$\abs{\cB_{\cP_k}}\leq \cov\paren*{\frac{1}{8edn};\cH_{\pnn},\norm*{\cdot}_{\infty,L^{\infty}(\sX\times\sY)}} \leq \paren*{24edn(L+3)(B\vee 1)^{L+2}(W+1)^L}^{S+D+(D+M+1)\de}.$$

    For any $p(\vx\vert y)=\prod_{k=1}^K\paren*{p_{\pnn_k}(\vx\vert y)}^{\I(y=k)}\in \cP_{X\vert Y}^{\single}$, there exists $p_{\pnn_1}^\prime\in\cB_{\cP_1}, \dots, p_{\pnn_K}^\prime\in\cB_{\cP_K}$ such that for all $k\in[K]$, we have: Given any $y\in\sY$, it holds that $\forall \vx\in\sX, p_{\pnn_k}^\prime(\vx\vert y)\geq p_{\pnn_k}(\vx\vert y)$, and $\norm{p_{\pnn_k}^\prime(\cdot\vert y)-p_{\pnn_k}(\cdot\vert y)}_{L^1(\sX)}\leq \frac{1}{n}.$ 

    Let $p^\prime(\vx\vert y)=\prod_{k=1}^K\paren*{p_{\pnn_k}^\prime(\vx\vert y)}^{\I(y=k)}$, 
    then we have that given any $y\in\sY$, 
    \begin{align*}
        \forall \vx\in \sX, p^\prime(\vx\vert y)=\prod_{k=1}^K\paren*{p_{\pnn_k}^\prime(\vx\vert y)}^{\I(y=k)}\geq\prod_{k=1}^K\paren*{p_{\pnn_k}(\vx\vert y)}^{\I(y=k)}= p(\vx\vert y),
    \end{align*}
    and 
    \begin{align*}
        \norm{p^\prime(\cdot\vert y)-p(\cdot\vert y)}_{L^1(\sX)}\leq\sup_{k\in[K]}\norm{p_{\pnn_k}^\prime(\cdot\vert y)-p_{\pnn_k}(\cdot\vert y)}_{L^1(\sX)}\leq \frac{1}{n}.
    \end{align*}
    Therefore, $\cB_{\cP}\coloneqq\curl*{p^\prime(\vx\vert y)= \prod_{k=1}^K\paren*{p_{\pnn_k}^\prime(\vx\vert y)}^{\I(y=k)}: p_{\pnn_k}^\prime\in\cB_{\cP_k}}$ is an $\frac{1}{n}$-upper bracket of $\cP_{X\vert Y}^{\single}$ w.r.t. $L^1(\sX)$. 
    Thus we have 
    \begin{align*}
        \br\paren*{\frac{1}{n};\cP_{X\vert Y}^{\single},L^1(\sX)}
        &\leq \abs{\cB_{\cP}}=\abs*{\bigcup_{k\in[K]}\cB_{\cP_k}}\leq \prod_{k\in[K]}\abs*{\cB_{\cP_k}}\\
        &=\prod_{k\in[K]} \paren*{24edn(L+3)(B\vee 1)^{L+2}(W+1)^L}^{S+D+(D+M+1)\de} \\
        &=\paren*{24edn(L+3)(B\vee 1)^{L+2}(W+1)^L}^{K(S+D+(D+M+1)\de)}.
    \end{align*}
    According to \cref{thm:TV_upper_of_conditional_MLE}, we arrive at the conclusion that 
    \begin{align*}
        \rtv(\hat{p}_{X\vert Y}^{\single})
        &\leq 3\sqrt{\frac{1}{n}\paren*{\log\br\paren*{\frac{1}{n};\cP_{X\vert Y}^{\single},L^1(\sX)}+\log\frac{1}{\delta}}}\\
        &\leq 3\sqrt{\frac{1}{n}\paren*{ K(S+D+(D+M+1)\de) \log\paren*{24edn(L+3)(B\vee 1)^{L+2}(W+1)^L}+\log\frac{1}{\delta}}}.
    \end{align*}
    Omitting constants about $n,K,\de, L,W,S,B$, and the logarithm term we have $\rtv(\hat{p}_{X\vert Y}^{\single})=\tilde{\cO}\paren*{\sqrt{\frac{KL\paren*{S+D+(D+M+1)\de}}{n}}}$.
    
\end{proof}

\section{Proofs for \cref{sec:instantiation_ebm}}
\label{app:proof_ebm}

\subsection{Covering number of the energy function class}

\begin{lemma}[Covering number of the energy function class]
\label{lem:ebm_covering_of_energy}
    Given $\cU_{\pnn}=\{u_{\pnn}(\vx\vert y)=f_{\pmlp}\circ \ve_{\vV}(\vx,y)\}$ with $u_{\pnn}(\vx\vert y):\sX\times\sY\to\R$ as defined in \cref{sec:instantiation_ebm}. Suppose $\vV[k,:]\in[0,1]^{\de}$, $\pmlp\in\cW(L,W,S,B)$ with constants $L,W,S,B>0$, and $\sX=[0,1]^D, \sY=[K]$. 
    Then, given any $\epsilon>0$, we have
    \begin{align*}
        \cov\paren*{\epsilon;\cU_{\pnn},\norm*{\cdot}_{\infty,L^{\infty}(\sX\times\sY)}} 
        \leq \paren*{\frac{3(L+1)(B\vee 1)^{L+1}(W+1)^L}{\epsilon}}^{S+K\de}.
    \end{align*}
\end{lemma}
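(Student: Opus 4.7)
The network $u_{\pnn}$ has two parametric pieces — the condition-embedding $\ve_{\vV}(\vx,y) = (\vx,\vV[y,:])$ with $K\de$ free scalars in $[0,1]$, and the MLP $f_{\pmlp} \in \cF(L,W,S,B)$ — so the natural plan is to build separate covers for each, then stitch them together with the composition lemma. The final step is to choose a common grid width $\delta$ that reproduces the target exponent $S+K\de$ and the target base $3(L+1)(B\vee 1)^{L+1}(W+1)^L/\epsilon$.

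First I would cover the embedding layer. Let $\cG \coloneqq \{\ve_{\vV}(\vx,y) : \vV\in[0,1]^{K\times\de}\}$. For any two embedding matrices $\vV,\vV'$ that agree up to $\|\vV-\vV'\|_\infty \le \delta$, one has $\|\ve_{\vV}(\vx,y)-\ve_{\vV'}(\vx,y)\|_\infty \le \delta$ for every $(\vx,y)\in\sX\times\sY$, because the first $D$ coordinates of the difference are identically zero. Discretizing each entry of $\vV$ to the grid $[0,1]\cap\delta\Z$ therefore yields a $\delta$-cover of $\cG$ w.r.t.\ $\norm{\cdot}_{\infty,L^\infty(\sX\times\sY)}$ of cardinality at most $(1/\delta+1)^{K\de}$.

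Next I would invoke Lemma~\ref{lem:covering_number_of_nn} to get a $\delta_{\cF}$-cover of $\cF(L,W,S,B)$ with $\delta_{\cF} = L(B\vee 1)^{L-1}(W+1)^L\delta$ and cardinality at most $(2B/\delta+1)^S$. By Lemma~\ref{lem:lip_of_nn_about_input}, every $f_{\pmlp}\in\cF(L,W,S,B)$ is $B^L W^L$-Lipschitz w.r.t.\ $\norm{\cdot}_\infty$ on $[0,1]^{D+\de}$, so Lemma~\ref{lem:covering_number_of_composite_function_class} gives a cover of $\cU_{\pnn} = \cF\circ\cG$ of radius $\delta_{\cF} + B^L W^L\delta$ and cardinality at most $(2B/\delta+1)^S(1/\delta+1)^{K\de}$. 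Using $a\ge 1$ whenever $a=B\vee 1$, the estimate
\begin{align*}
\delta_{\cF} + B^L W^L \delta \;\le\; \bigl(L + (B\vee 1)\bigr)(B\vee 1)^{L-1}(W+1)^L\delta \;\le\; (L+1)(B\vee 1)^{L}(W+1)^L\delta
\end{align*}
lets me set $\epsilon \coloneqq (L+1)(B\vee 1)^L(W+1)^L\delta$ and absorb both factors in the cardinality by $(2B/\delta+1)\vee(1/\delta+1) \le 3(B\vee 1)/\delta$, yielding the claimed base $3(L+1)(B\vee 1)^{L+1}(W+1)^L/\epsilon$ and exponent $S+K\de$.

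The only mild obstacle is the bookkeeping of constants in the last consolidation: one must check the inequality $L+(B\vee 1)\le (L+1)(B\vee 1)$ (immediate since $B\vee 1\ge 1$) and pick a uniform upper bound on $(2B/\delta+1)$ and $(1/\delta+1)$ that absorbs the extra factor of $(B\vee 1)$ appearing after composition, which is what promotes $(B\vee 1)^L$ into $(B\vee 1)^{L+1}$ in the final base. Everything else is a direct application of the lemmas already available in Section~\ref{app:Preliminaries for evaluating the bracketing number of neural networks}.
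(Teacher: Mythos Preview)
Your proposal is correct and follows essentially the same route as the paper: discretize $\vV$ on a $\delta$-grid to cover the embedding layer, invoke Lemma~\ref{lem:covering_number_of_nn} for the MLP cover, combine via Lemma~\ref{lem:covering_number_of_composite_function_class} with Lipschitz constant $B^LW^L$, and consolidate constants. The only cosmetic difference is that you simplify the composite radius $\delta_{\cF}+B^LW^L\delta$ upfront before substituting for $\epsilon$, whereas the paper keeps the raw sum and simplifies after substitution; both paths land on the same base $3(L+1)(B\vee 1)^{L+1}(W+1)^L/\epsilon$ and exponent $S+K\de$.
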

\begin{proof}
    As defined, $\cU_{\pnn}$ can be written as
    \begin{align*}
        \cU_{\pnn}=\curl*{u_{\pnn}(\vx\vert y)=f_{\pmlp}\circ \ve_{\vV}(\vx,y): f_{\pmlp}\in\cF(L,W,B,S), \ve_{\vV}\in \cE_{\vV}}=\cF(L,W,B,S)\circ\cE_{\vV},
    \end{align*}
    where $\cE_{\vV}=\{\ve_{\vV}(\vx,y)=\begin{bmatrix}  \vx \\ \vV[y,:] \end{bmatrix}: \vV\in[0,1]^{K\times\de}\}$.
    Denote by $\sX_1\coloneqq\sX\times\sY=[0,1]^D\times[K]$ and $\sX_2\coloneqq[0,1]^{\de+D}$.
    Given any $\delta>0$, we first evaluate the $\delta$-covering number of $\cE_{\vV}$ w.r.t. $\norm*{\cdot}_{\infty,L^{\infty}(\sX_1)}$.

    \paragraph{Covering number of the embedding layer.}
    Let $\cS_\mathrm{entry}(\vV)$ denote the value space of all entries in $\vV$.
    We first discretize the value spaces of $\vV$ into $\delta$-width grids to get a finite embedding function class:
    $$\cE_{\vV, \delta\Z} \coloneqq \{\ve_{\vV}\in \cE_{\vV}:\cS_\mathrm{entry}(\vV))=[0,1]\cap \delta\Z\}.$$ 
    For any $\ve_{\vV}\in \cE_{\vV}$, there exists $\ve_{{\vV}^\prime}\in\cE_{\vV, \delta\Z}$ such that $\norm{\vV-{\vV}^\prime}_{\infty}\leq \delta$.
    Then we have for any $\vx,y\in\sX_1$, 
    \begin{align}
    \label{eq:ebm_covering_of_embedding}
        \norm{\ve_{\vV}(\vx,y)-\ve_{{\vV}^\prime}(\vx,y)}_{\infty}
        &=\norm*{\begin{bmatrix} \vx \\ \vV[y,:]  \end{bmatrix}-\begin{bmatrix} \vx \\ {\vV}^\prime[y,:]  \end{bmatrix}}_{\infty}
        =\norm*{\begin{bmatrix} \bm{0} \\ \vV[y,:]-{\vV}^\prime[y,:]\end{bmatrix}}_{\infty}
        \leq \norm*{\vV-{\vV}^\prime}_{\infty}
        \leq \delta. 
    \end{align}
    Therefore, $\cE_{\vV, \delta\Z}$ is an $\delta$-cover of $\cE_{\vV}$ w.r.t. $\norm*{\cdot}_{\infty,L^{\infty}(\sX_1)}$ and thus we have
    \begin{align*}
        \cov\paren*{\delta;\cE_{\vV},\norm*{\cdot}_{\infty,L^{\infty}(\sX_1)}}
        \leq \abs{\cE_{\vV, \delta\Z}}
        =\abs*{\vV:\cS_\mathrm{entry}(\vV))=[0,1]\cap \delta\Z}
        \leq \paren*{\frac{1}{\delta}+1}^{K\de}.
    \end{align*}

    \paragraph{Composite energy function.}
    According to Lemma~\ref{lem:covering_number_of_composite_function_class}, given any $\epsilon_{\cF}, \epsilon_{\cE}>0$, the covering number of $\cU_{\pnn}$ is bounded by
    \begin{align*}
        \cov\paren*{\epsilon_{\cF}+\kappa_{\cF}\epsilon_{\cE};\cU_{\pnn},\norm*{\cdot}_{\infty,L^{\infty}(\sX_1)}} 
        \leq \cov\paren*{\epsilon_{\cF};\cF(L,W,B,S),\norm*{\cdot}_{\infty,L^{\infty}(\sX_2)}}\cov\paren*{\epsilon_{\cE};\cE_{\vV},\norm*{\cdot}_{\infty,L^{\infty}(\sX_1)}}.
    \end{align*}
    According to Lemma~\ref{lem:lip_of_nn_about_input} , $\kappa_{\cF}=B^LW^L$. Further taking $\epsilon_{\cF}=L(B\vee 1)^{L-1}(W+1)^L\delta$ and $\epsilon_{\cE}=\delta$, we have
    \begin{align*}
        \epsilon_{\cF}+\kappa_{\cF}\epsilon_{\cE}= L(B\vee 1)^{L-1}(W+1)^L\delta+B^LW^L\delta=\paren*{L(B\vee 1)^{L-1}(W+1)^L+B^LW^L}\delta.
    \end{align*}
    According to Lemma~\ref{lem:covering_number_of_nn} and \cref{eq:ebm_covering_of_embedding}, we have
    \begin{align*}
        \cov\paren*{\epsilon_{\cF};\cF(L,W,B,S),\norm*{\cdot}_{\infty,L^{\infty}(\sX_2)}}\leq\paren*{\frac{2B}{\delta}+1}^S, \text{ and }
        \cov\paren*{\epsilon_{\cE};\cE_{\vV},\norm*{\cdot}_{\infty,L^{\infty}(\sX_1)}} \leq \paren*{\frac{1}{\delta}+1}^{K\de}.
    \end{align*}
    Therefore, 
    \begin{align*}
        \cov\paren*{\paren*{L(B\vee 1)^{L-1}(W+1)^L+B^LW^L}\delta;\cU_{\pnn},\norm*{\cdot}_{\infty,L^{\infty}(\sX_1)}} 
        &\leq \paren*{\frac{2B}{\delta}+1}^S\paren*{\frac{1}{\delta}+1}^{K\de}\\
        &\leq\paren*{\frac{(2B\vee 1)}{\delta}+1}^{S+K\de}\\
        &\leq\paren*{\frac{2(B\vee 1)}{\delta}+1}^{S+K\de}\\
        &\leq \paren*{\frac{3(B\vee 1)}{\delta}}^{S+K\de}.\tag{$\frac{(B\vee 1)}{\delta}\geq 1$}
    \end{align*}
    Taking $\epsilon=\paren*{L(B\vee 1)^{L-1}(W+1)^L+B^LW^L}\delta$, we have
    \begin{align*}
        \cov\paren*{\epsilon;\cU_{\pnn},\norm*{\cdot}_{\infty,L^{\infty}(\sX_1)}} 
        &\leq \paren*{\frac{3(B\vee 1)\paren*{L(B\vee 1)^{L-1}(W+1)^L+B^LW^L}}{\epsilon}}^{S+K\de}\\
        &\leq \paren*{\frac{3(B\vee 1)\paren*{L(B\vee 1)^{L}(W+1)^L+(B\vee 1)^L(W+1)^L}}{\epsilon}}^{S+K\de}\\
        &=\paren*{\frac{3(L+1)(B\vee 1)^{L+1}(W+1)^L}{\epsilon}}^{S+K\de},
    \end{align*}
    which completes the proof.
\end{proof}

\subsection{Bracketing number of the conditional distribution via the energy function}

\begin{lemma}[Bracketing number of the conditional distribution via the energy function]
\label{lem:ebm_bracket_via_energy}
    Given a class of energy-based conditional distributions that $\cP_{X\vert Y}=\curl*{p_{\pnn}(\vx\vert y)=\frac{e^{-u_{\pnn}(\vx\vert y)}}{\int_{\sX}e^{-u_{\pnn}(\vx\vert y)}d\vx}: u_{\pnn}\in\cU_{\pnn}},$ for any $0<\epsilon\leq 1$, it holds that 
    \begin{align*}
        \br\paren*{\epsilon;\cP_{X\vert Y},L^1(\sX)}
        \leq \cov\paren*{\frac{\epsilon}{4e};\cU_{\pnn},\norm*{\cdot}_{\infty,L^{\infty}(\sX\times\sY)}}. 
    \end{align*}
\end{lemma}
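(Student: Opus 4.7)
The plan is to convert a sup-norm cover of the energy class $\cU_{\pnn}$ into an $L^{1}$ upper bracket of the induced density class $\cP_{X\mid Y}$ by inflating the unnormalized density enough to dominate every member of the cell. Fix $0<\epsilon\le 1$ and set $\epsilon_{\cU}\coloneqq \epsilon/(4e)$. Let $\cC_{\cU}$ be an $\epsilon_{\cU}$-cover of $\cU_{\pnn}$ with respect to $\norm{\cdot}_{\infty,L^{\infty}(\sX\times\sY)}$ whose cardinality realizes $\cov(\epsilon_{\cU};\cU_{\pnn},\norm{\cdot}_{\infty,L^{\infty}(\sX\times\sY)})$. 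For each $u'\in\cC_{\cU}$, define the proposed bracket function
\begin{align*}
\tilde p'(\vx,y)\;\coloneqq\; e^{2\epsilon_{\cU}}\cdot\frac{e^{-u'(\vx\mid y)}}{\int_{\sX} e^{-u'(\vs\mid y)}\,d\vs}.
\end{align*}

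The first step is to verify the pointwise upper bound. For any $p_{\pnn}\in\cP_{X\mid Y}$, pick $u'\in\cC_{\cU}$ with $\sup_{\vx,y}\abs{u_{\pnn}(\vx\mid y)-u'(\vx\mid y)}\le\epsilon_{\cU}$. Then $e^{-u_{\pnn}(\vx\mid y)}\le e^{\epsilon_{\cU}}e^{-u'(\vx\mid y)}$ and $\int e^{-u_{\pnn}(\vs\mid y)}d\vs\ge e^{-\epsilon_{\cU}}\int e^{-u'(\vs\mid y)}d\vs$, so taking the ratio gives $p_{\pnn}(\vx\mid y)\le e^{2\epsilon_{\cU}}\cdot e^{-u'(\vx\mid y)}/\int e^{-u'(\vs\mid y)}d\vs=\tilde p'(\vx,y)$ for every $(\vx,y)$.

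The second step controls the $L^{1}$ mass of the slack. Because $\tilde p'(\cdot,y)\ge p_{\pnn}(\cdot\mid y)$ pointwise and $p_{\pnn}(\cdot\mid y)$ integrates to $1$, while $\int_{\sX}\tilde p'(\vx,y)d\vx=e^{2\epsilon_{\cU}}$, we get
\begin{align*}
\norm{\tilde p'(\cdot,y)-p_{\pnn}(\cdot\mid y)}_{L^1(\sX)}=\int_{\sX}\bigl(\tilde p'(\vx,y)-p_{\pnn}(\vx\mid y)\bigr)d\vx=e^{2\epsilon_{\cU}}-1.
\end{align*}
Since $\epsilon_{\cU}\le 1/(4e)\le 1/2$, the elementary bound $e^{t}-1\le t\cdot e^{t}$ on $[0,1]$ gives $e^{2\epsilon_{\cU}}-1\le 2\epsilon_{\cU}\cdot e\le 4e\cdot\epsilon_{\cU}/2=\epsilon/2\le\epsilon$, uniformly in $y$.

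These two facts show that $\cB\coloneqq\{\tilde p':u'\in\cC_{\cU}\}$ meets both conditions of Definition \ref{def:upper_bracketing_number} for an $\epsilon$-upper bracket of $\cP_{X\mid Y}$ with respect to $L^{1}(\sX)$, yielding $\br(\epsilon;\cP_{X\mid Y},L^1(\sX))\le\abs{\cB}\le\abs{\cC_{\cU}}$, which is the claimed inequality. The only delicate point is the second step, where one must track how the $\epsilon_{\cU}$ perturbation propagates through both the numerator and the normalizer; the clean cancellation in the formula for $\int\tilde p'$ is what avoids having to bound $\norm{p'-p_{\pnn}}_{L^1}$ separately and dictates the explicit constant $4e$.
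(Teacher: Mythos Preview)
Your proof is correct and uses the same bracket construction as the paper --- namely $p'(\vx,y)=e^{2\epsilon_{\cU}}\cdot e^{-u'(\vx\mid y)}/\int_{\sX}e^{-u'(\vs\mid y)}d\vs$ for each $u'$ in an $\epsilon_{\cU}$-cover --- but your treatment of the $L^{1}$ slack is considerably more direct. The paper, after establishing pointwise domination, nevertheless bounds $\norm{p'-p_{\pnn}}_{L^{1}(\sX)}$ by expanding the difference of ratios, repeatedly applying $\abs{e^{a}-e^{b}}\le e^{a\vee b}\abs{a-b}$, and tracking several cross terms until it reaches $3\epsilon_{\cU}e^{4\epsilon_{\cU}}+\epsilon_{\cU}e^{\epsilon_{\cU}}\le 4e\epsilon_{\cU}$. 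You instead observe that once $p'\ge p_{\pnn}$ pointwise, the $L^{1}$ distance collapses to the difference of total masses, $\int p'-\int p_{\pnn}=e^{2\epsilon_{\cU}}-1$, which is immediately at most $2e\epsilon_{\cU}=\epsilon/2$. This buys you a shorter argument and a sharper constant (you could in fact take $\epsilon_{\cU}=\epsilon/(2e)$), while the paper's line-by-line estimate is the kind of calculation one would need if the bracket were not simply a scalar multiple of a normalized density.
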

\begin{proof}
    Let $\cC_{\cU}$ be an $\epsilon_{\cU}$-cover of $\cU_{\pnn}$ w.r.t. $\norm*{\cdot}_{\infty,L^{\infty}(\sX\times\sY)}$ such that $\abs{\cC_{\cU}}=\cov\paren*{\epsilon_{\cU};\cU_{\pnn},\norm*{\cdot}_{\infty,L^{\infty}(\sX\times\sY)}}$. 
    For any $p_{\pnn}(\vx\vert y)=\frac{e^{-u_{\pnn}(\vx\vert y)}}{\int_{\sX}e^{-u_{\pnn}(\vx\vert y)}d\vx}\in\cP_{X\vert Y}$, there exists $u_{\pnn}^\prime\in\cC_{\cU}$ such that for all $\vx\in\sX$ and $y\in\sY$, $\norm{u_{\pnn}(\vx\vert y)-u_{\pnn}^\prime(\vx\vert y)}_{\infty}=\abs{u_{\pnn}(\vx\vert y)-u_{\pnn}^\prime(\vx\vert y)}\leq \epsilon_{\cU}$, which equals $u_{\pnn}^\prime(\vx\vert y)-\epsilon_{\cU}\leq u_{\pnn}(\vx\vert y)\leq u_{\pnn}^\prime(\vx\vert y)+\epsilon_{\cU}.$
    
    Let $p_{\pnn}^\prime(\vx\vert y)=\frac{e^{-u_{\pnn}^\prime(\vx\vert y)+2\epsilon_{\cU}}}{\int_{\sX}e^{-u_{\pnn}^\prime(\vx\vert y)}d\vx}.$ 
    Then we immediately obtain that: given any $y\in\sY$,
    \begin{align}
    \label{eq:ebm_bracket_condition1_geq}
        \forall x\in \sX,
        p_{\pnn}^\prime(\vx\vert y)=\frac{e^{-u_{\pnn}^\prime(\vx\vert y)+\epsilon_{\cU}}}{\int_{\sX}e^{-u_{\pnn}^\prime(\vx\vert y)-\epsilon_{\cU}}d\vx}\geq \frac{e^{-u_{\pnn}(\vx\vert y)}}{\int_{\sX}e^{-u_{\pnn}(\vx\vert y)}d\vx}=p_{\pnn}(\vx\vert y),
    \end{align}
    since for all $\vx\in\sX, y\in\sY$, $e^{-u_{\pnn}^\prime(\vx\vert y)+\epsilon_{\cU}}\geq e^{-u_{\pnn}(\vx\vert y)}$ and $\int_{\sX}e^{-u_{\pnn}^\prime(\vx\vert y)-\epsilon_{\cU}}d\vx\leq\int_{\sX}e^{-u_{\pnn}(\vx\vert y)}d\vx$.
    
    Moreover, we can bound the $L^1(\sX)$ distance between $p_{\pnn}^\prime(\cdot\vert y)$ and $p_{\pnn}(\cdot\vert y)$ as
    \begin{align}
        &\norm{p_{\pnn}^\prime(\cdot\vert y)-p_{\pnn}(\cdot\vert y)}_{L^1(\sX)}\notag\\
        =&\int_{\sX}\abs{p_{\pnn}^\prime(\vx\vert y)-p_{\pnn}(\vx\vert y)}d\vx\notag\\
        =&\int_{\sX}\abs*{\frac{e^{-u_{\pnn}^\prime(\vx\vert y)+2\epsilon_{\cU}}}{\int_{\sX}e^{-u_{\pnn}^\prime(\vs\vert y)}d\vs}-\frac{e^{-u_{\pnn}(\vx\vert y)}}{\int_{\sX}e^{-u_{\pnn}(\vs\vert y)}d\vs}}d\vx\notag\\
        =&\int_{\sX}\abs*{\frac{e^{-u_{\pnn}^\prime(\vx\vert y)+2\epsilon_{\cU}}\int_{\sX}e^{-u_{\pnn}(\vs\vert y)}d\vs-e^{-u_{\pnn}(\vx\vert y)}\int_{\sX}e^{-u_{\pnn}^\prime(\vs\vert y)}d\vs}{\int_{\sX}e^{-u_{\pnn}^\prime(\vs\vert y)}d\vs \int_{\sX}e^{-u_{\pnn}(\vs\vert y)}d\vs}}d\vx\notag\\
        \leq&\int_{\sX}\abs*{\frac{\paren*{e^{-u_{\pnn}^\prime(\vx\vert y)+2\epsilon_{\cU}}-e^{-u_{\pnn}(\vx\vert y)}}\int_{\sX}e^{-u_{\pnn}(\vs\vert y)}d\vs+e^{-u_{\pnn}(\vx\vert y)}\paren*{\int_{\sX}e^{-u_{\pnn}(\vs\vert y)}d\vs -e^{-u_{\pnn}^\prime(\vs\vert y)}d\vs}}{\int_{\sX}e^{-u_{\pnn}^\prime(\vs\vert y)}d\vs \int_{\sX}e^{-u_{\pnn}(\vs\vert y)}d\vs}}d\vx\notag\\
        \leq&\int_{\sX}\frac{\abs*{e^{-u_{\pnn}^\prime(\vx\vert y)+2\epsilon_{\cU}}-e^{-u_{\pnn}(\vx\vert y)}}\int_{\sX}e^{-u_{\pnn}(\vs\vert y)}d\vs+e^{-u_{\pnn}(\vx\vert y)}\int_{\sX}\abs*{e^{-u_{\pnn}(\vs\vert y)}-e^{-u_{\pnn}^\prime(\vs\vert y)}}d\vs}{\int_{\sX}e^{-u_{\pnn}^\prime(\vs\vert y)}d\vs \int_{\sX}e^{-u_{\pnn}(\vs\vert y)}d\vs}d\vx\notag\\
        \leq&\int_{\sX}\frac{e^{-u_{\pnn}^\prime(\vx\vert y)+2\epsilon_{\cU}}\abs*{u_{\pnn}(\vx\vert y)-u_{\pnn}^\prime(\vx\vert y)+2\epsilon_{\cU}}\int_{\sX}e^{-u_{\pnn}(\vs\vert y)}d\vs+e^{-u_{\pnn}(\vx\vert y)}\int_{\sX}e^{\paren*{-u_{\pnn}(\vs\vert y)}\vee \paren*{-u_{\pnn}^\prime(\vs\vert y)}}\abs*{u_{\pnn}(\vs\vert y)-u_{\pnn}^\prime(\vs\vert y)}d\vs}{\int_{\sX}e^{-u_{\pnn}^\prime(\vs\vert y)}d\vs \int_{\sX}e^{-u_{\pnn}(\vs\vert y)}d\vs}d\vx \tag{$\abs{e^a-e^b}=\abs{\int_{b}^ae^{x}dx}\leq\abs{\int_{b}^ae^{a\vee b}dx}= e^{a\vee b}\abs{a-b}$}\\
        \leq&\int_{\sX}\frac{e^{-u_{\pnn}^\prime(\vx\vert y)+2\epsilon_{\cU}}3\epsilon_{\cU}\int_{\sX}e^{-u_{\pnn}(\vs\vert y)}d\vs+e^{-u_{\pnn}(\vx\vert y)}\int_{\sX}e^{-u_{\pnn}^\prime(\vs\vert y)+\epsilon_{\cU}}\epsilon_{\cU} d\vs}{\int_{\sX}e^{-u_{\pnn}^\prime(\vs\vert y)}d\vs \int_{\sX}e^{-u_{\pnn}(\vs\vert y)}d\vs}d\vx \tag{$\forall x\in\sX, y\in\sY, \abs*{u_{\pnn}(\vx\vert y)-u_{\pnn}^\prime(\vx\vert y)}\leq \epsilon_{\cU}$}\notag\\
        \leq & \int_{\sX}\frac{3\epsilon_{\cU} e^{-u_{\pnn}(\vx\vert y)+3\epsilon_{\cU}}\int_{\sX}e^{-u_{\pnn}^\prime(\vs\vert y)+\epsilon_{\cU}}d\vs+\epsilon_{\cU} e^{-u_{\pnn}(\vx\vert y)}\int_{\sX}e^{-u_{\pnn}^\prime(\vs\vert y)+\epsilon_{\cU}} d\vs}{\int_{\sX}e^{-u_{\pnn}^\prime(\vs\vert y)}d\vs \int_{\sX}e^{-u_{\pnn}(\vs\vert y)}d\vs}d\vx \tag{$\forall x\in\sX, y\in\sY, \abs*{u_{\pnn}(\vx\vert y)-u_{\pnn}^\prime(\vx\vert y)}\leq \epsilon_{\cU}$}\\
        = & \int_{\sX}\frac{3\epsilon_{\cU} e^{4\epsilon_{\cU}} e^{-u_{\pnn}(\vx\vert y)}\int_{\sX}e^{-u_{\pnn}^\prime(\vs\vert y)}d\vs+\epsilon_{\cU} e^{\epsilon_{\cU}} e^{-u_{\pnn}(\vx\vert y)}\int_{\sX}e^{-u_{\pnn}^\prime(\vs\vert y)} d\vs}{\int_{\sX}e^{-u_{\pnn}^\prime(\vs\vert y)}d\vs \int_{\sX}e^{-u_{\pnn}(\vs\vert y)}d\vs}d\vx\notag\\
        = & \int_{\sX}\frac{\paren*{3\epsilon_{\cU} e^{4\epsilon_{\cU}}+\epsilon_{\cU} e^{\epsilon_{\cU}}} e^{-u_{\pnn}(\vx\vert y)}}{\int_{\sX}e^{-u_{\pnn}(\vs\vert y)}d\vs}d\vx
        = 3\epsilon_{\cU} e^{4\epsilon_{\cU}}+\epsilon_{\cU} e^{\epsilon_{\cU}} 
        \leq 4\epsilon_{\cU} e^{(4\epsilon_{\cU})\vee 1}. \label{eq:ebm_bracket_condition2_norm}
    \end{align}
    Therefore, $\cB_{\cP}\coloneqq\curl*{p_{\pnn}^\prime(\vx\vert y)=\frac{e^{-u_{\pnn}^\prime(\vx\vert y)+\epsilon_{\cU}}}{\int_{\sX}e^{-u_{\pnn}^\prime(\vx\vert y)-\epsilon_{\cU}}d\vx}: -u_{\pnn}^\prime(\vx\vert y)\in\cC_{\cU}}$ is an $4\epsilon_{\cU} e^{4\epsilon_{\cU}}$-upper bracket of $\cP_{X\vert Y}$ w.r.t. $L^1(\sX)$. 
    Thus we have 
    \begin{align*}
        \br\paren*{4\epsilon_{\cU} e^{(4\epsilon_{\cU})\vee 1};\cP_{X\vert Y},L^1(\sX)}
        \leq \abs{\cB_{\cP}}=\abs{\cC_{\cU}}=\cov\paren*{\epsilon_{\cU};\cU_{\pnn},\norm*{\cdot}_{\infty,L^{\infty}(\sX\times\sY)}}. 
    \end{align*}
    Let $4\epsilon_{\cU} e^{(4\epsilon_{\cU})\vee 1}=\epsilon$, we have $4\epsilon_{\cU}=\frac{\epsilon}{e^{(4\epsilon_{\cU})\vee 1}}\leq 1$ and thus $4\epsilon_{\cU} e^{(4\epsilon_{\cU})\vee 1}=4e\epsilon_{\cU}$, so that we get $\epsilon_{\cU}=\frac{\epsilon}{4e}$.
    Therefore, we have for any $0<\epsilon\leq 1$,
    \begin{align*}
        \br\paren*{\epsilon;\cP_{X\vert Y},L^1(\sX)}
        \leq \cov\paren*{\frac{\epsilon}{4e};\cU_{\pnn},\norm*{\cdot}_{\infty,L^{\infty}(\sX\times\sY)}}
    \end{align*}    
\end{proof}

\subsection{Proof of \cref{thm:tv_upper_ebm}}
\label{app:proof_of_ebm}

Based on the relation between the bracketing number of conditional distribution space $\cP_{X\vert Y}$ and the covering number of energy function space $\cU_{\pnn}$ derived in previous lemmas, we obtain the final result.

\begin{proof}[Proof of \cref{thm:tv_upper_ebm}]
    With conditional distributions as defined in \cref{eq:conditional_density_ebm}, we have 
    \begin{align*}
        \cP_{X\vert Y}^{\multi}
        =\curl*{p_{\pnn}(\vx\vert y)=\frac{e^{-u_{\pnn}(\vx\vert y)}}{\int_{\sX}e^{-u_{\pnn}(\vx\vert y)}d\vx}: u_{\pnn}\in\cU_{\pnn}},
    \end{align*}
    where 
    \begin{align*}
        \cU_{\pnn}=\curl*{u_{\pnn}(\vx\vert y)=f_{\pmlp}\circ \ve_{\vV}(\vx,y): \pmlp\in\cW(L,W,S,B), \vV[k,:]\in[0,1]^{\de}}.
    \end{align*}
    Let $\epsilon_{\cU}$ be an constant that $\epsilon_{\cU}>0$, according to Lemma~\ref{lem:ebm_covering_of_energy}, we have $\cov\paren*{\epsilon_{\cU};\cU_{\pnn},\norm*{\cdot}_{\infty,L^{\infty}(\sX\times\sY)}} 
    \leq \paren*{\frac{3(L+1)(B\vee 1)^{L+1}(W+1)^L}{\epsilon_{\cU}}}^{S+K\de}.$
    Then with Lemma~\ref{lem:ebm_bracket_via_energy}, we further obtain that
    \begin{align*}
        \br\paren*{\frac{1}{n};\cP_{X\vert Y}^{\multi},L^1(\sX)}
        \leq \cov\paren*{\frac{1}{4en};\cU_{\pnn},\norm*{\cdot}_{\infty,L^{\infty}(\sX\times\sY)}}\leq\paren*{12e(L+1)(B\vee 1)^{L+1}(W+1)^L}^{S+K\de}. 
    \end{align*}

    According to \cref{thm:TV_upper_of_conditional_MLE}, we arrive at the conclusion that 
    \begin{align*}
        \rtv(\hat{p}_{X\vert Y}^{\multi})
        &\leq 3\sqrt{\frac{1}{n}\paren*{\log\br\paren*{\frac{1}{n};\cP_{X\vert Y}^{\multi},L^1(\sX)}+\log\frac{1}{\delta}}}\\
        &\leq 3\sqrt{\frac{1}{n}\paren*{\paren*{S+K\de}\log\paren*{12en(L+1)(B\vee 1)^{L+1}(W+1)^L}+\log\frac{1}{\delta}}}\\
        &= 3\sqrt{\frac{1}{n}\paren*{L\paren*{S+K\de}\log\paren*{12en(L+1)^{\frac{1}{L}}(B\vee 1)^{1+\frac{1}{L}}(W+1)}+\log\frac{1}{\delta}}}
    \end{align*}
    Omitting constants about $n,K,\de, L,W,S,B$, and the logarithm term we have $\rtv(\hat{p}_{X\vert Y}^{\multi})=\tilde{\cO}\paren*{\sqrt{\frac{L\paren*{S+K\de}}{n}}}$.
\end{proof}

\subsection{Average TV error bound under single-source training}
\label{thm:tvbound_ebm_single}

\begin{theorem}[average TV error bound for EBMs under single-source training]
    Let $\hat{p}_{X\vert Y}^{\single}$ be the likelihood maximizer defined in \cref{eq:single_mle_solution} given $\cP_{X\vert Y}^{\single}$ with conditional distributions as in \cref{eq:conditional_density_ebm}, suppose that $\Phi=[0,1]^{\de}$ and $\Psi = \cW(L,W,S,B)$ and assume $\phi_k^*\in \Phi$, $\psi^*\in \Psi$. Then, for any $0<\delta\leq 1/2$, it holds with probability at least $1-\delta$ that
    \begin{align*}
    \rtv(\hat{p}_{X\vert Y}^{\single})=\tilde{\cO}\paren*{\sqrt{\frac{LK\paren*{S+\de}}{n}}}.
    \end{align*}
\end{theorem}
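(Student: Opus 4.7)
The plan is to mirror the structure of the multi-source proof in Theorem \ref{thm:tv_upper_ebm}, but account for the fact that under single-source training each source gets its own independently parameterized energy network. Concretely, the single-source space factorizes as
\begin{align*}
\cP_{X\vert Y}^{\single}=\curl*{\prod_{k=1}^K\paren*{p_{\pnn_k}(\vx\vert y)}^{\I(y=k)}:\, p_{\pnn_k}(\vx\vert y)=\frac{e^{-u_{\pnn_k}(\vx\vert y)}}{\int_{\sX}e^{-u_{\pnn_k}(\vs\vert y)}d\vs},\ u_{\pnn_k}\in\cU_{\pnn_k}},
\end{align*}
where each per-source energy class $\cU_{\pnn_k}$ has its own embedding $\vV_k\in[0,1]^{\de}$ and its own MLP $\pmlp_k\in\cW(L,W,S,B)$. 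The crucial observation is that each $\cU_{\pnn_k}$ contains the embedding of only a single label, so Lemma~\ref{lem:ebm_covering_of_energy} applies with ``$K=1$'' inside the exponent.

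First, I would combine Lemma~\ref{lem:ebm_covering_of_energy} (with one embedding row) and Lemma~\ref{lem:ebm_bracket_via_energy} to get, for every $k\in[K]$,
\begin{align*}
\br\paren*{\tfrac{1}{n};\cP_{X\vert Y,k},L^1(\sX)}\leq \cov\paren*{\tfrac{1}{4en};\cU_{\pnn_k},\norm*{\cdot}_{\infty,L^{\infty}(\sX\times\sY)}}\leq\paren*{12e(L+1)(B\vee 1)^{L+1}(W+1)^L}^{S+\de}.
\end{align*}
Next, exactly as in the proof of Theorem~\ref{thm:tvbound_arm_single}, I would show that if $\cB_{\cP_k}$ is a $\tfrac{1}{n}$-upper bracket of $\cP_{X\vert Y,k}$ with respect to $L^1(\sX)$, then the product set $\cB_{\cP}\coloneqq\{\prod_{k=1}^K (p^\prime_k)^{\I(y=k)}:p^\prime_k\in\cB_{\cP_k}\}$ is a $\tfrac{1}{n}$-upper bracket of $\cP_{X\vert Y}^{\single}$. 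This works because for any fixed $y$ only one factor is active, so pointwise dominance and the $L^1(\sX)$-width bound transfer directly from the chosen per-source bracket element. Multiplying the per-source counts then gives
\begin{align*}
\br\paren*{\tfrac{1}{n};\cP_{X\vert Y}^{\single},L^1(\sX)}\leq \paren*{12e(L+1)(B\vee 1)^{L+1}(W+1)^L}^{K(S+\de)}.
\end{align*}

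Finally, plugging this into Theorem~\ref{thm:TV_upper_of_conditional_MLE} yields
\begin{align*}
\rtv(\hat{p}_{X\vert Y}^{\single})\leq 3\sqrt{\tfrac{1}{n}\paren*{K(S+\de)\log\paren*{12en(L+1)(B\vee 1)^{L+1}(W+1)^L}+\log\tfrac{1}{\delta}}},
\end{align*}
and pulling a factor of $L$ out of the logarithm (as in the multi-source proof via $\log((L+1)^{1/L}(B\vee1)^{1+1/L}(W+1))$) leaves the stated $\tilde{\cO}(\sqrt{LK(S+\de)/n})$ order. There is no serious obstacle: the only care needed is verifying the product-bracket construction, i.e., that taking a product of brackets across the $K$ disjoint conditional slices does give a legal bracket of $\cP_{X\vert Y}^{\single}$ in the sense of Definition~\ref{def:upper_bracketing_number}, and this is immediate from the one-hot structure of the conditioning.
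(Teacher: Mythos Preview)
Your proposal is correct and follows essentially the same route as the paper's proof: factor $\cP_{X\vert Y}^{\single}$ into $K$ independent per-source energy-based families, bound each per-source bracketing number via Lemma~\ref{lem:ebm_covering_of_energy} (with a single embedding row) and Lemma~\ref{lem:ebm_bracket_via_energy}, form the product bracket using the one-hot structure of $y$, multiply the counts to get the $K(S+\de)$ exponent, and finish with Theorem~\ref{thm:TV_upper_of_conditional_MLE}. The only slip is a missing factor of $n$ inside the base of your intermediate covering bound (it should be $12en(L+1)\cdots$, coming from $\epsilon=\tfrac{1}{4en}$ in Lemma~\ref{lem:ebm_covering_of_energy}), but you correctly restore it in the final displayed inequality, so it does not affect the argument.
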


\begin{proof}
As formulated in \cref{sec:formulation_for_conditional_generative_modeling} and with conditional distributions as in \cref{eq:conditional_density_ebm}, we have 
        \begin{align*}
        \cP_{X\vert Y}^{\single}
        =\curl*{\prod_{k=1}^K\paren*{p_{\pnn_k}(\vx\vert y)}^{\I(y=k)}: 
        p_{\pnn_k}(\vx\vert y)=\frac{e^{-u_{\pnn_k}(\vx\vert y)}}{\int_{\sX}e^{-u_{\pnn_k}(\vx\vert y)}d\vx}: u_{\pnn_k}\in\cU_{\pnn_k}},
    \end{align*}
    where 
    \begin{align*}
        \cU_{\pnn_k}=\curl*{u_{\pnn_k}(\vx\vert y)=f_{\pmlp_k}\circ \ve_{\vV[k,:]}(\vx,y): \pmlp_k\in\cW(L,W,S,B), \vV[k,:]\in[0,1]^{\de}}.
    \end{align*}
    For all $k\in[K]$, let $\cB_{\cP_k}$ be an $\frac{1}{n}$-upper bracket of $\cP_{X\vert Y,k}=\curl*{p_{\pnn_k}(\vx\vert y)=\frac{e^{-u_{\pnn_k}(\vx\vert y)}}{\int_{\sX}e^{-u_{\pnn_k}(\vx\vert y)}d\vx}: u_{\pnn_k}\in\cU_{\pnn_k}}$ w.r.t. $L^1(\sX)$ such that $\abs{\cB_{\cP_k}}=\br\paren*{\frac{1}{n};\cP_{X\vert Y,k},L^1(\sX)}$. 
    According to Lemma~\ref{lem:ebm_covering_of_energy} and Lemma~\ref{lem:ebm_bracket_via_energy}, we know that 
    $$\abs{\cB_{\cP_k}}\leq \cov\paren*{\frac{1}{4en};\cU_{\pnn_k},\norm*{\cdot}_{\infty,L^{\infty}(\sX\times\sY)}}\leq\paren*{12en(L+1)(B\vee 1)^{L+1}(W+1)^L}^{S+\de}.$$

    For any $p(\vx\vert y)=\prod_{k=1}^K\paren*{p_{\pnn_k}(\vx\vert y)}^{\I(y=k)}\in \cP_{X\vert Y}^{\single}$, there exists $p_{\pnn_1}^\prime\in\cB_{\cP_1}, \dots, p_{\pnn_K}^\prime\in\cB_{\cP_K}$ such that for all $k\in[K]$, we have: Given any $y\in\sY$, it holds that $\forall \vx\in\sX, p_{\pnn_k}^\prime(\vx\vert y)\geq p_{\pnn_k}(\vx\vert y)$, and $\norm{p_{\pnn_k}^\prime(\cdot\vert y)-p_{\pnn_k}(\cdot\vert y)}_{L^1(\sX)}\leq \frac{1}{n}.$ 

    Let $p^\prime(\vx\vert y)=\prod_{k=1}^K\paren*{p_{\pnn_k}^\prime(\vx\vert y)}^{\I(y=k)}$, 
    then we have that given any $y\in\sY$, 
    \begin{align*}
        \forall \vx\in \sX, p^\prime(\vx\vert y)=\prod_{k=1}^K\paren*{p_{\pnn_k}^\prime(\vx\vert y)}^{\I(y=k)}\geq\prod_{k=1}^K\paren*{p_{\pnn_k}(\vx\vert y)}^{\I(y=k)}= p(\vx\vert y),
    \end{align*}
    and 
    \begin{align*}
        \norm{p^\prime(\cdot\vert y)-p(\cdot\vert y)}_{L^1(\sX)}\leq\sup_{k\in[K]}\norm{p_{\pnn_k}^\prime(\cdot\vert y)-p_{\pnn_k}(\cdot\vert y)}_{L^1(\sX)}\leq \frac{1}{n}.
    \end{align*}
    Therefore, $\cB_{\cP}\coloneqq\curl*{p^\prime(\vx\vert y)= \prod_{k=1}^K\paren*{p_{\pnn_k}^\prime(\vx\vert y)}^{\I(y=k)}: p_{\pnn_k}^\prime\in\cB_{\cP_k}}$ is an $\frac{1}{n}$-upper bracket of $\cP_{X\vert Y}^{\single}$ w.r.t. $L^1(\sX)$. 
    Thus we have 
    \begin{align*}
        \br\paren*{\frac{1}{n};\cP_{X\vert Y}^{\single},L^1(\sX)}
        &\leq \abs{\cB_{\cP}}=\abs*{\bigcup_{k\in[K]}\cB_{\cP_k}}\leq \prod_{k\in[K]}\abs*{\cB_{\cP_k}}\\
        &=\prod_{k\in[K]}\paren*{12en(L+1)(B\vee 1)^{L+1}(W+1)^L}^{S+\de}\\
        &=\paren*{12en(L+1)(B\vee 1)^{L+1}(W+1)^L}^{K(S+\de)}.
    \end{align*}
    According to \cref{thm:TV_upper_of_conditional_MLE}, we arrive at the conclusion that 
    \begin{align*}
        \rtv(\hat{p}_{X\vert Y}^{\single})
        &\leq 3\sqrt{\frac{1}{n}\paren*{\log\br\paren*{\frac{1}{n};\cP_{X\vert Y}^{\single},L^1(\sX)}+\log\frac{1}{\delta}}}\\
        &\leq 3\sqrt{\frac{1}{n}\paren*{LK\paren*{S+\de}\log\paren*{(12enL+1)^{\frac{1}{L}}(B\vee 1)^{1+\frac{1}{L}}(W+1)}+\log\frac{1}{\delta}}}
    \end{align*}
    Omitting constants about $n,K,\de, L,W,S,B$, and the logarithm term we have $\rtv(\hat{p}_{X\vert Y}^{\single})=\tilde{\cO}\paren*{\sqrt{\frac{LK\paren*{S+\de}}{n}}}$.
    
\end{proof}

\section{Supplementary for experiments}
\label{app:experements}

\subsection{Additional detail of real-world experiments}
\label{app:supplementary_material_for_real-world_experements}

\begin{table}[t]
\vskip -0.1in
\centering
\caption{Hyparameters of our experiments. `1c' denotes training from single-source, and others denote training from multi-source which contains 3,5, and 10 classes.}
\label{tab:hyper_params}
\vskip 0.1in
\begin{tabular}{lccc}
\toprule
    Setup& \makecell{Iterations  (kimg)} & Learning rate & \makecell{Decay  (kimg)}\\
\midrule
    1c& 184549& 0.005&2500\\
    3c& 268435& 0.006&4000\\
    10c& 1610612& 0.012&6000\\
\bottomrule
\end{tabular}

\centering
\caption{Standard deviations of FID scores over five times of sampling.}
\label{tab:fid-mean-std}
\vskip 0.1in
\begin{tabular}{c c c c c}
\toprule
$N$ & $\mathrm{Sim}$ & $K$ & Std Dev (Single) & Std Dev (Multi) \\
\midrule
\multirow{4}{*}{500} 
    & \multirow{2}{*}{1} & 3  & 0.0086 & 0.0057 \\
    &                    & 10 & 0.0018 & 0.0336 \\
    & \multirow{2}{*}{2} & 3  & 0.0160 & 0.0158 \\
    &                    & 10 & 0.0056 & 0.0035 \\
\midrule
\multirow{4}{*}{1000} 
    & \multirow{2}{*}{1} & 3  & 0.0034 & 0.0064 \\
    &                    & 10 & 0.0028 & 0.0250 \\
    & \multirow{2}{*}{2} & 3  & 0.0047 & 0.0051 \\
    &                    & 10 & 0.0013 & 0.0084 \\
\bottomrule
\end{tabular}

\end{table}

\textbf{Implementation.}
Following EDM2, we use the Latent Diffusion Model (LDM)~\citep{rombach_2022_cvpr_ldm} to down-sample each image $x \in \mathbb{R}^{3 \times 256 \times 256}$ to a corresponding latent $z \in \mathbb{R}^{4 \times 32 \times 32}$ for training a diffusion models.

All experiments are trained and sampled on 8 $\times$ NVIDIA A800 80GB, 8 $\times$ NVIDIA GeForce RTX 4090, and 8 $\times$ NVIDIA GeForce RTX 3090 on the Linux Ubuntu-22.04 platform.

For a fair comparison, we set different hyperparameters for experiments with different numbers of sources as shown in Table \ref{tab:hyper_params}, but these parameters are the same within each similarity.

Based on these trained models, we perform multiple samplings using five different random seeds to estimate the randomness in calculating the FID scores. The standard deviations of FID scores over multiple samplings are reported in \cref{tab:fid-mean-std}, corresponding to \cref{tab:real-world_fid} in the main paper.

\textbf{The selection of sample sizes and the number of classes.}
In the real-world experiments, we set the number of classes $K$ in 3 and 10, and the sample size per class $N$ in 500 and 1000.
We would like to clarify that this selection is influenced by several inherent characteristics of the ILSVRC2012 dataset: (1) Sample sizes: The maximum number of images per class in ILSVRC2012 is 1300, so we selected sample sizes of 1000 and 500 images per class, which are common choices. (2) Number of sources: Given that distribution similarity levels were manually defined, it was difficult to establish a large number of structured subdivisions. To be specific, to ensure reasonable similarity levels for the controlled experiment, we designed a two-level tree structure for the dataset, as shown in \cref{fig:hierachy}. Overall, we divided the whole ILSVRC2012 into 10 high-level categories (mammal, amphibian, bird, fish, reptile, vehicle, furniture, musical instrument, geological formation, and utensil). Each category was further subdivided into 10 subsets (e.g., for mammals, we have Italian greyhound, Border terrier, standard schnauzer, etc.). Defining such semantically meaningful and mutually exclusive divisions is not trivial. As a result, the number of classes within each similarity level in our experiments is limited to 10.

While our experiments are not on large-scale datasets, there are existing studies that provide valuable empirical observations for large-scale multi-source training, including: cross-lingual model transfer for similar languages~\cite{google_2019_multilingualbert}, pretraining with additionla high-quality images to improve overall aesthetics in image generation~\cite{chen_2024_iclr_pixart-alpha}, and knowledge augmentation on subsets of data to enhance model performance on other subsets~\cite{allen-zhuL_2024_icml_physicsofllm3-1}. They have offered relevant findings that inform our work.

\textbf{Connection between FID and the theoretical guarantees.}
Our theory provides guarantees for the average TV distance (\cref{eq:expected_TV_distance}), which quantifies distribution estimation quality but is incomputable without access to the true conditional distributions.
Therefore, in real-world experiments, we use FID as a practical alternative. FID measures the similarity between generated and real data distributions by comparing their feature representations in a pretrained neural network. It is widely used to evaluate image generation quality and serves as the best available metric for our setting.

\textbf{Connection with the theoretical analysis of EBMs.}
Additionally, we would like to discuss the connection between our real-world diffusion model experiments and the theoretical analysis of EBMs. As mentioned in \cref{sec:intro}, EBMs are a general and flexible class of generative models closely connected to diffusion models. To be specific, first, the training and sampling methods in~\cite{song_2019_nips_gradient,song_2021_iclr_scoresde} are directly inspired by EBMs. The distinction is that EBMs parameterize the energy function, while diffusion models parameterize its gradient (the score function). Second, \citet{salimans2021should} shows that under a specific energy function formulation (Equation (5) in their paper), EBMs are equivalent to constrained diffusion models. Their experimental results (Table 1, Rows A and B) indicate that the constraint has a minor impact on generative performance. Thus, our diffusion model experiments provide insight into EBMs' behavior in real-world settings to some extent.

\subsection{Supplementary Simulations on ARMs}
\label{app:arm_simulation}

\begin{table}[t]

\centering
\caption{TV errors with the number of sources $K$ in simulations on ARMs.}
\label{tab:tv-vs-k}
\begin{tabular}{c|ccccc}
\toprule
$K \uparrow$ & 1 & 3 & 5 & 7 & 10 \\
\midrule
Single-source & 0.0763 & 0.1212 & 0.1519 & 0.1787 & 0.2127 \\
Multi-source  & 0.0763 & 0.1145 & 0.1318 & 0.1364 & 0.1369 \\
\bottomrule
\end{tabular}

\centering
\caption{TV errors with the sample size $n$ in simulations on ARMs.}
\label{tab:tv-vs-n}
\begin{tabular}{c|ccccc}
\toprule
$n \downarrow$ & 1000 & 3000 & 5000 & 10000 & 30000 \\
\midrule
Single-source & 0.5680 & 0.3516 & 0.2882 & 0.2036 & 0.1212 \\
Multi-source  & 0.5491 & 0.3467 & 0.2747 & 0.1922 & 0.1145 \\
\bottomrule
\end{tabular}

\centering
\caption{TV errors with the sequence length $D$ in simulations on ARMs.}
\label{tab:tv-vs-d}
\begin{tabular}{c|ccccc}
\toprule
$D \uparrow$ & 10 & 12 & 14 & 16 & 18 \\
\midrule
Single-source & 0.2036 & 0.3785 & 0.5932 & 0.7242 & 0.7505 \\
Multi-source  & 0.1922 & 0.3530 & 0.5068 & 0.5747 & 0.6289 \\
\bottomrule
\end{tabular}
\vskip 0.15in
\end{table}

We conduct additional simulations on autoregressive models (ARMs) to examine how empirical total variation (TV) errors align with the theoretical predictions. The empirical results are summarized in Tables~\ref{tab:tv-vs-k}, \ref{tab:tv-vs-n}, and \ref{tab:tv-vs-d}.

Each ground truth source distribution is defined as a discrete categorical distribution supported on the set $[M]^D$, where $M$ is the vocabulary size and $D$ is the sequence length. The variable $Y$ is drawn uniformly from $\{1, 2, \dots, K\}$. A multi-source dataset of size $n$ is sampled from the joint distribution of $(X, Y)$ by first drawing $n$ samples of $Y$, followed by sampling $X \mid Y$ conditionally.

The network architecture exactly follows the setup in \cref{sec:instantiation_arm}. It consists of two embedding matrices to encode $Y$ and the first $D-1$ dimensions of $X$ into $d_e$-dimensional vectors. These embeddings are processed by a single encoding layer, followed by a multi-layer perceptron (MLP) with width $W$, depth $L$, and a softmax output. The conditional distribution parameters are computed autoregressively using a masked input vector.
  
For multi-source training, a single model is trained on the full dataset. In contrast, single-source training involves training $K$ separate models, each using data from its corresponding source.
In all experiments, we fix $M=2$ and use network configurations with $d_e = W = 64$, $L = 5$, and batch size $B = 1$. We vary the number of sources $K \in \{1, 3, 5, 7, 10\}$, the total number of samples $n \in \{1000, 3000, 5000, 10000, 30000\}$, and the sequence length $D \in \{10, 12, 14, 16, 18\}$ to assess the alignment between empirical total variation (TV) error and the theoretical bounds. For each configuration, the batch size and learning rate are selected from $\{100, 300, 500\}$ and $\{10^{-5}, 10^{-4}, 10^{-3}\}$, respectively, for maximum likelihood.

\section{Additional discussions on the notion of $\beta_{sim}$}
\label{app:distribution_similarity}

The notion of $\beta_{sim}$ in \cref{sec:instantiations} is defined \emph{by induction} based on our three specific model instantiations. 
It directly measures the \emph{model parameter sharing across different sources}, and thus reflects the \emph{source distribution similarity} under our theoretical formulation in \cref{sec:formulation_conditional_gen_model}.
As such, its exact formulation varies depending on the model instantiation.

Specifically, in the Gaussian model (\cref{sec:instantiate_conditional_gaussian}), $\beta_{sim} = \frac{d - d_1}{d}$ measures the proportion of shared mean vector dimensions, which seems to correspond to the property of the ground truth distribution. While for ARMs or EBMs (\cref{sec:instantiation_arm} and \cref{sec:instantiation_ebm}), $\beta_{sim} = \frac{S}{S+d_e}$ is based on shared model parameters, which do not explicitly represent the data distribution itself.
Despite this difference, in both cases, $\beta_{sim}$ fundamentally represents the extent of parameter sharing across sources. The distinction arises from the modeling paradigm: the Gaussian case assumes a parametric form for distributions, where model parameters (e.g., mean vectors) explicitly encode data properties, whereas EBMs use neural networks as a function approximator to fit probability densities without an explicit distributional form, making no explicit connection between parameters and data.

As a result, $\beta_{\mathrm{sim}}$ cannot be directly computed from general datasets without model-specific assumptions.
We remark that rigorously quantifying dataset similarity in practice is still a direction under exploration. 
Possible approaches might include: (1) From a practical perspective, a small proxy model can be used to estimate source distributions' interaction~\cite{xie2023doremi}. (2) From a theoretical perspective, several existing notions in multi-task learning and meta-learning could be adapted for this purpose, such as transformation equivalence~\cite{ben-david_2008_ml_notiontaskrelatedness}, parameter distance~\cite{balcan2019provable}, and distribution divergence~\cite{jose2021information}.

\section{Intuitive illustration of the upper bracketing number}
\label{app:upper_brack}

The $\epsilon$-upper bracketing number (Definition~\ref{def:upper_bracketing_number}) is a way to quantify the complexity of an infinite set of functions. The key idea is to construct a collection of ``brackets" that enclose every function in the set within a small margin.

To illustrate this, consider a simple example. Suppose we have the function set $\mathcal{F} = \{f(x) = c \,:\, x \in [0,1],\; c \in [0,1]\},$ which consists of all constant functions taking values in the interval $[0, 1]$. We can construct an $\epsilon$-upper bracket for $\mathcal{F}$ by defining the finite set $\mathcal{B} = \{b(x) = k\epsilon : k = 1, \dots, \lceil 1/\epsilon \rceil\}.$  Then, for any function $f \in \mathcal{F}$, there exists a bracket function $b \in \mathcal{B}$ such that: 
(1) For all $x \in [0,1]$, the bracket function is always an upper bound: $b(x) \ge f(x)$. 
(2) The total "gap" between $b$ and $f$, measured by the integral $\int_0^1 b(x) - f(x) dx$, is at most $\epsilon$.
Intuitively, this means we can cover the entire function class using a small number of simple approximating functions that "overestimate" each function just slightly. This concept is visualized in Figure~\ref{fig:upper_bracket}, where we show such brackets for $\epsilon = 0.25$.

In our paper, we extend this idea to conditional probability spaces. There, each condition defines its own function set, and we construct corresponding upper brackets that ensure every conditional distribution is approximated with a small error uniformly across conditions. 

\begin{figure}[t]
\begin{center}
    \includegraphics[width=0.7\columnwidth]{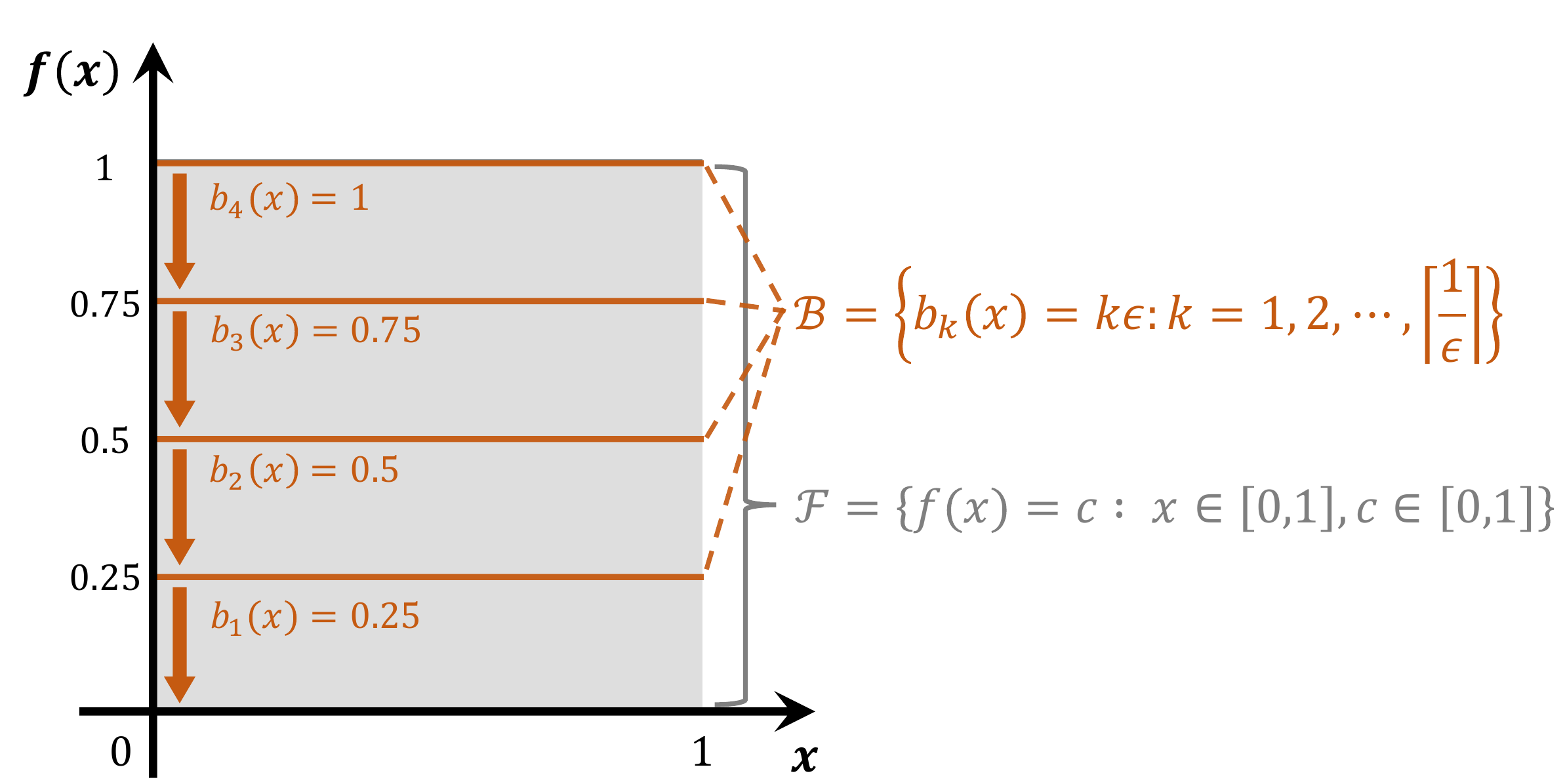}
    \vspace{-8pt}
    \caption{Illustration of $\epsilon$-upper brackets for the constant function class with $\epsilon = 0.25$. Each bracket function (horizontal line) lies above the function it covers, with a difference at most $\epsilon$.}
    \label{fig:upper_bracket}
\end{center}
\vskip -0.1in
\end{figure}


\end{document}